\definecolor{gray}{rgb}{0.85,0.85,0.85}
\definecolor{yama}{rgb}{0.98, 0.87, 0.68}
\definecolor{lightskyblue}{rgb}{0.53, 0.81, 0.98}
\newtheorem{lemma}{Lemma}
\newtheorem{theorem}{Theorem}
\newtheorem{corollary}{Corollary}
\newtheorem{definition}{Definition}
\newtheorem{remark}{Remark}
\newtheorem{assumption}{Assumption}
\newcommand{\supp}{\text{supp}}
\newcommand{\argmin}{\mathop{\arg\min}}
\newcommand{\sign}{\text{sign}}
\newcommand{\st}{\text{subject to }}
\newcommand{\junk}[1]{{}}
\newlength{\fwtwo} \setlength{\fwtwo}{0.45\textwidth}
\title{Stability and Risk Bounds of Iterative Hard Thresholding}
\author{
  \textbf{Xiao-Tong Yuan} \ and \ \textbf{Ping Li} \\\\
   Cognitive Computing Lab \\
  Baidu Research \\
   No. 10 Xibeiwang East Road, Beijing 100193, China \\
   10900 NE 8th St. Bellevue, Washington 98004, USA\\
   E-mail: \texttt{\{xtyuan1980, pingli98\}@gmail.com}
  }
\date{}
\begin{document}

\maketitle

\begin{abstract}
\noindent In this paper,\footnote{A conference version of this article appeared in the Proceedings of the 24th International Conference on Artificial Intelligence and Statistics (AISTATS) 2021.} we analyze the generalization performance of the Iterative Hard Thresholding (IHT) algorithm widely used for sparse recovery problems. The parameter estimation and sparsity recovery consistency of IHT has long been known in compressed sensing. From the perspective of statistical learning, another fundamental question is how well the IHT estimation would predict on unseen data. This paper makes progress towards answering this open question by introducing a novel sparse generalization theory for IHT under the notion of algorithmic stability. Our theory reveals that: 1) under natural conditions on the empirical risk function over $n$ samples of dimension $p$, IHT with sparsity level $k$ enjoys an $\mathcal{\tilde O}(n^{-1/2}\sqrt{k\log(n)\log(p)})$ rate of convergence in sparse excess risk; 2) a tighter $\mathcal{\tilde O}(n^{-1/2}\sqrt{\log(n)})$ bound can be established by imposing an additional iteration stability condition on a hypothetical IHT procedure invoked to the population risk; and 3) a fast rate of order $\mathcal{\tilde O}\left(n^{-1}k(\log^3(n)+\log(p))\right)$ can be derived for strongly convex risk function under proper strong-signal conditions. The results have been substantialized to sparse linear regression and sparse logistic regression models to demonstrate the applicability of our theory. Preliminary numerical evidence is provided to confirm our theoretical predictions.

\end{abstract}

\subparagraph{Key words.} Sparsity, empirical risk minimization, iterative hard thresholding, excess risk, uniform stability, localized Rademacher complexity.

\newpage
\section{Introduction}

We are interested in developing \emph{sparse learning theory} for the  following problem of high-dimensional stochastic risk minimization under hard sparsity constraint:
\[
\min_{w \in \mathcal{W}} F(w):= \mathbb{E}_{\xi \sim D} [\ell(w;\xi)] \quad \st \|w\|_0\le k,
\]
where $w\in \mathcal{W}\subseteq \mathbb{R}^p$ is the model parameter vector, $\ell(w;\xi)$ is a non-negative convex function that measures the loss of $w$ at a data instance $\xi \in \mathcal{X}$, $D$ represents a random distribution over $\mathcal{X}$. The cardinality constraint $\|w\|_0\le k$ is imposed for enhancing the interpretability and learnability of model in cases where there are no clear favourite explanatory variables or the model is overparameterized. Such a sparse risk minimization problem, which is also known as prediction with best subset selection, has been widely studied in statistical learning~\cite{greenshtein2006best,greenshtein2004persistence} and econometrics~\cite{chen2018best,chen2020binary,jiang2010risk}.

In realistic problems, the mathematical formulation of $D$ is typically unknown and thus it is hopeless to directly optimize such a stochastic formulation. Alternatively, given a set of i.i.d. training samples $S=\{\xi_i\}_{i=1}^n \in \mathcal{X}^n$ drawn from $D$, the following sparsity-constrained empirical risk minimization problem is often considered for learning sparse models in high-dimensional settings~\cite{bach2012optimization,donoho2006compressed,hastie2015statistical}:
\begin{equation}\label{prob:general}
\min_{w \in \mathcal{W}} F_S(w):= \frac{1}{n}\sum\limits_{i=1}^n \ell(w;\xi_i) \quad \st \|w\|_0\le k.
\end{equation}
Here the cardinality constraint is crucial for accurate estimation especially when $p\gg n$ which is often the case in big data era. The above sparse M-estimation model will be referred to as $\ell_0$-ERM in this work.

Due to the presence of cardinality constraint, the $\ell_0$-ERM estimator is simultaneously non-convex and NP-hard even when the loss function is quadratic~\cite{natarajan1995sparse}, which makes it computationally intractable to solve the problem exactly in general cases. Therefore, one must seek approximate solutions instead of carrying out combinatorial search over all possible models. Among others, Iterative Hard Thresholding (IHT)~\cite{blumensath2009iterative} is a family of first-order greedy selection methods popularly used for approximately solving $\ell_0$-ERM with strong theoretical guarantees and outstanding practical efficiency~\cite{jain2014iterative,jin2016training,yuan2018gradient,zhou2018efficient}. The common theme of IHT-style algorithms is to iterate between gradient descent and hard thresholding to decrease the objective value while maintaining sparsity of solution. In the considered problem setting, a plain IHT algorithm generates a sequence $\{ w_{S,k}^{(t)}\}_{t\ge 1}$ according to the following recursion form with learning rate $\eta>0$:
\begin{equation}\label{prob:iht}
 w_{S,k}^{(t)}:= \mathrm{H}_k \left(w_{S,k}^{(t-1)} - \eta\nabla F_S(w_{S,k}^{(t-1)})\right),
\end{equation}
where $\mathrm{H}_k(\cdot)$ is the truncation operator that preserves the top $k$ (in magnitude) entries of input and sets the remaining to be zero, with ties broken arbitrarily. The procedure is typically initialized with all-zero vector, i.e., $w_{S,k}^{(0)}=0$. The IHT-style algorithms have been known to converge linearly towards certain nominal sparse model with optimal estimation accuracy~\cite{bahmani2013greedy,yuan20nearly,yuan2014gradient} under proper regularity conditions. In practice, IHT-style algorithms have found their applications in deep neural networks pruning~\cite{jin2016training}, sparse signal demixing from noisy observations~\cite{soltani2017fast}, and few-shot learning for image classification~\cite{tian2020meta}, to name a few.

\subsection{Problem and motivation}

In this paper, we are interested in the following question about the generalization performance~of~IHT:

\begin{tcolorbox}[width=1.0\linewidth, boxsep=0pt,left=1pt,top=1.5pt,bottom=1.5pt,boxrule=1pt]
\emph{How well the corresponding population risk $F(w^{(t)}_{S,k})= \mathbb{E}_{\xi \sim D} [\ell(w^{(t)}_{S,k};\xi)]$ can approximate the optimal sparse population risk $F(\bar w)= \min_{\|w\|_0\le \bar k} F(w)$ for $\bar k \le k$?}
\end{tcolorbox}

The answer to this question is important for understanding the generalization ability of IHT yet has remaind elusive. In what follows, the value $F(w^{(t)}_{S,k}) - F(\bar w)$ is referred to as the \emph{$\bar k$-sparse excess risk} of IHT. The primary goal of this study is to derive a suitable law of large numbers, i.e., a sample size vanishing rate $\gamma_n$ such that the sparse excess risk bound $F(w^{(t)}_{S,k}) - F(\bar w) \le \gamma_n$ holds with high probability. The standard sparse learning paradigm usually assumes that there exits a true sparse parameter vector for data generalization, and the recovery or prediction behaviour of estimators is studied as the number of observations increases with the true model kept fixed. In such a problem regime with well-specified model sparsity, the sparse excess risk bounds of IHT for smooth loss functions can be readily implied by the classical sparse parameter estimation bounds~\cite{jain2014iterative,yuan2018gradient} (see Section~\ref{apdsect:fast_rates_well_specified} for more detailed discussions on this line of results).

For the present study, we adopt a substantially different statistical paradigm. We mainly focus on the performance of IHT for risk minimization with predictor selection rather than the estimation of a fixed unknown sparsity model, which may or may not exist in real-life learning tasks. That is, we do not require the $\bar k$-sparse minimizer $\bar w=\argmin_{\|w\|_0\le \bar k} F(w)$ to be a true model of data generalization, and we do not attempt to derive the risk bound of IHT via explicitly estimating $\bar w$. This sparse learning paradigm with potentially misspecified model sparsity has received wide attention in the prediction error analysis of $\ell_1$-penalized estimations (Lasso) and $\ell_0$-ERM~\cite{chen2018best,foster2019statistical,van2008high}, the binary choice prediction of econometric time series~\cite{jiang2010risk}, and meta-learning under network capacity constraint~\cite{tian2020meta}. Particularly for $\ell_0$-ERM, up to logarithmic factors, a number of $\mathcal{\tilde O}(\sqrt{k/n})$ uniform excess risk bounds have recently been derived for binary loss functions~\cite{chen2018best,chen2020binary}, and tighter bounds of order $\mathcal{\tilde O}(k/n)$ were established for bounded liner prediction classes~\cite{foster2019statistical}. These existing sparse risk bounds for $\ell_0$-ERM, however, are obtained under an unrealistic condition that its global minimizer is exactly available. It is not yet clear if these known results of $\ell_0$-ERM can be extended to computationally tractable sparsity recovery algorithms such as IHT. {In the meanwhile, for misspecified sparsity models with smooth losses,  a naive application of the classic parameter estimation error bounds (see, e.g.,~\cite{yuan2018gradient}) would yield excess risk bounds of order $ \mathcal{\tilde O}\left(k\|\nabla F(\bar w)\|^2_\infty + \frac{k}{n}\right)$ which is substantially inferior to those of $\ell_0$-ERM as $\nabla F(\bar w)\neq 0$ typically holds. It still remains an open question if sharper risk bounds can possibly be derived for IHT to match those known for $\ell_0$-ERM~\cite{chen2018best,chen2020binary,foster2019statistical} in misspecified cases.}

Alternatively, a useful and popular proxy for analyzing the generalization performance is the \emph{stability} of learning algorithms to changes in the training dataset~\cite{bousquet2002stability}. By hinging the optimality of ERM, stability has been extensively demonstrated to beget strong generalization bounds for ERM solutions with convex loss functions~\cite{mukherjee2006learning,shalev2010learnability} and for iterative learning algorithms (such as SGD) as well~\cite{charles2018stability,hardt2016train,kuzborskij2018data}. Specially, the state-of-the-art generalization results for strongly convex ERM are offered by approaches based on the notion of uniform stability~\cite{bousquet2020sharper,feldman2018generalization,feldman2019high}. Inspired by the remarkable success of stability theory, we aim at deriving sparse excess risk bounds for IHT via the uniform stability arguments, which to our knowledge has not been systematically treated elsewhere in literature.

Yet, the traditional uniform stability arguments of regularized convex ERM do not naturally extend to IHT. The crux here is that the stability of IHT relies heavily on the stability of its recovered supporting set, $\supp(w^{(t)}_{S,k})$, which could be highly non-trivial to guarantee even that the empirical risk function is strongly convex. In contrast, the convectional dense ERM is supported over the entire range of feature dimension and thus its supporting set is by nature unique~and~stable.

\subsection{Overview of our work and main results}

We offer two solutions to address the above mentioned stability issue about the sparsity pattern of IHT. The idea of the first solution is intuitive in principle:  \emph{If the empirical risk $F_S$ has restricted strong convexity and smoothness, then based on the uniform stability of ERM restricted over any feature index set of cardinality $k$ we can establish a high probability generalization bound for IHT via applying union probability arguments to all the possible $k$-sparse supporting sets.} A main technical obstacle we need to overcome for this strategy is that in many statistical learning problems the restricted strong convexity of the empirical risk usually holds with high probability over data sample rather than uniformly. As a new element of our analysis for dealing with such a small failure probability of strong convexity, we propose to analyze IHT when applied to a regularized variant of $\ell_0$-ERM with a penalty term $\mathcal{O}(n^{-1/2}\|w\|^2)$ added to guarantee restricted uniform stability, and consequently show that the stability-induced risk bound of the regularized IHT estimator can be inherited by the original IHT with high chance. The corresponding main result in Theorem~\ref{thrm:uniform_stability_iht} shows that the sparse excess risk of IHT can be upper bounded by $\mathcal{\tilde O}\left(n^{-1/2}\sqrt{k\log(n)\log(ep/k)}\right)$ with high probability over data sample. This bound is comparable to that established in~\cite[Theorem 1]{chen2018best} for the problem of binary prediction with best subset selection. In contrast to that result developed for the exact solution of $\ell_0$-ERM, our result is applicable to the IHT algorithm for approximately solving $\ell_0$-ERM.

Our second attempt is to directly analyze the stability of IHT with respect to its recovered supporting set. The key ingredient here is to show that by imposing some additional stability conditions on the population risk $F$, the support recovery of IHT would be stable with high probability. More precisely, we will show that if $F$ is \emph{stable with respect to IHT up to the desired rounds of iteration} (see Definition~\ref{def:iht_stability} for a formal definition), then it holds with high probability that the empirical IHT estimator $w^{(t)}_{S,k}$ is also stable in support recovery given that the sample size is sufficiently large. As the main result in this regime, we establish in Theorem~\ref{thrm:uniform_stability_strong_iht} an $\mathcal{\tilde O}(n^{-1/2}\sqrt{\log(n)})$ high probability excess risk bound for IHT which matches a near-optimal (up to logarithmic factors) bound for regularized ERM without sparsity constraint~\cite{feldman2019high}.

The $\mathcal{\tilde O}(n^{-1/2})$ rates of convergence established in Theorem~\ref{thrm:uniform_stability_iht} and Theorem~~\ref{thrm:uniform_stability_strong_iht} are usually referred to as slow rates in statistical learning theory. For strongly convex risk minimization problems, we further derive in Theorem~\ref{thrm:generalizaion_fast_rate} a fast rate of order  $\mathcal{\tilde O}\left(n^{-1}k(\log^3(n)+\log(p))\right)$ for IHT under additional strong-signal conditions. The key observation is that when the signal strength of the target sparse optimal solution is sufficiently strong, then the support of the target solution can be recovered as a subset of that of the IHT estimation. Consequently, the desired fast rate of convergence can be derived via invoking the theory of local Rademacher complexities~\cite{bartlett2005local} over the supporting set of IHT. This result matches the $\mathcal{\tilde O}(n^{-1}k\log(p))$ rate established for $\ell_0$-ERM~\cite[Example 2]{foster2019orthogonal}, showing that IHT generalizes as efficiently as the exact $\ell_0$-ERM solver for strongly convex problems. In comparison to a similar fast rate of Lasso for well-specified sparsity models~\cite{van2008high}, our result in Theorem~\ref{thrm:generalizaion_fast_rate} is applicable to misspecified sparsity models as well and thus is more general. Further, specially for well-specified sparse learning models such as sparse generalized linear models, we show through Theorem~\ref{thrm:generalization_barw_iht} that an $\mathcal{\tilde O}(n^{-1}k\log(p))$ fast rate of convergence can be more directly derived based on the existing parameter estimation error bounds of IHT under~mild~conditions~\cite{jain2014iterative,yuan2018gradient}.

\begin{table}[b!]
\centering
\begin{tabular}{|c|c|c|c|c|}
\hline
 Result  &  Risk Bound &  Model Sparsity & Key Conditions \\
\hline
Theorem~\ref{thrm:uniform_stability_iht} & $\mathcal{\tilde O} \left(\sqrt{\frac{k\log(n)\log(ep/k)}{n}}\right)$  & Misspecified & RSC/RLS, Lipschitz-loss \\
\hline
Theorem~\ref{thrm:uniform_stability_strong_iht} & $\mathcal{\tilde O} \left(\sqrt{\frac{\log(n)}{n}}\right)$ & Misspecified & RSC/RLS,  IHT-Stability \\
\hline
Theorem~\ref{thrm:generalizaion_fast_rate} & $\mathcal{\tilde O} \left(\frac{k(\log^3(n)+\log(p))}{n}\right)$ & Misspecified & RSC/RLS, Strong-signal \\
\hline
Theorem~\ref{thrm:generalization_barw_iht} & $\mathcal{\tilde O} \left(\frac{k\log(p)}{n}\right)$ & Well-specified & RSC/RLS \\
\hline
\end{tabular}
\caption{Overview of our main results on the sparse excess risk bounds of IHT. The big $\mathcal{\tilde O}$ notation hides the logarithmic factors on tail bound. RSC and RLS respectively stand for Restricted Strongly Convexity and Restricted Lipschitz Smoothness (see Definition~\ref{def:strong_smooth}). The concept of IHT-Stability is defined in Definition~\ref{def:iht_stability}. \label{tab:results}}
\end{table}

In a nutshell, this paper establishes a set of algorithm stability induced sparse excess risk bounds for IHT without imposing any distribution-specific assumptions on the data generation model. As a side contribution, we have also derived a fast rate of convergence for IHT when the data is assumed to be generated by a well-specified sparse model. Our main results and the related key model assumptions and technical conditions are highlighted in Table~\ref{tab:results}. The connections and differences of our results to the prior existing risk bounds for $\ell_0$-ERM and Lasso-type estimators are elaborated in detail in Section~\ref{sect:comparison}. To demonstrate the applicability of our theory, we have substantialized these risk bounds to the widely used sparse linear regression and logistic regression models, along with numerical evidences to support~the~theoretical~predictions.

\subsection{Paper organization}

The paper proceeds with the material organized as follows: In Section~\ref{sect:related_work}, we briefly review the related literature. In Section~\ref{sect:stability_risk_bounds} and  Section~\ref{sect:stability_risk_bounds_fast} we respectively present a set of slow and fast sparse excess risk bounds of IHT via uniform stability arguments. A comparison of our results to some prior relevant results is provided in Section~\ref{sect:comparison}. A preliminary numerical study for theory verification is provided in Section~\ref{sect:simulation_study}. The concluding remarks are made in Section~\ref{sect:conclusion}. All the technical proofs are relegated to the appendix sections.

\section{Related Work}
\label{sect:related_work}

The problem regime considered in this paper lies at the intersection of high-dimensional sparse M-estimation and statistical learning theory, both of which have long been studied with a vast body of beautiful and deep theoretical results established in literature. Next we will incompletely connect our research to several closely relevant lines of study in this context. We refer the interested readers to~\cite{cesa2006prediction,hastie2015statistical,wainwright2019high} and the references therein for a more comprehensive coverage of the related topics.

\textbf{Consistency and generalization of M-estimation with sparsity.} Statistical consistency of learning with sparsity models is now well understood for some popular sparse M-estimators including $\ell_0$-ERM~\eqref{prob:general}~\cite{foucart2017mathematical,rigollet201518,yuan2016exact}, Lasso~\cite{tibshirani1996regression,bellec2018slope,loh2012high,meinshausen2009lasso,wainwright2009sharp} and folded concave penalization~\cite{fan2001variable,fan2014strong,zhang2010nearly,zhang2012general}. The generalization ability of sparsity-inducing learning models is relatively less understood but has gained recent significant attention. The excess risk of Lasso for generalized linear models was investigated in~\cite{van2008high}. Later with almost no assumptions imposed on the design matrix, the least squares Lasso estimator was still shown to be consistent in out-of-sample predictive risk~\cite{chatterjee2013assumptionless}. For a class of $\ell_1$-penalized high dimensional M-estimators with non-convex loss functions, uniform convergence bounds with polynomial dependence on the sparsity level of certain nominal model were established in~\cite{mei2018landscape}. The misclassification excess risk of sparsity-penalized binary logistic regression has been investigated in~\cite{abramovich2018high} with near-optimal high probability bounds established. For linear prediction models, a data dependent generalization error bound was derived for a class of risk minimization algorithms with structured sparsity constraints~\cite{maurer2012structured}. Particularly concerning the generalization of $\ell_0$-ERM, a set of uniform excess risk bounds were derived in~\cite{chen2018best,chen2020binary} for binary loss functions under proper regularity conditions. More recently, based on the arguments of localized Rademacher complexity~\cite{bartlett2005local}, tighter risk bounds for $\ell_0$-ERM have been established over bounded liner prediction classes~\cite{foster2019statistical}. The existing uniform convergence implied excess risk bounds for $\ell_0$-ERM, however, rely largely upon its optimal solution which is NP-hard to be estimated exactly in high-dimensional setting. It is not yet clear if these results can be extended to approximate sparsity recovery algorithms such as IHT considered in this work.

\textbf{Statistical guarantees on IHT-style algorithms.} The IHT-style algorithms have been popularly applied and studied in compressed sensing and sparse learning~\cite{blumensath2009iterative,foucart2011hard,garg2009gradient}. Recent works have demonstrated that by imposing certain assumptions such as restricted strong convexity/smothness and restricted isometry property (RIP) over the risk function, IHT and its variants converge linearly towards certain nominal sparse model with near-optimal estimation accuracy~\cite{bahmani2013greedy,yuan2014gradient}. It was later shown in~\cite{jain2014iterative,shen2017tight} that with proper relaxation of sparsity level, high-dimensional estimation consistency can be established for IHT without assuming RIP conditions. The sparsity recovery performance of IHT-style methods was investigated in~\cite{shen2017iteration,yuan2016exact} to understand when the algorithm can exactly recover the support of a sparse signal from its compressed measurements. The excess risk analysis of IHT yet still remains an open challenge that we aim to attack in this work.

\textbf{Stability and generalization of ERM.} The idea of using stability of the algorithm with respect to changes in the training set for generalization error analysis dates back to the seventies~\cite{rogers1978finite,devroye1979distribution}. Since the seminal work of Bousquet and Elisseeff~\cite{bousquet2002stability}, stability has been extensively studied with a bunch of applications to establishing generalization bounds for strongly convex ERM estimators~\cite{zhang2003leave,mukherjee2006learning,shalev2010learnability}. Recently, it was shown that the solution obtained via (stochastic) gradient descent is expected to be stable and generalize well for smooth convex and non-convex loss functions~\cite{hardt2016train}. Later, a set of data-dependent generalization bounds for SGD were derived based on the stability of algorithm~\cite{kuzborskij2018data}. More broadly, generalization bounds for stable learning algorithms (e.g., GD, SGD and SVRG) that converge to global minima were established in~\cite{charles2018stability}. There is a recent renewed interest in the use of uniform stability for deriving high probability risk bounds of strongly convex ERM and optimization algorithms~\cite{bousquet2020sharper,feldman2018generalization,feldman2019high}. We highlight that our generalization analysis of IHT is a novel extension of the uniform stability theory in the direction of non-convex sparse learning under hard sparsity constraint.

\section{Sparse Excess Risk Bounds of IHT}
\label{sect:stability_risk_bounds}

In this section, we analyze the sparse excess risk bounds of IHT through the lens of algorithmic stability theory. We distinguish our analysis in two regimes. In the first regime, we establish an excess risk bound of IHT induced by the uniform stability of strongly convex ERM restricted over arbitrary feature set of cardinality $k$. In the second regime, we directly analyze the stability of IHT for support recovery which in turn leads to a stronger risk bound under  more stringent~conditions.

\subsection{Preliminaries}

We begin by introducing some definitions and basic assumptions which will be used  in the analysis to follow. The concept of uniform stability, as formally defined in below, is a powerful tool for analyzing generalization bounds of M-estimators and their learning algorithms as well~\cite{bousquet2002stability,feldman2019high,hardt2016train,shalev2010learnability}.

\newpage

\begin{definition}[Uniform Stability]\label{def:uniform_stability}
Let $A: \mathcal{X}^n \mapsto \mathcal{W}$ be a learning algorithm that maps a dataset $S \in \mathcal{X}^n$ to a model $A(S) \in \mathcal{W}$. $A$ is said to have uniform stability $\gamma$ with respect to a loss function $\ell: \mathcal{W} \times \mathcal{X}\mapsto \mathbb{R}$ if for any pair of datasets $S, S' \in \mathcal{X}^n$ that differ in a single element and every $x \in \mathcal{X}$, $\left|\ell(A(S); x) - \ell(A(S');x)\right| \le  \gamma$.
\end{definition}
For instance, conventional ERM estimators with $\lambda$-strongly convex loss functions have uniform stability of order $\mathcal{O}\left(\frac{1}{\lambda n}\right)$~\cite{bousquet2002stability}. This fundamental result then gives rise to the $\ell_2$-norm regularized ERM which introduces a penalty term $\frac{\lambda}{2}\|w\|^2$ to the convex loss with optimal choice $\lambda=\mathcal{O}(n^{-1/2})$ to balance empirical loss and generalization gap~\cite{shalev2009stochastic,feldman2018generalization,feldman2019high}.

Our analysis also relies on the conditions of Restricted Strong Convexity (RSC) and Restricted Lipschitz Smoothness (RLS) which extend the concept of strong convexity and smoothness to the analysis of sparsity recovery methods~\cite{bahmani2013greedy,blumensath2009iterative,jain2014iterative,yuan2018gradient}.
\begin{definition}[Restricted Strong Convexity and Restricted Lipschitz Smoothness]\label{def:strong_smooth}
For any sparsity level $1\le s \le p$, we say a function $f$ is restricted $\mu_s$-strongly convex and $L_s$-smooth if there exist $\mu_s, L_s > 0$ such~that
\[
\frac{\mu_s}{2}\|w-w'\|^2 \le f(w) - f(w') - \langle \nabla f(w'), w -w'\rangle \le \frac{L_s}{2}\|w - w'\|^2, \quad \forall\|w - w'\|_0\le s.
\]
Particularly, we say $f$ is $L$-smooth ($\mu$-strongly convex) if $f$ is $L_p$-smooth ($\mu_p$-strongly convex).
\end{definition}
The ratio number $L_s/\mu_s$ will be referred to as \emph{restricted strong condition number} in this paper. By definition we have $L_s \le L_{s'}$ and $\mu_s \ge \mu_{s'}$ for all $s\le s'$. We say that a function $f$ is $G$-Lipschitz over $\mathcal{W}$ if $|f(w) - f(w')|\le G\|w - w'\|$ for all $w, w'\in \mathcal{W}$. We denote $[p]=\{1,...,p\}$. The following basic assumptions will be made in different combinations in our theoretical analysis.
\begin{assumption}\label{assump:lipschitz}
The convex loss function $\ell$ is $G$-Lipschitz continuous with respect to its first argument and $\ell(\cdot;\xi)\le M$ for all $\xi \in \mathcal{X}$.
\end{assumption}

\begin{assumption}\label{assump:strongly_convex}
The empirical risk $F_S$ is $L_{4k}$-smooth and $\mu_{4k}$-strongly convex with probability at least $1- \delta'_n$ over sample $S$ for some $\delta'_n \in (0,1)$.
\end{assumption}

\begin{assumption}\label{assump:sparisty_relaxation}
Consider $\bar w =\argmin_{\|w\|_0\le \bar k} F(w)$ and set the sparsity level  $k\ge \frac{32 L_{4k}^2}{\mu_{4k}^2}\bar k$ for IHT.
\end{assumption}

\begin{assumption}\label{assump:strong_convex_pop}
The population risk $F$ is $\rho_k$-strongly convex and without loss of generality $\|w\|\le 1, \forall w \in \mathcal{W}$.
\end{assumption}
\begin{remark}
Assumption~\ref{assump:lipschitz} on Lipschitz and bounded loss function is standard in the uniform stability and generalization analysis of ERM~\cite{bousquet2020sharper,bousquet2002stability}. The RSC/RLS conditions in Assumptions~\ref{assump:strongly_convex} and the sparsity relaxation conditions in Assumption~\ref{assump:sparisty_relaxation} are conventionally used in the state-of-the-art convergence analysis of IHT without imposing RIP-type conditions~\cite{jain2014iterative,yuan2018gradient}. Assumption~\ref{assump:strong_convex_pop} is required for establishing fast rate of convergence for IHT via the local Rademacher complexity theory~\cite{bartlett2005local}.
\end{remark}
\subsection{A uniform-stability induced risk bound}
\label{ssect:stability_induced_bound}
We first analyze the excess risk of IHT based on the uniform stability of strongly convex ERM. In order to make sure that the output $w_{S,k}^{(T)}$ \emph{at the end of iteration} has uniform stability, we propose to slightly modify it as $\tilde w_{S,k}^{(T)}$ which just minimizes $F_S$ over the support of $\supp(w_{S,k}^{(T)})$, i.e.,
\[
\tilde w_{S,k}^{(T)} :=\argmin_{w\in \mathcal{W}} F_{S}(w) \quad \st \supp(w) =\supp(w_{S,k}^{(T)}).
\]
Unless otherwise stated, in what follows we will work on the above variant of IHT and assume that $w^{(0)}_{S,k}=0$. In order to avoid RIP-type conditions which are hard to be fulfilled in high-dimensional statistical settings wherein pairs of variables can be arbitrarily correlated, we resort to the techniques developed in~\cite{jain2014iterative,shen2017tight} to analyze IHT under proper sparsity level relaxation conditions. The following result is our first main result on the sparse excess risk bound of IHT in the considered setup.

\begin{theorem}\label{thrm:uniform_stability_iht}
Suppose that Assumptions~\ref{assump:lipschitz},~\ref{assump:strongly_convex},~\ref{assump:sparisty_relaxation} hold. Set the step-size $\eta = \frac{2}{3L_{4k}}$. For any $\delta \in (0, 1-\delta'_n)$, with probability at least $1-\delta-\delta'_n$ over the random draw of sample set $S$, after sufficiently large $T\ge\mathcal{O}\left(\frac{L_{4k}}{\mu_{4k}}\log \left(\frac{nM}{k\log(n)\log(p/k)}\right)\right)$ rounds of IHT iteration, the $\bar k$-sparse excess risk of IHT is upper bounded by
\[
F(\tilde w^{(T)}_{S,k}) - F(\bar w) \le \mathcal{O}\left(\frac{G^{3/2}M^{1/4}}{\mu_{4k}^{3/4}}\sqrt{\frac{\log(n)(\log(1/\delta)+k\log(p/k))}{n}} + M\sqrt{\frac{\log(1/\delta)}{n}} \right).
\]
\end{theorem}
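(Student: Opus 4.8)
The plan is to bound the $\bar k$-sparse excess risk through the standard three-term decomposition
\[
F(\tilde w^{(T)}_{S,k}) - F(\bar w) = \underbrace{\big[F(\tilde w^{(T)}_{S,k}) - F_S(\tilde w^{(T)}_{S,k})\big]}_{\text{(I) generalization gap}} + \underbrace{\big[F_S(\tilde w^{(T)}_{S,k}) - F_S(\bar w)\big]}_{\text{(II) optimization error}} + \underbrace{\big[F_S(\bar w) - F(\bar w)\big]}_{\text{(III) concentration at }\bar w}
\]
and to control the three terms separately. Term (III) involves only the \emph{fixed} comparator $\bar w$, so since $\ell(\cdot;\xi)\le M$ by Assumption~\ref{assump:lipschitz}, a Hoeffding bound gives $\text{(III)}\le M\sqrt{\log(1/\delta)/n}$ with probability $1-\delta$, which is exactly the second summand of the claimed bound. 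Term (II) is handled by the linear convergence theory of IHT~\eqref{prob:iht} under the RSC/RLS conditions of Assumption~\ref{assump:strongly_convex} and the sparsity relaxation of Assumption~\ref{assump:sparisty_relaxation}: since $\bar w$ is $\bar k$-sparse and hence feasible for the empirical best-$\bar k$-subset problem, the objective gap $F_S(\tilde w^{(T)}_{S,k}) - F_S(\bar w)$ contracts geometrically at a rate governed by $L_{4k}/\mu_{4k}$ with $\eta = 2/(3L_{4k})$, so choosing $T \ge \mathcal{O}\!\left(\frac{L_{4k}}{\mu_{4k}}\log(\cdots)\right)$ as in the statement drives (II) below the statistical precision of the other terms (the support-restricted re-minimization defining $\tilde w^{(T)}_{S,k}$ can only decrease $F_S$ relative to $w^{(T)}_{S,k}$).

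The crux is term (I). The obstacle is that $\tilde w^{(T)}_{S,k}$ has a \emph{data-dependent} support $\hat{\mathcal{I}} := \supp(w^{(T)}_{S,k})$, and RSC in Assumption~\ref{assump:strongly_convex} only holds with probability $1-\delta'_n$, so one cannot directly invoke a uniform-stability argument for the restricted empirical minimizer. To sidestep both issues I would run the analysis on a regularized surrogate $F_S^\lambda(w) := F_S(w) + \frac{\lambda}{2}\|w\|^2$, whose restriction to any \emph{fixed} index set $\mathcal{I}$ of cardinality $k$ is $\lambda$-strongly convex \emph{deterministically}. For each fixed $\mathcal{I}$ the restricted minimizer $A_{\mathcal{I}}(S) := \argmin_{\supp(w)\subseteq \mathcal{I}} F_S^\lambda(w)$ is then uniformly stable (Definition~\ref{def:uniform_stability}) with $\gamma = \mathcal{O}(G^2/(\lambda n))$ by the classical strongly convex stability estimate, and I would feed $\gamma$ into a sharp high-probability generalization bound for uniformly stable algorithms (of the Feldman--Vondrak / Bousquet--Klochkov--Zhivotovskiy type). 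A union bound over all $\binom{p}{k}\le (ep/k)^k$ possible supports, with per-support level $\delta_{\mathcal{I}} = \delta/\binom{p}{k}$, makes the bound hold simultaneously for every $\mathcal{I}$, in particular for the realized $\hat{\mathcal{I}}$; this is the step that injects the combinatorial factor, replacing $\log(1/\delta)$ by $C := \log(1/\delta) + k\log(ep/k)$ in the \emph{stability-scale} part of the gap.

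It remains to choose $\lambda$ and transfer the bound back to $\tilde w^{(T)}_{S,k}$. Using $\ell(\cdot;\xi)\le M$ (so $F_S(0)\le M$) together with $\mu_{4k}$-strong convexity evaluated at the restricted minimizer, the minimizer has controlled norm $\|\tilde w^{(T)}_{S,k}\|\lesssim \sqrt{M/\mu_{4k}}$; adding $\frac{\lambda}{2}\|\cdot\|^2$ shifts the minimizer by $\|\Delta w\|\lesssim \lambda\|\tilde w^{(T)}_{S,k}\|/\mu_{4k} \lesssim \lambda M^{1/2}/\mu_{4k}^{3/2}$, and $G$-Lipschitzness converts this into a regularization bias of order $G\lambda M^{1/2}/\mu_{4k}^{3/2}$ in function value. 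Balancing the stability contribution $\tfrac{G^2\log(n)\,C}{\lambda n}$ against this bias by minimizing over $\lambda$ gives the optimal choice $\lambda = \mathcal{O}(n^{-1/2})$ and the value $\mathcal{O}\!\left(\frac{G^{3/2}M^{1/4}}{\mu_{4k}^{3/4}}\sqrt{\tfrac{\log(n)\,C}{n}}\right)$, the first summand of the claim; collecting (I)--(III) and adding the failure probability $\delta'_n$ of Assumption~\ref{assump:strongly_convex} finishes the argument. I expect the main obstacle to be controlling term (I) uniformly over all supports while keeping the $M/\sqrt{n}$ deviation proportional to $\log(1/\delta)$ rather than the inflated $C$: this requires exploiting that, across supports, the fluctuations of the generalization gap live at the small stability scale $\gamma$, so that only that term absorbs the combinatorial $k\log(ep/k)$ factor while the $M$-scale deviation is driven essentially by a single (fixed-comparator) Hoeffding term.
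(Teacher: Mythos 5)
Your proposal follows essentially the same route as the paper's own proof: the identical three-term decomposition with Hoeffding at the fixed comparator $\bar w$ and IHT linear convergence (the paper's Lemma~\ref{lemma:convergence_iht}) for the optimization term, and for the generalization gap the same key device used in the paper's Lemma~\ref{lemma:support_stability} --- an $\ell_2$-regularized restricted ERM with deterministic uniform stability $\mathcal{O}(G^2/(\lambda n))$ fed into the Bousquet--Klochkov--Zhivotovskiy high-probability bound, a union bound over the $\binom{p}{k}\le (ep/k)^k$ supports, a transfer back to the unregularized restricted minimizer via the $\mathcal{O}(\lambda\sqrt{M}/\mu_{4k}^{3/2})$ minimizer shift and $G$-Lipschitzness, and finally balancing $\lambda \asymp n^{-1/2}$ to obtain the $G^{3/2}M^{1/4}\mu_{4k}^{-3/4}$ leading term. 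The ``main obstacle'' you flag at the end is real but is shared by the paper itself: its Lemma~\ref{lemma:support_stability} carries the inflated deviation term $M\sqrt{(\log(1/\delta)+k\log(ep/k))/n}$, which the paper then silently drops as non-leading when optimizing $\lambda$, so your proposal is faithful to the paper's argument including this loose end.
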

\begin{proof}[Proof in sketch]
The basic idea is to show that a nearly identical bound holds for ERM restricted over any fixed supporting set of size $k$ and that bound can be extended to IHT in light of Lemma~\ref{lemma:convergence_iht} (in Appendix~\ref{apdsect:auxiliary lemmas}) and union probability. More precisely, for a given feature index set $J\subseteq [p]$ with $|J|=k$, we first establish a generalization gap bound for the restrictive estimator over $J$ defined by $w_{S\mid J} := \argmin_{\supp(w)\subseteq J} F_S(w)$. Since $F_S$ is only assumed to have strong convexity over $J$ with high probability, $w_{S\mid J}$ is not necessarily uniformly stable. To handle this issue, we propose to alternatively study an $\ell_2$-regularized variant of $w_{S\mid J}$ defined by
\[
w_{\lambda,S\mid J} := \argmin_{\supp(w)\subseteq J} \left\{F_{\lambda,S}(w):= F_S(w) + \frac{\lambda}{2}\|w\|^2\right\},
\]
which has uniform stability for any $\lambda>0$. Then according to the result from~\cite[Corollary 8]{bousquet2020sharper} its generalization gap is upper bounded by $\mathcal{\tilde O}\left(\frac{\log(n)}{\lambda n} + \frac{1}{\sqrt{n}}\right)$. The next key step is to bound the discrepancy between $w_{S\mid J}$ and $w_{\lambda, S\mid J}$ as $\|w_{S\mid J} - w_{\lambda, S\mid J}\| \le \mathcal{O}\left(\frac{\lambda}{\mu_k + \lambda}\right)$ in view of the (high probability) restricted strong convexity of $F_S$, which consequently indicates that the generalization guarantee of $w_{\lambda, S\mid J}$ can be handed over to $w_{S\mid J}$ with a small overhead of $\mathcal{O}\left(\frac{\lambda}{\mu_k + \lambda}\right)$. Under optimal selection of $\lambda$, applying union probability arguments over all the possible $J$ yields a generalization gap bound for $\ell_0$-ERM. The final step is to show, according to Lemma~\ref{lemma:convergence_iht}, that such a generalization gap bound of $\ell_0$-ERM leads to the desired sparse excess risk bound of IHT after sufficient iteration with proper sparsity relaxation. A full proof of this result is provided in Appendix~\ref{apdsect:proof_uniform_stability_iht}.
\end{proof}
\begin{remark}
Theorem~\ref{thrm:uniform_stability_iht} shows that under proper relaxation of sparsity level, the $\bar k$-sparse excess risk of IHT converges at a rate of $\mathcal{\tilde O}\left(n^{-1/2}\sqrt{k\log(n)\log(p/k)} \right)$, which matches those of the $\ell_0$-penalized binary prediction estimators~\cite{chen2018best,chen2020binary} up to logarithmic factors. Let $w^*=\argmin_{w\in \mathcal{W}} F(w)$ be the global  minimizer without sparsity constraint. Such a sparse excess risk bound immediately gives arise to an oracle inequality in terms of $w^*$:
\[
F(\tilde w^{(T)}_{S,k}) - F(w^*) \le \min_{\|w\|_0\le \bar k} (F(w) - F(w^*)) + \mathcal{\tilde O}\left(\sqrt{\frac{k\log(n)\log(p/k)}{n}} \right).
\]
\end{remark}

\textbf{Implication for sparse logistic regression.} Let us substantialize Theorem~\ref{thrm:uniform_stability_iht} to binary logistic regression model with loss function $\ell(w;\xi) = \log(1+\exp(-2y w^\top x))$ at a labeled data sample $\xi=(x,y)\in \mathbb{R}^p \times \{-1,1\}$. Given a set of $n$ independently drawn data samples $\{(x_i,y_i)\}_{i=1}^n$, sparse logistic regression learns the parameters so as to minimize the logistic loss function under sparsity constraint:
\[
\min_{\|w\|_0\le k} F_S(w)= \frac{1}{n}\sum_{i=1}^n \log(1+\exp(-2y_i w^\top x_i)).
\]
Let $X=[x_1,...,x_n]\in\mathbb{R}^{d\times n}$ be the design matrix and $s(z)= \frac{1}{1+\exp(-z)}$ be the sigmoid function. It can be shown that $\nabla F_S(w) = X a(w)/n$ in which the vector $a(w)\in \mathbb{R}^n$ is given by $[a(w)]_i=-2y_i(1-s(2y_iw^\top x_i))$, and the Hessian $\nabla^2 F_S(w) = X \Lambda(w) X^\top / n$ where $\Lambda(w)$ is an $n\times n$ diagonal matrix whose diagonal entries are $[\Lambda(w)]_{ii}=4s(2y_iw^\top x_i)(1-s(2y_iw^\top x_i))$. Then we have the following corollary as an application of Theorem~\ref{thrm:uniform_stability_iht} to the above sparsity-constrained logistic regression.

\begin{corollary}\label{corol:generalization_barw_logisticreg}
Assume that $x_i$ are i.i.d. zero-mean sub-Gaussian distribution with covariance matrix $\Sigma\succ 0$ and $\Sigma_{jj}\le \frac{\sigma^2}{32}$. Suppose that $\|x_i\|\le 1$ for all $i$ and $\mathcal{W}\subset \mathbb{R}^p$ is bounded by $R$. Then there exist universal constants $c_0, c_1>0$ such that when $n \ge \frac{4kc_1 \log (p)}{\lambda_{\min}(\Sigma)}$, for any $\delta \in (0, 1- \exp\{-c_0 n\})$, with probability at least $1 - \delta - \exp\{-c_0 n\}$ the sparse excess risk of IHT is upper bounded by
\[
F(\tilde w^{(T)}_{S,k}) - F(\bar w) \le \mathcal{O}\left(\frac{\exp(R)}{\lambda^{3/4}_{\min}(\Sigma)}\sqrt{\frac{\log(n)(\log(1/\delta)+k\log(p/k))}{n}} + R\sqrt{\frac{\log(1/\delta)}{n}}\right)
\]
after sufficiently large rounds of iteration, i.e.,
\[
T\ge\mathcal{O}\left(\frac{\exp(R)}{\lambda_{\min}(\Sigma)}\log \left(\frac{n R}{k\log(n)\log(p/k)}\right)\right).
\]
\end{corollary}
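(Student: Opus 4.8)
The plan is to instantiate Theorem~\ref{thrm:uniform_stability_iht} for the logistic loss by verifying its three assumptions with \emph{explicit} constants $G$, $M$, $\mu_{4k}$, $L_{4k}$ expressed in terms of $R$ and the spectrum of $\Sigma$, and then to substitute those constants into the general excess risk bound and into the iteration-count expression. The correspondences $M \leftrightarrow R$ and $\mu_{4k} \leftrightarrow \exp(-cR)\lambda_{\min}(\Sigma)$ are what will produce the $R$- and $\lambda_{\min}(\Sigma)$-dependence displayed in the corollary.

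First I would verify Assumption~\ref{assump:lipschitz}. Differentiating $\ell(w;\xi)=\log(1+\exp(-2yw^\top x))$ gives $\nabla_w \ell = -2y(1-s(2yw^\top x))x$, whose norm is at most $2\|x\|\le 2$ since $|y|=1$, $0<1-s<1$, and $\|x\|\le 1$; hence $\ell$ is $G$-Lipschitz with $G=\mathcal{O}(1)$. For the uniform bound, $\|w\|\le R$ and $\|x\|\le 1$ give $|2yw^\top x|\le 2R$, so $\ell(w;\xi)\le \log(1+\exp(2R))\le 2R+\log 2 = \mathcal{O}(R)$, i.e. $M=\mathcal{O}(R)$. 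This already matches the second additive term $R\sqrt{\log(1/\delta)/n}$ in the stated bound.

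The main work, and the step I expect to be the principal obstacle, is verifying the high-probability RSC/RLS condition of Assumption~\ref{assump:strongly_convex} with the right constants. Using $\nabla^2 F_S(w)=X\Lambda(w)X^\top/n$ with $[\Lambda(w)]_{ii}=4s(2y_iw^\top x_i)(1-s(2y_iw^\top x_i))$, I would first observe that $z\mapsto s(z)(1-s(z))$ is decreasing in $|z|$ and that $|2y_iw^\top x_i|\le 2R$ uniformly over $w\in\mathcal{W}$, which yields the uniform lower bound $\Lambda(w)\succeq 4s(2R)(1-s(2R))I = \Omega(\exp(-2R))\cdot I$ and the trivial upper bound $\Lambda(w)\preceq I$. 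This decouples the curvature from $w$ and reduces the RSC/RLS verification to a restricted eigenvalue statement for the fixed random Gram matrix $XX^\top/n$: smoothness follows deterministically from $\lambda_{\max}(XX^\top/n)\le \tfrac{1}{n}\sum_i\|x_i\|^2\le 1$, giving $L_{4k}=\mathcal{O}(1)$, while for the strong-convexity side I would invoke a restricted-eigenvalue concentration bound for zero-mean sub-Gaussian designs (using $\Sigma\succ0$ and $\Sigma_{jj}\le\sigma^2/32$) to show that the minimum eigenvalue of $XX^\top/n$ over all $4k$-sparse directions is at least $\tfrac12\lambda_{\min}(\Sigma)$ with probability $1-\exp(-c_0 n)$, provided $n\ge \tfrac{4kc_1\log(p)}{\lambda_{\min}(\Sigma)}$. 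Combining the two factors yields $\mu_{4k}=\Omega(\exp(-cR)\lambda_{\min}(\Sigma))$ and failure probability $\delta'_n=\exp(-c_0 n)$. The delicate points are the uniform-in-$w$ curvature argument (so that no union bound over $\mathcal{W}$ is needed) and the sparse restricted-eigenvalue concentration, whose sample-size requirement is exactly the threshold $n\ge 4kc_1\log(p)/\lambda_{\min}(\Sigma)$ appearing in the corollary.

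Finally, with $G=\mathcal{O}(1)$, $M=\mathcal{O}(R)$, $L_{4k}=\mathcal{O}(1)$ and $\mu_{4k}=\Omega(\exp(-cR)\lambda_{\min}(\Sigma))$ in hand (Assumption~\ref{assump:sparisty_relaxation} being inherited as a condition on the chosen relaxed $k$), I would substitute directly into Theorem~\ref{thrm:uniform_stability_iht}. The coefficient $G^{3/2}M^{1/4}\mu_{4k}^{-3/4}$ is then of order $\exp(R)/\lambda_{\min}^{3/4}(\Sigma)$ up to the polynomial and constant-in-exponent factors that the schematic $\exp(R)$ notation suppresses, the second term becomes $R\sqrt{\log(1/\delta)/n}$, and the condition-number factor $L_{4k}/\mu_{4k}=\mathcal{O}(\exp(R)/\lambda_{\min}(\Sigma))$ together with $M=\mathcal{O}(R)$ converts the iteration threshold into $T\ge\mathcal{O}\!\left(\frac{\exp(R)}{\lambda_{\min}(\Sigma)}\log\!\left(\frac{nR}{k\log(n)\log(p/k)}\right)\right)$, with total failure probability $\delta+\delta'_n=\delta+\exp(-c_0 n)$. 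This reproduces the stated bound; the only non-mechanical accounting is tracking the polynomial-in-$R$ factors absorbed into the $\exp(R)$ notation.
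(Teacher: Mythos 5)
Your proposal is correct and matches the paper's proof essentially step for step: the paper likewise takes $G=\mathcal{O}(1)$, $M=\mathcal{O}(R)$, $L_{4k}=\mathcal{O}(1)$, lower-bounds the logistic Hessian by $\frac{1}{\exp(4R)}\cdot\frac{1}{n}XX^\top$ uniformly over $\|w\|\le R$, and then applies a sub-Gaussian restricted-eigenvalue bound (its Lemma~\ref{lemma:strong_convexity_subgaussian}, quoted from~\cite{agarwal2012fast}) together with $\|w\|_1\le\sqrt{k}\|w\|$ for sparse vectors to conclude $\mu_{4k}\ge\frac{\lambda_{\min}(\Sigma)}{4\exp(4R)}$ with probability $1-\exp\{-c_0 n\}$ under the stated sample-size threshold, before substituting into Theorem~\ref{thrm:uniform_stability_iht}. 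Your closing remark that the displayed $\exp(R)$ absorbs the true $\exp(3R)R^{1/4}$-type factors is exactly the bookkeeping the paper performs silently.
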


\subsection{Sharper bound via support recovery stability analysis}
\label{ssect:sharper_analysis_support_stability}
In addition to the previous analysis built largely on conventional stability theory, we further tailor a support recovery stability theory for IHT aiming at improving upon the previous bound by removing its dependency on sparsity level $k$. For a vector $w\in \mathbb{R}^p$, we denote $[w]_{(j)}$ its entry with $j$-th largest absolute value such that $|[w]_{(1)}| \ge |[w]_{(2)}| \ge...\ge |[w]_{(p)}|$. We begin by introducing the following concept of \emph{hard-thresholding stability} which quantifies the sensitivity of the hard-thresholding operation to entry-wise perturbation.
\begin{definition}[Hard-Thresholding Stability]
For a vector $w\in \mathbb{R}^p$ and given $k\in [p]$, we say $w$ is $\varepsilon_{k}$-hard-thresholding stable for some $\varepsilon_{k}>0$ if and only if $|[w]_{(k)}| \ge |[w]_{(k+1)}| + \varepsilon_{k}$.
\end{definition}
Clearly, if $w$ is $\varepsilon_{k}$-hard-thresholding stable, then $\mathrm{H}_k(w)$ would be unique and $\supp\left(\mathrm{H}_k(w)\right)=\supp\left(\mathrm{H}_k(w+\delta_w)\right)$ provided that the perturbation is sufficiently small such that $\|\delta_w\|_\infty<\varepsilon_k/2$. In other words, the larger $\varepsilon_k$ is, the stabler the hard-thresholding operation will be with respect to the preserved top $k$ supporting set. On top of this, we next introduce the following concept of \emph{iterative-hard-thresholding stability} which basically characterizes the stability of the IHT algorithm when applied to a (deterministic) function.
\begin{definition}[Iterative-Hard-Thresholding Stability]\label{def:iht_stability}
For a given deterministic and differentiable function $F$, $k \in [p]$, $T\in \mathbb{Z}^+$ and $k$-sparse vector $w^{(0)}\in \mathbb{R}^p$, let $\{w^{(t)}\}_{t=1}^T$ be the sequence generated by invoking IHT on $F$ with step-size $\eta$ and initialization $w^{(0)}$. Then we say $F$ is $(\varepsilon_{k},\eta, T, w^{(0)})$-IHT stable if $w^{(t-1)} - \eta \nabla F(w^{(t-1)})$, $\forall t\in [T]$ is $\varepsilon_{k}$-hard-thresholding stable.
\end{definition}
\begin{remark}
By definition, if $F$ is $(\varepsilon_{k},\eta, T, w^{(0)})$-IHT stable, then for each $t \in [T]$, $w^{(t)}=\mathrm{H}_k\left(w^{(t-1)}-\eta \nabla F(w^{(t-1)})\right)$ is unique. That is, the $k$-sparse solution sequence $\{w^{(t)}\}_{t=1}^T$ generate by IHT is unique.
\end{remark}
\textbf{An illustrating example.} To gain some intuition of IHT-stability, we consider the sparse linear regression model with quadratic loss function $\ell(w; x_i, y_i)=\frac{1}{2}(y_i-w^{\top}x_i)^2$. The feature points $\{x_i\}_{i=1}^n$ are sampled from standard multivariate Gaussian distribution. Given a model parameter $\tilde w \in \mathbb{R}^p$, the responses $\{y_i\}_{i=1}^n$ are generated according to a linear model $y_i=\tilde{w}^{\top}x_i+\varepsilon_i$ with a random Gaussian noise $\varepsilon_i\sim \mathcal{N}(0,\sigma^2)$. In this case, the population risk function can be expressed in close form as
\begin{equation}\label{equat:linear_regression_close_form}
F(w) = \frac{1}{2}\|w - \tilde w\|^2 + \frac{\sigma^2}{2}.
\end{equation}
{Suppose that $\tilde w $ has relatively weaker signal strength outside the supporting set $J$ of its top-$\bar k$ entries such that $|[\tilde w]_{(\bar k)}| \ge |[\tilde w]_{(\bar k+1)}| + \bar \varepsilon$ for some $\tilde \varepsilon >0$. Then given $w^{(0)}=0$ and any step-size $\eta\in (0,1)$, it can be verified that the population risk $F$ is $(\varepsilon_{\bar k},\eta, T, w^{(0)})$-IHT stable with $\varepsilon_{\bar k}=\eta \bar \varepsilon$. Indeed, based on the close-form expression~\eqref{equat:linear_regression_close_form}, we can prove by induction that for all $t\ge 1$,
\[
w^{(t)} = \mathrm{H}_{\bar k} \left(w^{(t-1)}-\eta \nabla F(w^{(t-1)})\right) = \mathrm{H}_{\bar k} \left((1- (1-\eta)^t) \tilde w_J + \eta \tilde w_{\overline{J}}\right) = (1- (1-\eta)^t) \tilde w_J,
\]
where in the last equality we have used the fact $1- (1-\eta)^t \ge \eta$ when $t\ge 1$. This then implies the desired IHT-stability of $F$. Therefore in this example, for a fixed $\eta\in (0,1)$, the stability strength $\varepsilon_{\bar k}=\eta\bar \varepsilon$ is controlled by the signal strength gap $\bar\varepsilon$.}

The following theorem is our main result on the risk bound of IHT given that the population risk function $F$ has IHT-stability up to the desired number of iteration.

\newpage

\begin{theorem}\label{thrm:uniform_stability_strong_iht}
Suppose that Assumptions~\ref{assump:lipschitz},~\ref{assump:strongly_convex},~\ref{assump:sparisty_relaxation} hold. Consider running $T$ steps of IHT iteration over $F_S$ with step-size $\eta= \frac{2}{3L_{4k}}$ from initialization $w_{S,k}^{(0)}=0$. Assume that the population risk function $F$ is $(\varepsilon_{k},\eta, T, 0)$-IHT stable. For any $\delta \in (0, 1-2\delta'_n)$, if $n\ge \frac{2G^2(L_{4k}+\mu_{4k})^2\log(2pT/\delta)}{L_{4k}^2\mu^2_{4k}\varepsilon^2_k}$ and $T\ge\mathcal{O}\left(\frac{L_{4k}}{\mu_{4k}}\log \left(\frac{nM}{\log(n)\log(1/\delta)}\right)\right)$, then with probability at least $1-\delta-2\delta'_n$ over the random draw of sample set $S$, the $\bar k$-sparse excess risk of IHT is upper bounded as
\[
F(\tilde w^{(T)}_{S,k}) - F(\bar w) \le \mathcal{O}\left(\frac{G^{3/2}M^{1/4}}{\mu_{4k}^{3/4}}\sqrt{\frac{\log(n)\log(1/\delta)}{n}} + M\sqrt{\frac{\log(1/\delta)}{n}} \right).
\]
\end{theorem}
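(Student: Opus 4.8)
The plan is to exploit the IHT-stability of the population risk $F$ to pin down the support recovered by empirical IHT to a single \emph{deterministic} index set, thereby replacing the union bound over all $\binom{p}{k}$ candidate supports used in Theorem~\ref{thrm:uniform_stability_iht} by a single tail bound. This is precisely what trades the $\sqrt{k\log(p/k)}$ factor for $\sqrt{\log(1/\delta)}$. Let $\{w^{(t)}\}_{t=1}^T$ denote the (deterministic) population IHT path obtained by running IHT on $F$ with step-size $\eta$ from $w^{(0)}=0$, and set $J^\star := \supp(w^{(T)})$.

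The first and most delicate step is to prove a \emph{support recovery stability} statement: with high probability over $S$, the empirical path tracks the population path in support, i.e. $\supp(w_{S,k}^{(t)}) = \supp(w^{(t)})$ for every $t\in[T]$. I would argue this by induction on $t$. Given that the supports agree up to step $t-1$, I would control the $\ell_\infty$ discrepancy between the two pre-thresholding vectors $w_{S,k}^{(t-1)} - \eta\nabla F_S(w_{S,k}^{(t-1)})$ and $w^{(t-1)} - \eta\nabla F(w^{(t-1)})$ by splitting the gradient gap as
\[
\nabla F_S(w_{S,k}^{(t-1)}) - \nabla F(w^{(t-1)}) = \underbrace{\left(\nabla F_S(w_{S,k}^{(t-1)}) - \nabla F_S(w^{(t-1)})\right)}_{\text{deterministic, contractive}} + \underbrace{\left(\nabla F_S(w^{(t-1)}) - \nabla F(w^{(t-1)})\right)}_{\text{statistical deviation}}.
\]
The first term is handled by the restricted strong convexity and smoothness of $F_S$ (Assumption~\ref{assump:strongly_convex}): with $\eta = \frac{2}{3L_{4k}}$ the gradient-descent map is a contraction along the relevant $4k$-sparse directions, so the accumulated discrepancy $w_{S,k}^{(t)}-w^{(t)}$ obeys a geometric recursion and stays bounded by a constant multiple of the per-step statistical deviation. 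The second term I would bound in $\ell_\infty$ via coordinatewise concentration: since $\ell$ is $G$-Lipschitz, each coordinate of $\nabla F_S(w^{(t-1)}) - \nabla F(w^{(t-1)})$ is an average of bounded-difference terms, so a Hoeffding/McDiarmid bound together with a union bound over the $p$ coordinates and the $T$ iterations yields $\|\nabla F_S(w^{(t-1)}) - \nabla F(w^{(t-1)})\|_\infty = \mathcal{O}(G\sqrt{\log(pT/\delta)/n})$. The sample-size hypothesis $n \ge \frac{2G^2(L_{4k}+\mu_{4k})^2\log(2pT/\delta)}{L_{4k}^2\mu_{4k}^2\varepsilon_k^2}$ is exactly what is needed to push the total pre-thresholding discrepancy below $\varepsilon_k/2$; since $F$ is $(\varepsilon_k,\eta,T,0)$-IHT stable, each population pre-thresholding vector is $\varepsilon_k$-hard-thresholding stable, and (by the remark following the hard-thresholding stability definition) a perturbation of $\ell_\infty$-size below $\varepsilon_k/2$ cannot change the top-$k$ support, closing the induction. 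I expect this inductive propagation of error against the hard-thresholding margin $\varepsilon_k$ to be the main obstacle.

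Once the support is frozen to the deterministic set $J^\star$ on a high-probability event, the modified estimator satisfies $\tilde w_{S,k}^{(T)} = \argmin_{\supp(w)\subseteq J^\star} F_S(w)$, i.e. it is the restricted ERM over a \emph{single fixed} index set. I would then reuse the machinery from the proof of Theorem~\ref{thrm:uniform_stability_iht}: introduce the $\ell_2$-regularized restricted ERM $w_{\lambda,S\mid J^\star}$, invoke its uniform stability and the high-probability generalization bound of \cite[Corollary 8]{bousquet2020sharper}, bound the regularization bias $\|w_{S\mid J^\star} - w_{\lambda,S\mid J^\star}\| = \mathcal{O}(\lambda/(\mu_k+\lambda))$ using restricted strong convexity, and optimize over $\lambda$. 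Crucially, because $J^\star$ is now a single deterministic set rather than an arbitrary member of the $\binom{p}{k}$ candidates, no union bound is incurred and the tail term contributes only $\log(1/\delta)$ in place of the $\log(1/\delta)+k\log(p/k)$ of Theorem~\ref{thrm:uniform_stability_iht}, producing the generalization gap $\mathcal{\tilde O}(\sqrt{\log(n)\log(1/\delta)/n})$.

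Finally, I would convert this generalization gap into the stated sparse excess risk bound exactly as in Theorem~\ref{thrm:uniform_stability_iht}: Lemma~\ref{lemma:convergence_iht} guarantees that after $T \ge \mathcal{O}\big(\frac{L_{4k}}{\mu_{4k}}\log(\frac{nM}{\log(n)\log(1/\delta)})\big)$ rounds the empirical sub-optimality $F_S(\tilde w_{S,k}^{(T)}) - F_S(\bar w)$ is driven below the generalization gap under the sparsity relaxation of Assumption~\ref{assump:sparisty_relaxation}, and combining the generalization gaps evaluated at $\tilde w_{S,k}^{(T)}$ and at $\bar w$ yields $F(\tilde w_{S,k}^{(T)}) - F(\bar w) \le \mathcal{\tilde O}(\sqrt{\log(n)\log(1/\delta)/n})$. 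The probability budget $1-\delta-2\delta'_n$ reflects the failure probability $\delta'_n$ of the restricted strong convexity/smoothness event of Assumption~\ref{assump:strongly_convex} entering both the support-recovery step and the restricted-ERM stability step, together with the split of $\delta$ across the support-recovery concentration bound and the final generalization tail.
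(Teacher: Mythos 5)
Your proposal is correct and follows essentially the same route as the paper's proof: the paper likewise constructs the deterministic population IHT path, proves support agreement by induction (Lemma~\ref{lemma:emp_iht_stability}) using exactly your decomposition into a contractive empirical-gradient term (controlled via RSC/RLS, Lemma~\ref{lemma:strong_smooth}) plus a statistical deviation term concentrated at the deterministic iterates with a union bound over $p$ coordinates and $T$ steps, and then reuses the fixed-support stability argument of Lemma~\ref{lemma:support_stability} on the single deterministic set $J^\star=\supp(w^{(T)})$ before closing with Lemma~\ref{lemma:convergence_iht}. Your probability accounting ($\delta'_n$ charged twice, $\delta$ split between the concentration event and the generalization tail) also matches the paper's event decomposition.
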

\begin{proof}[Proof in sketch]
The key proof ingredient is to construct a \emph{hypothetical sequence} $\{w^{(t)}\}_{t=1}^T$ generated by applying $T$ rounds of IHT iteration to (unknown) $F$ with the considered initialization $w^{(0)}$ and step-size $\eta$. Given that $F$ is $(\varepsilon_{k},\eta, T, w^{(0)})$-IHT stable, we can show in Lemma~\ref{lemma:emp_iht_stability} (see Appendix~\ref{apdsect:proof_uniform_stability_strong_iht}) that the actual sequence $\{w_{S,k}^{(t)}\}_{t=1}^T$ generated by IHT invoked to the empirical risk $F_S$ (with any fixed $S$) satisfies $\supp\left(w^{(t)}_{S,k}\right) = \supp\left(w^{(t)}\right), \forall t\in [T]$ provided that $n$ is sufficiently large as assumed. Particularly, we have $\supp\left(w^{(T)}_{S,k}\right) = \supp\left(w^{(T)}\right)$ which is a fixed deterministic index set of size $k$. Then using the proof arguments of Lemma~\ref{lemma:support_stability} (in Appendix~\ref{apdsect:proof_uniform_stability_iht}) yields a desirable high probability generalization gap bound for $\tilde w_{S,k}^{(T)}$. The excess risk bound can be proved in light of that generalization gap bound and the convergence result in Lemma~\ref{lemma:convergence_iht}. A full proof of this theorem is provided in Appendix~\ref{apdsect:proof_uniform_stability_strong_iht}.
\end{proof}
\begin{remark}
The $\mathcal{O}(n^{-1/2}\sqrt{\log(n)})$ bound established in Theorem~\ref{thrm:uniform_stability_strong_iht} is not relying on sparsity level $k$ and in this sense tighter than that in Theorem~\ref{thrm:uniform_stability_iht}, yet under more stringent conditions on the IHT stability of the population risk $F$.
\end{remark}

Before closing the analysis in this section, we briefly comment on an alternative way to guarantee the support recovery stability of IHT based on the exact support recovery result from~\cite[Theorem 8]{yuan2018gradient} under additional strong-signal conditions.  Indeed, the result in~\cite[Theorem 8]{yuan2018gradient} essentially suggests that if the minimal (in absolute value) non-zero entry of $\bar w$ is significantly larger than $\sqrt{k}\|\nabla F(\bar w)\|_\infty + \sqrt{\frac{k\log(p)}{n}}$, then the exact support recovery $\supp(w^{(t)}_{S,k}, \bar k)=\supp(\bar w)$ holds with high probability after sufficient iteration. Given that the support recovery is stable, we can show that an almost identical risk bounds to that of Theorem~\ref{thrm:uniform_stability_strong_iht} is valid for IHT with an additional $\bar k$-sparse hard thresholding operation over its output. Such a result, however, explicitly requires the knowledge of $\bar k$ for postprocessing which is typically unavailable in realistic problems. Nevertheless, as we will show in the next section that such a way of support recovery stability analysis actually turns out to be powerful for deriving fast convergence rates of IHT for strongly convex optimization~problems.

\newpage

\section{Stronger Risk Bounds with Fast Rates}
\label{sect:stability_risk_bounds_fast}

In consistency with statistical learning theory~\cite{bartlett2005local,srebro2010smoothness,foster2019statistical}, we regard the $\mathcal{\tilde O}(n^{-1/2})$ rates of convergence established so far as \emph{slow} rates in terms of sample size. In the absence of sparsity constraint, it is well known that convergence rates of order $\mathcal{\tilde O}(n^{-1})$ are possible for finite dimensional strongly convex function classes, for instance, via local Rademacher complexities~\cite{bartlett2005local,koltchinskii2006local}. Inspired by such type of \emph{fast} rates for strongly convex dense ERM, we further show in this section that the $\mathcal{\tilde O}(n^{-1})$ sparse excess risk bounds can also be derived for IHT under additional regularity conditions on signal strength of the target sparse minimizer. Moreover, specially for well-specified sparse learning models such as sparse generalized linear models, we show that the $\mathcal{\tilde O}(n^{-1})$-type of fast rates can be derived much more directly based on the classical parameter estimation error bounds of IHT.

\subsection{Fast rates under strong-signal conditions}

In what follows, we denote $w_{\min}:=\min_{i \in \supp(w)} |w_i|$ as the smallest (in modulus) non-zero entry of a sparse vector $w$.

\begin{theorem}\label{thrm:generalizaion_fast_rate}
Suppose that Assumptions~\ref{assump:lipschitz},~\ref{assump:strongly_convex},~\ref{assump:sparisty_relaxation},~\ref{assump:strong_convex_pop} hold. Set the step-size $\eta = \frac{2}{3L_{4k}}$. For any given $\delta\in(0,1)$, assume that $\delta'_n \le \frac{\delta}{4}$ for large enough $n$ and
\[
\bar w_{\min}> \frac{2\sqrt{2k}\|\nabla F(\bar w)\|_\infty}{\mu_{4k}} + \frac{3G}{\mu_{4k}} \sqrt{\frac{k\log(4p/\delta)}{n}}.
\]
Then after sufficiently large $T\ge\mathcal{O}\left(\frac{L_{4k}}{\mu_{4k}}\log \left(\frac{n\mu_{4k}}{kG\log(n)\log(p/k)}\right)\right)$ rounds of IHT iteration, the following $\bar k$-sparse excess risk bound holds with probability at least $1-\delta$ over the random draw~of~$S$:
\[
F(\tilde w^{(T)}_{S,k}) - F(\bar w) \le \mathcal{O}\left(\frac{G^2(\log^{3}(\rho_k n)+ \log(ep/k))}{\rho_k} \left(\frac{k}{n}\right)  + \frac{\log(1/\delta)}{n} \right).
\]
\end{theorem}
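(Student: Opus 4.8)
The plan is to convert the fast-rate question into a fixed-support strongly convex ERM problem, which is amenable to localized Rademacher complexity analysis. The bridge is a support-inclusion property forced by the strong-signal condition: once $\supp(\bar w)$ sits inside the (now essentially deterministic) support recovered by IHT, the problem restricts to a $k$-dimensional class on which the $\rho_k$-strong convexity of Assumption~\ref{assump:strong_convex_pop} delivers the fast rate.

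First I would establish support inclusion. By the convergence guarantee of Lemma~\ref{lemma:convergence_iht}, after $T$ iterations $\|w^{(T)}_{S,k}-\bar w\|$ is bounded by a geometrically decaying optimization term plus a statistical term of order $\frac{\sqrt{k}\|\nabla F(\bar w)\|_\infty}{\mu_{4k}}+\frac{G}{\mu_{4k}}\sqrt{\frac{k\log(p/\delta)}{n}}$, where the second summand comes from a standard $\ell_\infty$-concentration of $\nabla F_S(\bar w)-\nabla F(\bar w)$ under the $G$-Lipschitz Assumption~\ref{assump:lipschitz}. Choosing $T$ as large as in the statement makes the optimization term negligible, and the strong-signal hypothesis on $\bar w_{\min}$ ensures this total error is strictly below $\bar w_{\min}$. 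Hence every nonzero coordinate of $\bar w$ survives thresholding, yielding $\supp(\bar w)\subseteq J:=\supp(w^{(T)}_{S,k})$ with high probability. This is precisely the support-recovery mechanism alluded to in the remark following Theorem~\ref{thrm:uniform_stability_strong_iht} (cf.~\cite{yuan2018gradient}).

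On this event the analysis collapses onto the fixed index set $J$ of size $k$. Writing $w^*_J:=\argmin_{\supp(w)\subseteq J}F(w)$ and observing that $\bar w$ is feasible for this restricted problem (since $\supp(\bar w)\subseteq J$ and $\bar k\le k$), we get $F(w^*_J)\le F(\bar w)$, so it suffices to control the restricted excess risk $F(\tilde w^{(T)}_{S,k})-F(w^*_J)$, where $\tilde w^{(T)}_{S,k}$ is exactly the empirical minimizer of $F_S$ over $\{w:\supp(w)\subseteq J\}$. I would then invoke the local Rademacher complexity machinery of~\cite{bartlett2005local} on this $k$-dimensional Lipschitz class: the $\rho_k$-strong convexity supplies the Bernstein-type variance-to-mean condition that upgrades the $n^{-1/2}$ rate to $n^{-1}$, the fixed point of the localized complexity scales like $k/n$, and the peeling argument inside the localization contributes the $\log^3(\rho_k n)$ factor. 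To absorb the randomness of $J$, I would apply the bound uniformly over all $\binom{p}{k}$ supports, inflating the rate by $\log\binom{p}{k}=\mathcal{O}(k\log(ep/k))$ and producing the $\log(ep/k)$ term. Finally I would union-bound the three failure events—the RSC/RLS event ($\delta'_n\le\delta/4$), support recovery, and the uniform localization bound—down to $\delta$.

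I expect the localization step to be the main obstacle: one must run the peeling/fixed-point argument for the restricted strongly convex class while simultaneously taking a union bound over exponentially many supports, so that the variance condition furnished by $\rho_k$-strong convexity is preserved uniformly and yields the sharp $k(\log^3(\rho_k n)+\log(ep/k))/n$ dependence rather than a cruder $\sqrt{k/n}$-type bound. A secondary subtlety is coupling the high-probability support-recovery event with the concentration used in the localization, so that the two sources of randomness are handled on a common event without degrading the stated probability.
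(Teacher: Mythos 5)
Your proposal follows essentially the same route as the paper's proof: a support-recovery lemma combining IHT convergence, $\ell_\infty$-concentration of $\nabla F_S(\bar w)$, and the strong-signal condition to get $\supp(\bar w)\subseteq\supp(w^{(T)}_{S,k})$; the reduction $F(w^*_J)\le F(\bar w)$ on the recovered support; a localized Rademacher complexity fast rate made uniform over all $\binom{p}{k}$ supports; and a final union bound over the three failure events. The only cosmetic discrepancy is that in the paper the $\log^3(\rho_k n)$ factor arises from the Dudley entropy-integral bound on the localized complexity (Lemma~\ref{lemma:rademacher_bound}) rather than from peeling, but this does not change the argument.
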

\begin{proof}[Proof in sketch]
Let us denote $w^*_J = \argmin_{\supp(w) \subseteq J} F(w)$ for any fixed indices set $J\subseteq[p]$ with $|J|=k$. A core observation here is that under the strong-signal condition on $\bar w_{\min}$, we can show via Lemma~\ref{lemma:support_recovery_bar_w} (in Appendix~\ref{apdsect:proof_theorem_generalizaion_fast_rate}) that $\supp(\bar w)\subseteq \tilde J:=\supp(w^{(T)}_{S,k})$ holds with high probability, and thus does $F(w^*_{\tilde J})\le F(\bar w)$. As another key ingredient, we then show through Lemma~\ref{lemma:fast_key_1} (in Appendix~\ref{apdsect:proof_theorem_generalizaion_fast_rate}) that $\sup_{J\subseteq[p], |J|=k} \left\{F(w_{S\mid J}) - F(w^*_J)\right\}$ is uniformly upper bounded as $\mathcal{\tilde O}(k/n)$ with high probability. In view of this supporting-set-wise uniform excess risk bound, the desired bound follows directly by noting $F(w^{(T)}_{S,k}) - F(\bar w) \le F(w^{(T)}_{S,k}) - F(w^*_{\tilde J}) = F(w_{S\mid \tilde J}) - F(w^*_{\tilde J})$. A full proof of this result is provided in Appendix~\ref{apdsect:proof_theorem_generalizaion_fast_rate}.
\end{proof}

\newpage

\begin{remark}
Consider the well-specified setting where $\nabla F(\bar w)=0$, i.e., the minimizer of the population risk is truly sparse. In this case, under the signal-strength condition $\bar w_{\min}= \tilde\Omega\left(\sqrt{k/n}\right)$, Theorem~\ref{thrm:generalizaion_fast_rate} suggests that the sparse excess risk bound of IHT decays as fast as $ \mathcal{\tilde O}(k/n)$ with high probability. A benefit of the result in Theorem~\ref{thrm:generalizaion_fast_rate} is that it allows for misspecified sparse models. More precisely, even if the risk $F$ does not have zero gradient at $\bar w$, the sparse excess risk of IHT can still converge as fast as $\mathcal{\tilde O}(k/n)$ provided that $\bar w_{\min}$ significantly outweighes $\tilde\Omega(\sqrt{k}\|\nabla F(\bar w)\|_\infty+\sqrt{k/n})$.
\end{remark}

{It is noteworthy that the RSC/RLS parameters are not involved in the risk bounds of Theorem~\ref{thrm:generalizaion_fast_rate}. As a matter of fact, in this setting the RSC/RLS conditions are imposed only to guarantee the sparsity recovery and stability of IHT. Therefore, the RSC/RLS conditions are not essential to our sparse excess risk analysis provided that the convergence and stability of IHT can alternatively be guaranteed without these conditions. Recently, it has been shown that the IHT-style methods can be extended to non-smooth $\ell_0$-ERM problems via smoothing approximation techniques~\cite{wu2021smoothing}. For gradient descent methods without hard thresholding, it is clear that strong convexity is not necessary to guarantee convergence and stability. However, to our knowledge it still remains an open problem to guarantee the sparsity recovery of IHT without assuming RSC-type conditions.}

\subsection{Fast rates for well-specified sparse learning models}
\label{apdsect:fast_rates_well_specified}

The sparse excess risk bounds derived so far are essentially for misspecified sparse learning models. In this subsection, we further study the risk bounds of IHT in well-specified scenarios where the data is assumed to be generated according to a truly sparse model. Such a statistical treatment is conventional in the theoretical analysis of high-dimensional sparsity recovery approaches~\cite{agarwal2012fast,mei2018landscape,yuan2018gradient}. More specifically, we assume that there exists a $k$-sparse parameter vector $\bar w$ such that, roughly speaking, the population risk function is minimized exactly at $\bar w$ with $\nabla F(\bar w)=0$. Formally, we impose the following assumption on the loss function which basically requires the gradient of loss at $\bar w$ obeys a light tailed distribution.

\begin{assumption}[Sub-Gaussian gradient at the true model]\label{assump:gradient_sub_gaussian}
For each $j\in\{1,...,p\}$, we assume that $\nabla_j \ell(\bar w;\xi)$ is $\sigma^2$-sub-Gaussian with zero mean, namely, $\mathbb{E}_\xi[\nabla_j \ell(\bar w;\xi)]=0$ and there exists a constant $\sigma>0$ such that for any real number $\tau$,
\[
\mathbb{E}_\xi\left[\exp\left\{\tau(\nabla_j \ell(\bar w;\xi))\right\}\right] \le \exp\left\{\frac{\sigma^2\tau^2}{2}\right\}.
\]
\end{assumption}
\begin{remark}
The zero-mean assumption directly implies $\nabla F(\bar w) = 0$. As we will show shortly, this assumption can be fulfilled by the widely used linear regression and logistic
regression models.
\end{remark}
As a side contribution of this work, we present in the following theorem a sharper excess risk bound of IHT for well-specified sparse learning models under less stringent conditions. A proof of this theorem is deferred to Appendix~\ref{apdsect:proof_white_box}.

\begin{theorem}\label{thrm:generalization_barw_iht}
Assume that $\bar w$ is a $\bar k$-sparse vector satisfying Assumption~\ref{assump:gradient_sub_gaussian}. Suppose that Assumptions~\ref{assump:strongly_convex} and~\ref{assump:sparisty_relaxation} hold, and he population risk $F$ is $L$-smooth. Then for any $\delta \in (0, 1- \delta'_n)$, where $\delta'_n\in(0,1)$, and any $\epsilon>0$, IHT with step-size $\eta = \frac{2}{3L_{4k}}$ and sufficiently large $T \ge\mathcal{O}\left(\frac{L_{4k}}{\mu_{4k}}\log \left(\frac{n\mu_{4k}}{k\sigma^2\log(p/\delta)}\right)\right)$ rounds of iteration will output $w^{(T)}_{S,k}$ such that the following sparse excess risk bound holds with probability at least $1 - \delta - \delta'_n$ over $S$,
\[
F(w^{(T)}_{S,k}) - F(\bar w) \le \mathcal{O}\left(\frac{L}{\mu_{4k}^2}\left(\frac{k\sigma^2\log (p/\delta)}{n}\right) \right).
\]
\end{theorem}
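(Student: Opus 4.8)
The plan is to exploit the well-specified structure, in which $\nabla F(\bar w)=0$, to convert the excess risk directly into a parameter estimation error and then invoke the classical IHT convergence machinery; no stability argument is needed here. First I would reduce the excess risk to a squared estimation error via the $L$-smoothness of the population risk. Since $\nabla F(\bar w)=0$ by the zero-mean part of Assumption~\ref{assump:gradient_sub_gaussian}, smoothness gives the one-sided bound
\[
F(w^{(T)}_{S,k}) - F(\bar w) \le \langle \nabla F(\bar w), w^{(T)}_{S,k} - \bar w\rangle + \frac{L}{2}\|w^{(T)}_{S,k} - \bar w\|^2 = \frac{L}{2}\|w^{(T)}_{S,k} - \bar w\|^2,
\]
so it suffices to control the squared estimation error $\|w^{(T)}_{S,k}-\bar w\|^2$.

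Next I would bound the estimation error through the standard IHT convergence guarantee under RSC/RLS with sparsity relaxation. On the event that $F_S$ is $L_{4k}$-smooth and $\mu_{4k}$-strongly convex over $4k$-sparse directions (Assumption~\ref{assump:strongly_convex}, holding with probability $1-\delta'_n$), and with the relaxation $k \ge 32 L_{4k}^2\bar k/\mu_{4k}^2$ of Assumption~\ref{assump:sparisty_relaxation}, Lemma~\ref{lemma:convergence_iht} yields a linear contraction toward $\bar w$ with a residual controlled by the restricted empirical gradient at $\bar w$, of the form
\[
\|w^{(T)}_{S,k} - \bar w\|^2 \le \gamma^{2T}\|\bar w\|^2 + \mathcal{O}\left(\frac{k\,\|\nabla F_S(\bar w)\|_\infty^2}{\mu_{4k}^2}\right),
\]
for some contraction factor $\gamma<1$ depending on $L_{4k}/\mu_{4k}$; here the factor $k$ arises because the relevant gradient is supported on $\mathcal{O}(k)$ coordinates, so its $\ell_2$ norm is at most $\sqrt{k}$ times its $\ell_\infty$ norm. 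Choosing $T \ge \mathcal{O}\left(\frac{L_{4k}}{\mu_{4k}}\log(\frac{n\mu_{4k}}{k\sigma^2\log(p/\delta)})\right)$ as stated drives the geometric term $\gamma^{2T}\|\bar w\|^2$ below the statistical residual (recall $w^{(0)}_{S,k}=0$, so the initial gap is $\|\bar w\|$), leaving $\|w^{(T)}_{S,k}-\bar w\|^2 = \mathcal{O}(k\|\nabla F_S(\bar w)\|_\infty^2/\mu_{4k}^2)$.

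The remaining statistical step is to bound $\|\nabla F_S(\bar w)\|_\infty$. Each coordinate $\nabla_j F_S(\bar w) = \frac{1}{n}\sum_{i=1}^n \nabla_j \ell(\bar w;\xi_i)$ is an average of $n$ i.i.d. zero-mean $\sigma^2$-sub-Gaussian variables (Assumption~\ref{assump:gradient_sub_gaussian}), hence $(\sigma^2/n)$-sub-Gaussian with mean $\nabla_j F(\bar w)=0$. A standard sub-Gaussian tail bound together with a union bound over the $p$ coordinates gives
\[
\|\nabla F_S(\bar w)\|_\infty \le \sigma\sqrt{\frac{2\log(2p/\delta)}{n}}
\]
with probability at least $1-\delta$. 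Substituting this into the estimation error bound and then into the smoothness reduction yields
\[
F(w^{(T)}_{S,k}) - F(\bar w) \le \mathcal{O}\left(\frac{L}{\mu_{4k}^2}\cdot\frac{k\sigma^2\log(p/\delta)}{n}\right),
\]
and a final union bound over the RSC/RLS event (probability $1-\delta'_n$) and the gradient-concentration event (probability $1-\delta$) gives the claimed confidence $1-\delta-\delta'_n$.

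The hard part will be bookkeeping rather than a genuine obstacle: the crux is stating Lemma~\ref{lemma:convergence_iht} in the precise form that exposes the residual as $\sqrt{k}\,\|\nabla F_S(\bar w)\|_\infty/\mu_{4k}$, since the restricted-support $\ell_\infty$-to-$\ell_2$ conversion is exactly where the eventual $k/n$ (rather than $1/n$) rate is born, and verifying that the prescribed $T$ indeed pushes the optimization error below the statistical floor so that the $\epsilon$-slack in the convergence lemma is absorbed. Compared with Theorem~\ref{thrm:generalizaion_fast_rate}, the well-specified assumption $\nabla F(\bar w)=0$ removes both the bias term $\sqrt{k}\,\|\nabla F(\bar w)\|_\infty$ and the need for a support-recovery plus local-Rademacher argument, which is precisely why the proof here is so much more direct.
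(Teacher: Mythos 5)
Your proposal is correct and follows essentially the same route as the paper: reduce the excess risk to $\frac{L}{2}\|w^{(T)}_{S,k}-\bar w\|^2$ using $\nabla F(\bar w)=0$ and $L$-smoothness, bound the estimation error by $\mathcal{O}\left(\sqrt{k}\,\|\nabla F_S(\bar w)\|_\infty/\mu_{4k}\right)$ plus an optimization term driven below the statistical floor by the prescribed $T$, control $\|\nabla F_S(\bar w)\|_\infty$ via sub-Gaussian concentration with a union bound over the $p$ coordinates, and finish with a union bound over the RSC/RLS and concentration events. The only cosmetic difference is packaging: the paper obtains your parameter-error bound in two steps, first function-value convergence via Lemma~\ref{lemma:convergence_iht} and then the restricted-strong-convexity $\ell_\infty$-to-$\ell_2$ conversion via Lemma~\ref{lemma:estimation_error}, which is precisely the ``crux'' step you identified at the end.
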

\begin{remark}
In comparison to the risk bound established in Theorem~\ref{thrm:generalizaion_fast_rate} that allows for misspecified models, the above fast rate of convergence for well-specified models is sharper in the sense that it is not dependence on $\log(n)$-factors and it is valid without needing to assume Lipschitz-loss and strong-signal conditions.
\end{remark}
\begin{remark}
{We comment on the tightness of the excess risk bounds in Theorem~\ref{thrm:generalization_barw_iht} in the minimax sense. For well-specified sparse linear regression models with RSC, it has been shown~\cite{zhang2014lower} that up to logarithmic factors, the $\mathcal{\tilde O} \left(n^{-1}k\log(p)\right)$ squared estimation error bound is minimax optimal for polynomial time sparse estimators such as Lasso and IHT. This immediately implies that the same bound should be minimax optimal for the excess risk of IHT provided that the population function $F$ is strongly convex.}
\end{remark}

We next showcase how to apply the bounds in Theorem~\ref{thrm:generalization_barw_iht} to the widely used sparse linear regression and logistic regression models.

\textbf{Implication for sparse linear regression.} We assume the samples $S=\{x_i, y_i\}$ obey the linear model $y_i = \bar w^\top x_i + \varepsilon_i$ where $\bar w$ is a $k$-sparse parameter vector, $x_i$ are drawn i.i.d. from a zero-mean sub-Gaussian distribution with covariance matrix $\Sigma\succ 0$, and $\varepsilon_i$ are $n$ i.i.d. zero-mean sub-Gaussian random variables with parameter $\sigma^2$. The sparsity-constrained least squares regression model is then written by
\[
\min_{\|w\|_0\le k} F_S(w)= \frac{1}{2n}\sum_{i=1}^n\|y_i - w^\top x_i\|^2.
\]
We present the following corollary as a consequence of Theorem~\ref{thrm:generalization_barw_iht} to the considered linear regression model with bounded design. See Appendix~\ref{apdsect:proof_corol_generalization_barw_linearreg} for its proof.

\begin{restatable}{corollary}{ERMWhiteBoxBoundsLinaer}\label{corol:generalization_barw_linearreg}
Assume that $\varepsilon_i$ are i.i.d. zero-mean $\sigma^2$-sub-Gaussian and $x_i$ are i.i.d. zero-mean sub-Gaussian distribution with covariance matrix $\Sigma\succ 0$ and $\Sigma_{jj}\le 1$. Then there exist universal constants $c_0, c_1>0$ such that when $n \ge \frac{4kc_1 \log (p)}{\lambda_{\min}(\Sigma)}$, for any $\delta \in (0, 1- \exp\{-c_0 n\})$, with probability at least $1 - \delta - \exp\{-c_0 n\}$ the sparse excess risk of IHT with step-size $\eta=\mathcal{O}\left(\frac{1}{\lambda_{\max}(\Sigma)}\right)$ is bounded as
\[
F(w^{(T)}_{S,k}) - F(\bar w) \le \mathcal{O}\left(\frac{\lambda_{\max}(\Sigma)\sigma^2\log (p/\delta)}{\lambda^2_{\min}(\Sigma)}\left(\frac{k}{n}\right)\right)
\]
after $T\ge\mathcal{O}\left(\frac{\lambda_{\max}(\Sigma)}{\lambda_{\min}(\Sigma)}\log \left(\frac{n \lambda_{\min}(\Sigma)}{k\sigma^2\log(p/\delta)}\right)\right)$ rounds of iteration.
\end{restatable}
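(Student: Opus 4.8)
The plan is to treat Corollary~\ref{corol:generalization_barw_linearreg} as a direct instantiation of Theorem~\ref{thrm:generalization_barw_iht}: I would verify that the sparsity-constrained least-squares model meets the hypotheses of that theorem (Assumption~\ref{assump:strongly_convex}, Assumption~\ref{assump:sparisty_relaxation}, and Assumption~\ref{assump:gradient_sub_gaussian}) with explicit constants, then read off the advertised bound by substituting $L_{4k}\asymp\lambda_{\max}(\Sigma)$, $\mu_{4k}\asymp\lambda_{\min}(\Sigma)$, and $L=\lambda_{\max}(\Sigma)$. The quadratic structure makes the population and empirical objects transparent: the empirical Hessian is the fixed matrix $\hat\Sigma=\frac1n\sum_i x_i x_i^\top$, independent of $w$, and a short computation gives $F(w)=\frac12(w-\bar w)^\top\Sigma(w-\bar w)+\frac{\sigma^2}{2}$, so the population risk is exactly $\lambda_{\min}(\Sigma)$-strongly convex and $\lambda_{\max}(\Sigma)$-smooth, pinning down $L$ at once. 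Assumption~\ref{assump:sparisty_relaxation} is then inherited by choosing the sparsity relaxation consistent with the restricted condition number, which under these identifications is the standard requirement $k\gtrsim(\lambda_{\max}(\Sigma)/\lambda_{\min}(\Sigma))^2\,\bar k$.

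For the RSC/RLS requirement (Assumption~\ref{assump:strongly_convex}) I would invoke a standard restricted-eigenvalue concentration result for sub-Gaussian designs: with probability at least $1-\exp\{-c_0 n\}$, every $4k$-sparse vector $v$ satisfies $\frac12\lambda_{\min}(\Sigma)\|v\|^2\le v^\top\hat\Sigma v\le 2\lambda_{\max}(\Sigma)\|v\|^2$ provided $n\gtrsim \frac{k\log p}{\lambda_{\min}(\Sigma)}$. This is precisely where the sample-size threshold $n\ge\frac{4kc_1\log p}{\lambda_{\min}(\Sigma)}$ and the failure probability $\delta'_n=\exp\{-c_0 n\}$ enter; because the loss is quadratic, restricted eigenvalues of $\hat\Sigma$ are the RSC/RLS constants verbatim, giving $\mu_{4k}=\Theta(\lambda_{\min}(\Sigma))$ and $L_{4k}=\Theta(\lambda_{\max}(\Sigma))$, which also fixes $\eta=\frac{2}{3L_{4k}}=\mathcal{O}(1/\lambda_{\max}(\Sigma))$ and the iteration budget $T=\mathcal{O}(\frac{\lambda_{\max}(\Sigma)}{\lambda_{\min}(\Sigma)}\log(\cdots))$.

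The delicate step --- and the one I expect to be the main obstacle --- is verifying Assumption~\ref{assump:gradient_sub_gaussian}. At $\bar w$ the per-sample gradient is $\nabla\ell(\bar w;x_i,y_i)=-\varepsilon_i x_i$, whose coordinates $-\varepsilon_i x_{ij}$ are products of two independent sub-Gaussians and are therefore only sub-exponential, not sub-Gaussian as the assumption literally demands. I would circumvent this by conditioning on the design: given $X$, each coordinate $\nabla_j F_S(\bar w)=-\frac1n\sum_i\varepsilon_i x_{ij}$ is a weighted sum of independent zero-mean $\sigma^2$-sub-Gaussian noises and hence $\frac{\sigma^2}{n^2}\sum_i x_{ij}^2$-sub-Gaussian; combining this with the concentration $\frac1n\sum_i x_{ij}^2\le 2\Sigma_{jj}\le 2$ on the same high-probability design event and a union bound over $j\in[p]$ yields $\|\nabla F_S(\bar w)\|_\infty=\mathcal{O}(\sigma\sqrt{\log(p/\delta)/n})$ with probability $1-\delta$. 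This is exactly the gradient-concentration conclusion that the proof of Theorem~\ref{thrm:generalization_barw_iht} extracts from Assumption~\ref{assump:gradient_sub_gaussian}, so the theorem's argument goes through unchanged once this bound is supplied, the zero-mean property being immediate from the independence of $\varepsilon_i$ and $x_i$.

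Finally I would assemble the pieces: substituting the constants into the bound $\mathcal{O}\big(\frac{L}{\mu_{4k}^2}\cdot\frac{k\sigma^2\log(p/\delta)}{n}\big)$ of Theorem~\ref{thrm:generalization_barw_iht} produces $\mathcal{O}\big(\frac{\lambda_{\max}(\Sigma)\sigma^2\log(p/\delta)}{\lambda_{\min}^2(\Sigma)}\cdot\frac{k}{n}\big)$, and a union bound over the RSC/RLS event (failure $\exp\{-c_0 n\}$) and the gradient event (failure $\delta$) gives the stated overall success probability $1-\delta-\exp\{-c_0 n\}$. Apart from the sub-exponential subtlety above, the only bookkeeping is to ensure that the $n\gtrsim k\log p/\lambda_{\min}(\Sigma)$ requirement from the restricted-eigenvalue step is absorbed into the stated threshold and is compatible with the regime in which the sub-exponential tail of the gradient already behaves sub-Gaussianly.
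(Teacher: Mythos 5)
Your proposal is correct, and at the top level it mirrors the paper's own proof: instantiate Theorem~\ref{thrm:generalization_barw_iht} by establishing RSC/RLS of $F_S$ through a restricted-eigenvalue concentration bound for sub-Gaussian designs (the paper uses Lemma~\ref{lemma:strong_convexity_subgaussian}, giving $\mu_{4k}\ge\frac14\lambda_{\min}(\Sigma)$ and $L_{4k}=\mathcal{O}(\lambda_{\max}(\Sigma))$ under $n\ge\frac{4kc_1\log p}{\lambda_{\min}(\Sigma)}$), identify $L=\lambda_{\max}(\Sigma)$ from the closed form of $F$, and substitute into the theorem's bound with a union bound over the two failure events. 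Where you genuinely diverge is the verification of Assumption~\ref{assump:gradient_sub_gaussian}: the paper simply asserts that $\nabla_j\ell(\bar w;\xi_i)=-\varepsilon_i[x_i]_j$ is $\sigma^2$-sub-Gaussian ("it can be shown"), and then feeds this into Lemma~\ref{lemma:sub_Gaussian_bounds}. As you correctly observe, this assertion is false in general --- the product of two independent unbounded sub-Gaussians (e.g.\ two Gaussians) is only sub-exponential --- so the paper's verification of Assumption~\ref{assump:gradient_sub_gaussian} has a genuine gap in the linear-regression setting, where no boundedness of $x_i$ is assumed (contrast with Corollary~\ref{corol:generalization_barw_logisticreg_fast}, where $\|x_i\|\le 1$). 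Your conditioning-on-design argument repairs this: given $X$, the coordinate $\nabla_j F_S(\bar w)=-\frac1n\sum_i\varepsilon_i x_{ij}$ is sub-Gaussian with variance proxy $\frac{\sigma^2}{n^2}\sum_i x_{ij}^2$, and on the high-probability design event $\frac1n\sum_i x_{ij}^2\le 2\Sigma_{jj}\le 2$ (a Bernstein bound, union over $j$, absorbed into $\exp\{-c_0n\}$ since $\lambda_{\min}(\Sigma)\le\min_j\Sigma_{jj}\le 1$ forces $n\gtrsim\log p$) one recovers exactly the conclusion $\|\nabla F_S(\bar w)\|_\infty=\mathcal{O}(\sigma\sqrt{\log(p/\delta)/n})$ that Lemma~\ref{lemma:sub_Gaussian_bounds} supplies; inspection of the proof of Theorem~\ref{thrm:generalization_barw_iht} shows this empirical gradient bound is all that is ever used, so the downstream argument is unchanged. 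In short: the paper's route is shorter but rests on an unjustified (and generally untrue) sub-Gaussianity claim, while your route costs one extra design-concentration event and in exchange makes the corollary rigorous, implicitly showing that Assumption~\ref{assump:gradient_sub_gaussian} could be weakened to a condition on $\|\nabla F_S(\bar w)\|_\infty$ alone.
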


\textbf{Implication for sparse logistic regression.} Let us further consider a well-specified binary logistic regression model in which the relation between the random feature vector $x \in \mathbb{R}^p$ and its associated random binary label $y \in \{-1,+1\}$ is determined by the conditional probability $\mathbb{P}(y|x; \bar w) = \frac{\exp (2y\bar w^\top x)}{1+\exp (2y\bar w^\top x)}$, where $\bar w$ is a $k$-sparse parameter vector. Then we have the following corollary as an application of Theorem~\ref{thrm:generalization_barw_iht} to this well-specified sparse logistic regression model. A proof of this result is provided in Appendix~\ref{apdsect:proof_corol_generalization_barw_linearreg}.

\begin{corollary}\label{corol:generalization_barw_logisticreg_fast}
Assume that $x_i$ are i.i.d. zero-mean sub-Gaussian distribution with covariance matrix $\Sigma\succ 0$ and $\Sigma_{jj}\le \frac{\sigma^2}{32}$. Suppose that $\|x_i\|\le 1$ for all $i$ and $\mathcal{W}\subset \mathbb{R}^p$ is bounded by $R$. Then there exist universal constants $c_0, c_1>0$ such that when $n \ge \frac{4kc_1 \log (p)}{\lambda_{\min}(\Sigma)}$, for any $\delta \in (0, 1- \exp\{-c_0 n\})$, with probability at least $1 - \delta - \exp\{-c_0 n\}$ the sparse excess risk of IHT with step-size $\eta=\mathcal{O}(1)$ is upper bounded by
\[
F(w^{(t)}_{S,k}) - F(\bar w) \le \mathcal{O} \left(\frac{\exp(R)\sigma^2\log(p/\delta)}{\lambda^2_{\min}(\Sigma)}\left(\frac{k}{n}\right)\right)
\]
after $T\ge\mathcal{O}\left(\frac{\exp(R)}{\lambda_{\min}(\Sigma)}\log \left(\frac{n \lambda_{\min}(\Sigma)}{k\exp(R)\sigma^2\log(p/\delta)}\right)\right)$ rounds of iteration.
\end{corollary}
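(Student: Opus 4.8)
The plan is to instantiate Theorem~\ref{thrm:generalization_barw_iht} for the well-specified logistic model by verifying its three ingredients---the sub-Gaussian gradient condition of Assumption~\ref{assump:gradient_sub_gaussian}, the restricted curvature condition of Assumption~\ref{assump:strongly_convex}, and the population smoothness---and then reading off the resulting constants. Recall that the per-sample loss $\ell(w;\xi)=\log(1+\exp(-2yw^\top x))$ has gradient $\nabla\ell(w;\xi)=-2y(1-s(2yw^\top x))x$. First I would verify Assumption~\ref{assump:gradient_sub_gaussian}. Conditioning on $x$ and using the well-specified law $\mathbb{P}(y=1\mid x)=s(2\bar w^\top x)$, a direct two-point computation gives $\mathbb{E}[\nabla_j\ell(\bar w;\xi)\mid x]=0$, so each coordinate of the gradient at $\bar w$ is zero-mean (in particular $\nabla F(\bar w)=0$). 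Since $|{-2y(1-s(\cdot))}|\le 2$, the coordinate $\nabla_j\ell(\bar w;\xi)$ is a bounded-by-$2$ multiple of the sub-Gaussian variable $x_j$, so its sub-Gaussian proxy scales with $\mathbb{E}[x_j^2]=\Sigma_{jj}$; the normalization $\Sigma_{jj}\le\sigma^2/32$---the constant $32$ absorbing the factor $2^2$ and the sub-Gaussian-norm constant---is exactly what promotes this to the clean statement that $\nabla_j\ell(\bar w;\xi)$ is $\sigma^2$-sub-Gaussian.

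Next I would establish Assumption~\ref{assump:strongly_convex} through the Hessian identity $\nabla^2 F_S(w)=X\Lambda(w)X^\top/n$ with diagonal weights $\Lambda_{ii}=4s(2y_iw^\top x_i)(1-s(2y_iw^\top x_i))\in(0,1]$. For the smoothness side, $\Lambda_{ii}\le 1$ gives $\nabla^2 F_S(w)\preceq XX^\top/n$, so a restricted largest-eigenvalue bound for the sub-Gaussian sample covariance yields $L_{4k}=\mathcal{O}(\lambda_{\max}(\Sigma))=\mathcal{O}(1)$ (using $\mathrm{tr}(\Sigma)=\mathbb{E}\|x\|^2\le 1$), consistent with $\eta=\tfrac{2}{3L_{4k}}=\mathcal{O}(1)$. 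For the curvature side, on the bounded domain $\|w\|\le R$, $\|x\|\le 1$ one has $|2yw^\top x|\le 2R$ and hence $\Lambda_{ii}\ge\exp(-\Theta(R))$; combining this with the restricted-eigenvalue lower bound $\lambda_{\min}\big(\tfrac{1}{n}\sum_i x_{i,J}x_{i,J}^\top\big)\ge\tfrac{1}{2}\lambda_{\min}(\Sigma)$ over all $|J|\le 4k$, which holds with probability at least $1-\exp\{-c_0 n\}$ once $n\ge\tfrac{4kc_1\log p}{\lambda_{\min}(\Sigma)}$, gives $\mu_{4k}\ge c\exp(-\Theta(R))\lambda_{\min}(\Sigma)$ on that event and identifies the failure probability $\delta'_n=\exp\{-c_0 n\}$. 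The population smoothness follows likewise from $\nabla^2 F(w)=\mathbb{E}[\Lambda xx^\top]\preceq\Sigma$, so $L\le\lambda_{\max}(\Sigma)=\mathcal{O}(1)$, while Assumption~\ref{assump:sparisty_relaxation} is inherited directly as a hypothesis.

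Finally I would substitute these constants into Theorem~\ref{thrm:generalization_barw_iht}. With $L=\mathcal{O}(\lambda_{\max}(\Sigma))$ and $\mu_{4k}=\Omega(\exp(-\Theta(R))\lambda_{\min}(\Sigma))$, the ratio $L/\mu_{4k}^2$ is of order $\exp(\Theta(R))/\lambda_{\min}^2(\Sigma)$, written $\exp(R)/\lambda_{\min}^2(\Sigma)$ with the constant in the exponent suppressed as in the statement; the theorem's bound $\tfrac{L}{\mu_{4k}^2}\cdot\tfrac{k\sigma^2\log(p/\delta)}{n}$ then becomes the claimed $\mathcal{O}\big(\tfrac{\exp(R)\sigma^2\log(p/\delta)}{\lambda_{\min}^2(\Sigma)}\cdot\tfrac{k}{n}\big)$, and the same substitution turns the theorem's iteration requirement into the stated lower bound on $T$, with total failure probability $\delta+\delta'_n=\delta+\exp\{-c_0 n\}$. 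I expect the main obstacle to be the curvature verification in the second step: unlike linear regression, whose sample Hessian equals $XX^\top/n$ exactly, the logistic Hessian carries the data-dependent weights $\Lambda_{ii}$, so one must simultaneously lower-bound these weights uniformly over the bounded parameter region---the source of the $\exp(R)$ factor---and control the restricted minimum eigenvalue of the sub-Gaussian design, the step that pins down both the sample-size threshold $n\gtrsim k\log(p)/\lambda_{\min}(\Sigma)$ and the $\exp\{-c_0 n\}$ failure probability.
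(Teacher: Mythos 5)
Your proposal is correct and follows essentially the same route as the paper's proof: verify Assumption~\ref{assump:gradient_sub_gaussian} by the conditional two-point computation (the paper converts the tail bound $\mathbb{P}(|\nabla_j\ell(\bar w;\xi)|\ge t)\le \mathbb{P}(|[x]_j|\ge t/2)$ to a $\sigma^2$-sub-Gaussian MGF via \cite[Lemma 1.5]{rigollet201518}, matching your ``bounded-by-$2$ multiple'' argument), establish $\mu_{4k}\ge \lambda_{\min}(\Sigma)/\exp(4R)$ from the weighted Hessian identity together with Lemma~\ref{lemma:strong_convexity_subgaussian} and $\|w\|_1\le\sqrt{k}\|w\|$, note $\mathcal{O}(1)$-smoothness from $\|x_i\|\le 1$, and substitute into Theorem~\ref{thrm:generalization_barw_iht}. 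The only deviations are cosmetic: the paper verifies $F_S$'s smoothness directly (almost surely from $\Lambda_{ii}\le 1$ and $\|x_i\|\le 1$) rather than through a restricted largest-eigenvalue bound, and its proof states the sample-size threshold with an extra $\sigma^2$ factor, a constant-level discrepancy that your boundedness observation equally resolves.
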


\section{Discussions}
\label{sect:comparison}

In this section, we discuss the connections and differences between our results established in the previous sections and a number of existing risk bounds for the $\ell_0$-ERM and Lasso-type estimators. For each estimator, we distinguish the comparison in two settings of fast and slow convergence rates respectively.

\subsection{Comparison with the risk bounds for $\ell_0$-ERM}

We begin with comparing our results with existing risk bounds for the $\ell_0$-ERM.

\textbf{Fast rates.} Given that the $\ell_0$-ERM estimator is exactly solved, an essentially $\mathcal{\tilde O}(n^{-1}k\log(p))$ sparse excess risk bound has been established for its output over bounded liner prediction classes~\cite[Example 2]{foster2019orthogonal}. That bound, however, is more of pure theoretical interest than practical usage due to the computational hardness of $\ell_0$-ERM. Contrastingly, Theorem~\ref{thrm:generalizaion_fast_rate} shows that an about the same fast rate of convergence can also be derived for IHT which is computationally tractable and efficient for sparsity recovery.

\textbf{Slow rates.} We comment on the difference between the $ \mathcal{\tilde O} \left(n^{-1/2}\sqrt{k\log(n)\log(p)}\right)$ rate in Theorem~\ref{thrm:uniform_stability_iht} and a comparable result established via uniform concentration bounds~\cite[Theorem 1]{chen2018best} for sparsity constrained binary prediction problems. First, our bound holds for IHT as a computationally tractable estimator for $\ell_0$-ERM while that bound was established for the $\ell_0$-ERM itself. Second, regarding the regularization condition, the result in~\cite[Theorem 1]{chen2018best} requires $p\vee n \gtrsim  k^8$ which could be unrealistic even when $k$ is moderate in real problems. In contrast, our analysis does not impose such fairly stringent conditions on data scale.

\subsection{Comparison with the risk bounds for Lasso estimators}

We further compare the excess risk bounds of IHT to those of the $\ell_1$-regularized ERM (Lasso) estimator~\cite{tibshirani1996regression,wainwright2009sharp}
\[
w^{\ell_1}_{S,\lambda}:=\argmin_{w \in \mathcal{W}} F_S(w)+ \lambda \|w\|_1
\]
which is popularly used as a convex surrogate of the $\ell_0$-ERM~\cite{tibshirani1996regression,wainwright2009sharp} estimator.

\textbf{Fast rates.} For high-dimensional generalized linear models (GLMs), the oracle inequality in~\cite[Theorem 2.1]{van2008high} suggests that if the target solution $w^*=\argmin_{w \in \mathcal{W}} F(w)$ is exactly $k$-sparse, then it holds with high probability that
\[
F(w^{\ell_1}_{S,\lambda}) - F(w^*) \le \mathcal{\tilde O}\left(\frac{k\log(p)}{n}\right)
\]
under $\lambda\asymp n^{-1/2}\sqrt{\log(p)}$. {Also for the well-specified sparse GLMs, similar fast rates of convergence can be implied under RSC/RLS conditions by the parameter estimation error bounds established in~\cite[Corollary 2]{negahban2012unified}. In Theorem~\ref{thrm:generalization_barw_iht}, we have shown that the $\mathcal{\tilde O}(n^{-1}k\log(p))$ rate is also possible for IHT in well-specified models with sub-Gaussian noises, which is applicable to models beyond GLMs.} In comparison to these fast rates for well-specified sparsity models, the fast rate established in Theorem~\ref{thrm:generalizaion_fast_rate} is different in the sense that its corresponding target solution is certain sparse minimizer of $F$ instead of its global minimizer, and thus the result applies to misspecified sparsity models.

\textbf{Slow rates.} For well-specified linear regression models with the $\ell_1$-norm of parameter vector upper bounded by $K$, it has been shown in~\cite{chatterjee2013assumptionless} that the expected excess risk of a constrained Lasso estimator scales as $\mathcal{\tilde O}(n^{-1/2}K^2\sqrt{\log(p)})$ under mild conditions on the design matrix. To compare it with the $\mathcal{\tilde O}(n^{-1/2}\sqrt{k\log(n)\log(p)})$ ($\mathcal{\tilde O}(n^{-1/2}\sqrt{\log(n)})$ ) bound of IHT established in Theorem~\ref{thrm:uniform_stability_iht} (Theorem~\ref{thrm:uniform_stability_strong_iht}), we remark that 1) these two rates are comparable (up to logarithmic factors) when $K^2\asymp \sqrt{k}$ ($K\asymp 1$); 2) the former holds in expectation while the latter (ours) holds in high probability; and 3) most importantly, our bound is applicable to a broader range of learning problems beyond well-specified sparsity models, yet at the price of imposing more stringent assumptions on the risk function.

\section{Simulation Study}
\label{sect:simulation_study}

In this section, we carry out a set of numerical experiments on synthetic sparse logistic regression and least squared regression tasks to verify the IHT generalization theory presented in Section~\ref{sect:stability_risk_bounds} and Section~\ref{sect:stability_risk_bounds_fast}. Throughout our numerical study, we initialize $w^{(0)} = 0$ for IHT and replicate each individual experiment $10$ times over the random generation of training data for generalization performance evaluation.

\subsection{On the Scaling Law of Sparsity Level}
\label{apdsect:simulation_study_theorem_stability_iht}

We first demonstrate the scaling law of sparsity level for the sparse excess risk of IHT in a well-specified sparse logistic regression task.

\textbf{Experiment setup.} We consider the binary logistic regression model with loss function $\ell(w; x_i, y_i) = \log\left(1+\exp(-y_{i}w^\top x_{i})\right)$. In this set of simulation study, each data feature $x_i$ is sampled from standard multivariate Gaussian distribution and its binary label $y_i \in \{-1,+1\}$ is determined by the conditional probability $\mathbb{P}(y_i|x_i; \bar w) = \frac{\exp (2y_i \bar w^\top x)}{1+\exp (2y_i \bar w^\top x_i)}$ with a $\bar k$-sparse parameter vector $\bar{w}$. In such a well-specified setting, we test with feature dimension $p=1000, \bar k=50$ and aim to show the impact of varying sparsity level $k/\bar k\in [1, 4]$ and sample size $n/p\in \{2, 5,10\}$ on the actual sparse excess risk of IHT. Since for logistic loss the population risk function $F$ has no close-form expression, we approximate the population value $F(w)$ by its empirical version with sufficient sampling. In order to compute the excess risk, we need to estimate the optimal population risk which in view of the proof of Corollary~\ref{corol:generalization_barw_logisticreg} is given by $\min_{\|w\|_0\le k} F(w)=F(\bar w)$ for any $k \ge \bar k$.

\textbf{Numerical results.} The evolving curves of sparse excess risk as functions of sparsity level under different sample sizes are shown in Figure~\ref{fig:excessrisk_sparsity_white_logistic}. For each fixed sample size $n$, we can see that the sparse excess risk scales roughly linearly with respect to $k$. For each fixed sparsity level $k$, the sparse excess risk decreases as $n$ increases. These observations are consistent with the fast rates established in Theorem~\ref{thrm:generalizaion_fast_rate} and Theorem~\ref{thrm:generalization_barw_iht} which are applicable to the considered well-specified sparse binary logistic regression problem.

\begin{figure}[h!]
\centering
\includegraphics[width=4.7in]{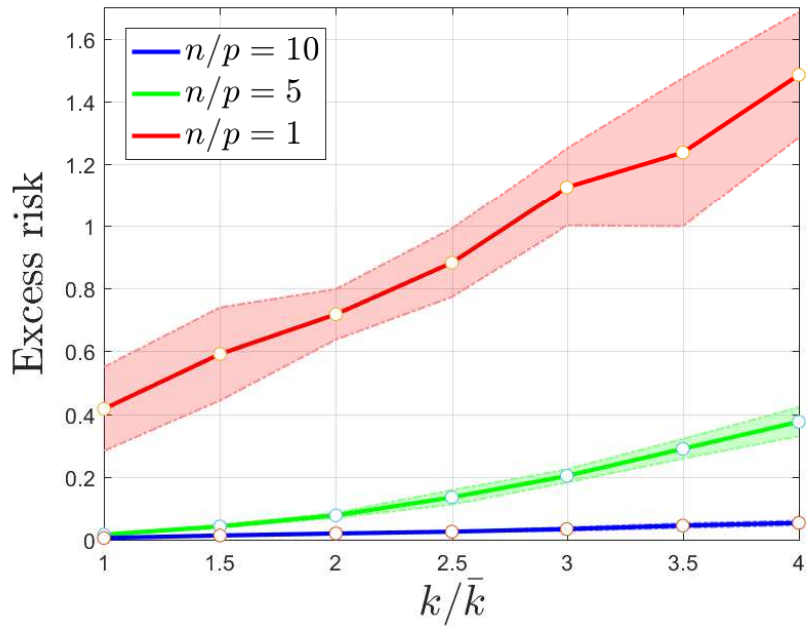}
\caption{Sparse logistic regression: scaling of sparse excess risk with respect to the sparsity level of IHT.}
\label{fig:excessrisk_sparsity_white_logistic}
\end{figure}

\subsection{On the IHT-Stability Theory}
\label{apdsect:simulation_study_theorem_strong_iht}

Further, we show some numerical evidences to support our IHT stability and generalization theory as presented in Theorem~\ref{thrm:uniform_stability_strong_iht}. The main message conveyed by this result is that when sample size is sufficiently large, the IHT-stability of the population risk $F$ plays an important role for obtaining tighter risk bounds. To confirm this theoretical prediction, we consider the linear regression example presented in~\eqref{equat:linear_regression_close_form}. {In this case, given $w^{(0)}=0$ and any step-size $\eta\in (0,1)$, we have shown in Section~\ref{ssect:sharper_analysis_support_stability} that the population risk $F$ is $(\varepsilon_{\bar k},\eta, T, w^{(0)})$-IHT-stable with $\varepsilon_{\bar k}=\eta\bar \varepsilon$ where $\bar\varepsilon$ represents the gap between $|[\tilde w]_{(\bar k)}|$ and $|[\tilde w]_{(\bar k+1)}|$. In our experiment, based on a fixed zero-mean Gaussian vector $\hat w$ with top-$\bar k$ index set $J$, we construct $\tilde w$ as $[\tilde w]_j =  [\hat w]_j + \bar\varepsilon \sign ([\hat w]_j) $ for $j\in J$ and $[\tilde w]_j =  [\hat w]_j $ otherwise, such that the gap between the top $\bar k$ entries and the rest ones of $\tilde w$ is at least $\bar\varepsilon$. We test with  feature dimension $p=1000$, $\bar k=100$ and $\eta=0.5$. Figure~\ref{fig:stability_white_linear} shows the evolving curves of excess risk under varying $\bar\varepsilon\in (0,1)$ and $n/p\in \{ 1, 5, 10\}$. For better visualization, the figure is presented in a semi-log layout with y-axis representing the logarithmic scale of excess risk. These curves indicate that for each fixed sample size $n$, better generalization performance can be achieved under relatively larger $\bar\varepsilon$, which is consistent with the condition of Theorem~\ref{thrm:uniform_stability_strong_iht} about the scaling law of sample size $n$ with respect to $\bar\varepsilon$.}

\begin{figure}[h!]
\centering
\includegraphics[width=4.9in]{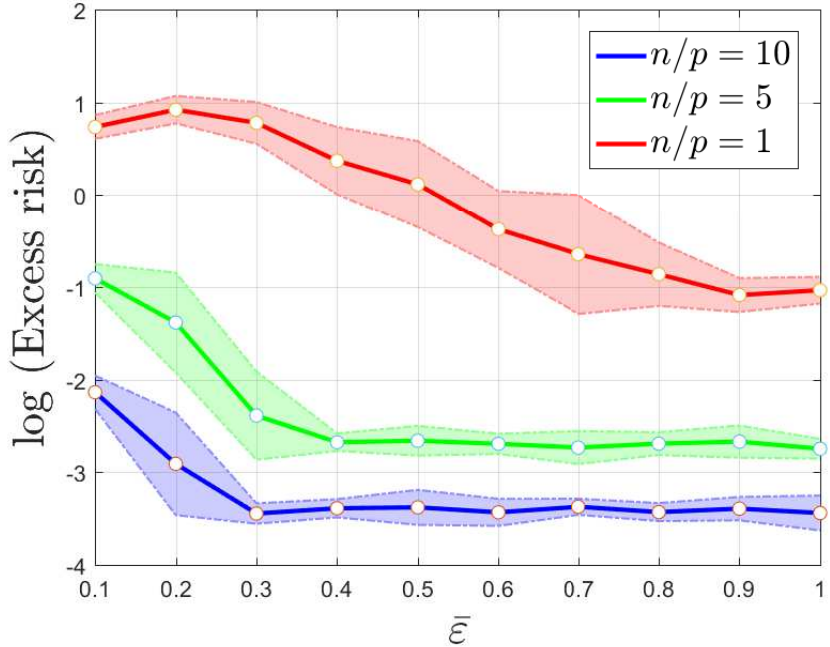}
\caption{Sparse linear regression: impact of IHT-stability on the sparse excess risk of IHT.\label{fig:stability_white_linear}}
\end{figure}

\section{Conclusions}
\label{sect:conclusion}

In this paper, we established a set of novel sparse excess risk bounds for the widely applied IHT method under the notion of unform stability. Specifically, we have shown that the sparse excess risk of IHT converges at the rate of $\mathcal{\tilde O}(n^{-1/2}\sqrt{k\log(n)\log(p)})$ with high probability under natural regularity conditions and the rate can be enhanced to $\mathcal{\tilde O}(n^{-1/2}\sqrt{\log(n)})$ provided that the population risk is stable with respect to IHT iteration. Under additional strong-signal conditions, we further proved faster rates of order $\mathcal{\tilde O}(n^{-1}k(\log^3(n)+\log(p)))$ for strongly convex risk minimization problems. These sparse excess risk bounds immediately give rise to oracle excess risk inequalities of IHT over cardinality constraint. As a side contribution, we have shown that a fast rate of $\mathcal{\tilde O}(n^{-1}k\log(p))$ applies to IHT for well-specified sparse learning with sub-Gaussian noises. These results confirm that the strong excess risk bounds recently established for $\ell_0$-ERM~\cite{chen2018best,chen2020binary,foster2019statistical} can indeed be attained by its approximate solver IHT in a computationally tractable and efficient way.

We expect that the theory developed in this paper will fuel future investigation on the generalization bounds of IHT for non-convex loss functions such as those used in the common practice of deep neural networks pruning~\cite{frankle2019lottery,han2016deep}, yet rarely studied in theory~\cite{sun2019optimization}. In view of the standard $\mathcal{\tilde O}\left(n^{-1/2}\sqrt{p}\right)$ uniform convergence bound for dense models (see, e.g.,~\cite{shalev2009stochastic}), it is more or less straightforward to derive a generalization bound of order $\mathcal{\tilde O} \left(n^{-1/2}\sqrt{k\log(p)}\right)$ for IHT which is applicable to the non-convex regime. In view of the recent progresses achieved towards understanding the benefit of overparametrization for the optimization and generalization of gradient-based deep learning algorithms~\cite{allen2019learning,oymak2019overparameterized}, it is interesting to further study the impact of overparametrization on the generalization performance of IHT-style algorithms for deep learning with sparsity.

\bibliographystyle{plain}
\bibliography{ref}

\appendix

\makeatletter
\renewcommand\theequation{A.\@arabic\c@equation }
\makeatother \setcounter{equation}{0}

\section{Some auxiliary lemmas}
\label{apdsect:auxiliary lemmas}

This section is devoted to presenting a set of preliminary results that are useful in the proof of our main results.

\emph{Generalization bounds for uniformly stable algorithms.} To prove the stability implied risk bounds, we need the following lemma from~\cite[Corollary 8]{bousquet2020sharper} which gives a near-tight generalization error bound for uniformly stable learning algorithms.
\begin{lemma}[Generalization bound implied by uniform stability]\label{lemma:uniform_stability}
Let $A: \mathcal{X}^n \mapsto \mathcal{W}$ be a learning algorithm that has uniform stability $\gamma$  with respect to a loss function $\ell(\cdot;\cdot) \le M$. Then for any $\delta\in(0,1)$, the following generalization bound holds with probability at least $1-\delta$ over $S$:
\[
\left|\mathbb{E}_{\xi}\left[\ell(A(S);\xi)\right] - \frac{1}{n}\sum_{i=1}^n \ell(A(S), \xi_i)\right| \le \mathcal{O}\left(\gamma\log(n)\log\left(\frac{1}{\delta}\right) + M\sqrt{\frac{\log(1/\delta)}{n}}\right).
\]
\end{lemma}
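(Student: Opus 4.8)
The plan is to decompose the generalization gap into an expectation (bias) piece and a fluctuation piece, bound the bias directly from stability, and control the fluctuation by a refined moment/concentration argument. Write $g(S) := \mathbb{E}_\xi[\ell(A(S);\xi)] - \frac1n\sum_{i=1}^n \ell(A(S);\xi_i)$ and split $g(S) = \mathbb{E}_S[g(S)] + \bigl(g(S) - \mathbb{E}_S[g(S)]\bigr)$. The first summand will be absorbed into the leading $\gamma$ of the target bound, and the whole difficulty concentrates in the second.

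First I would bound the bias $\mathbb{E}_S[g(S)]$ by a ghost-sample renaming argument. Introduce an independent copy $S'=\{\xi_i'\}$; by i.i.d.\ symmetry $\mathbb{E}_S\mathbb{E}_\xi[\ell(A(S);\xi)] = \mathbb{E}[\ell(A(S^{(i)});\xi_i)]$ for each $i$, where $S^{(i)}$ replaces $\xi_i$ by $\xi_i'$, while $\mathbb{E}_S[\frac1n\sum_i\ell(A(S);\xi_i)] = \mathbb{E}_S[\ell(A(S);\xi_i)]$. Subtracting and averaging over $i$, uniform stability gives $|\ell(A(S^{(i)});\xi_i) - \ell(A(S);\xi_i)|\le\gamma$, hence $|\mathbb{E}_S[g(S)]|\le\gamma$.

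The main obstacle is the fluctuation $g(S)-\mathbb{E}_S[g(S)]$. A naive application of McDiarmid's bounded-difference inequality is too lossy: replacing one sample changes $g$ by at most $O(\gamma + M/n)$, so McDiarmid yields a deviation of order $(\gamma + M/n)\sqrt{n\log(1/\delta)} = \gamma\sqrt{n}\,\sqrt{\log(1/\delta)} + (M/\sqrt n)\sqrt{\log(1/\delta)}$, whose leading $\gamma\sqrt n$ factor is far worse than the target $\gamma\log n$. To recover the sharp rate I would instead control the moments of $g-\mathbb{E}_S g$ directly. Writing $g=\frac1n\sum_i h_i$ with $h_i(S)=\mathbb{E}_\xi[\ell(A(S);\xi)] - \ell(A(S);\xi_i)$ and forming the Doob martingale $D_i=\mathbb{E}[g\mid\xi_1,\dots,\xi_i]-\mathbb{E}[g\mid\xi_1,\dots,\xi_{i-1}]$, the crucial point is that stability controls not only the range of each increment (of order $\gamma + M/n$) but also its conditional variance (of order $M^2/n^2$ up to stability corrections). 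Feeding these two separate scales into a Freedman/Bernstein-type moment inequality for martingales produces a mixed bound of the form $\|g-\mathbb{E}_S g\|_q \lesssim \sqrt q\,(M/\sqrt n) + q\,\gamma\log n$, i.e.\ a sub-exponential moment profile.

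Finally I would convert this moment bound into a tail bound: a variable whose $q$-th moments grow like $a\sqrt q + bq$ obeys $\mathbb{P}(|X|>t)\le e\exp(-c\min\{t^2/a^2,\,t/b\})$, so with $a\asymp M/\sqrt n$ and $b\asymp\gamma\log n$ we get, with probability at least $1-\delta$, $|g-\mathbb{E}_S g|\lesssim (M/\sqrt n)\sqrt{\log(1/\delta)} + \gamma\log(n)\log(1/\delta)$. Combining with $|\mathbb{E}_S g|\le\gamma$ via the triangle inequality yields the stated bound. The delicate step throughout — and the place where the argument genuinely departs from the classical Bousquet--Elisseeff analysis — is establishing the small conditional-variance control on the martingale increments; this requires a self-bounding recursion relating the moments of $g$ on $S$ to those on a once-resampled dataset, and that recursion is precisely what trades the spurious $\sqrt n$ for the benign $\log n$.
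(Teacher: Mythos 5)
You should first note that the paper does not prove this lemma at all: it is imported verbatim as \cite[Corollary 8]{bousquet2020sharper}, so the only proof to compare against is the one in that reference. The easy parts of your outline are correct — the ghost-sample argument giving $|\mathbb{E}_S[g(S)]|\le\gamma$, the observation that bounded differences alone (McDiarmid) only yield a deviation of order $\gamma\sqrt{n}$, and the final moments-to-tail conversion are all standard. The entire content of the lemma, however, lies in the fluctuation moment bound $\|g-\mathbb{E}_S g\|_q \lesssim \sqrt{q}\,M/\sqrt{n}+q\,\gamma\log n$, and there your argument has a genuine gap.

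The gap is this: you claim that uniform stability controls the conditional variance of the Doob martingale increments at scale $M^2/n^2$ ``up to stability corrections,'' and that a Freedman/Bernstein inequality then delivers the sub-exponential moment profile. That claim is unsubstantiated and, in the relevant regime, false. When $\xi_i$ is resampled, the algorithm's output $A(S)$ moves, and every one of the $n$ empirical loss terms (as well as the population term) can shift by up to $\gamma$; the increment therefore has an order-$\gamma$ component, and nothing in the definition of uniform stability makes that component predictable given $\xi_{1:i-1}$, so its conditional variance can itself be of order $\gamma^2$. Feeding conditional variances of order $\gamma^2$ into Freedman returns a deviation of order $\gamma\sqrt{n\log(1/\delta)}$ — exactly the McDiarmid-type rate you set out to beat. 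The actual proof in \cite{bousquet2020sharper} (building on Feldman and Vondr\'ak) uses a different mechanism entirely: a moment inequality for sums $\sum_i g_i$ of functions satisfying $\mathbb{E}[g_i\mid S\setminus\xi_i]=0$, proved by recursively splitting the index set into dyadic blocks and applying a bounded-difference-type moment inequality at each of the $O(\log n)$ scales; the $\log n$ factor in the final bound counts the number of scales and is not a variance effect. Your ``self-bounding recursion relating the moments of $g$ on $S$ to those on a once-resampled dataset'' is a name for this missing multi-scale argument rather than a proof of it, so the proposal as written does not establish the lemma.
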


\emph{RIP-condition-free convergence rate of IHT.} The rate of convergence and parameter estimation error of IHT have been extensively analyzed under RIP (or restricted strong condition number) bounding conditions~\cite{bahmani2013greedy,yuan2014gradient}. The RIP-type conditions, however, are unrealistic in many applications. To remedy this deficiency, sparsity-level relaxation strategy was considered in~\cite{jain2014iterative,yuan2018gradient} with which the high-dimensional estimation consistency of IHT can be established under arbitrary restricted strong condition number. In order to make our analysis more realistic for high-dimensional problems, we choose to work on the following RIP-condition-free convergence rate bound, which is essentially from~\cite{jain2014iterative}, for IHT invoking on the empirical risk $F_S$.
\begin{lemma}[Convergence rate of IHT]\label{lemma:convergence_iht}
Assume that $F_S$ is $L_{3k}$-smooth and $\mu_{3k}$-strongly convex. Consider $\bar k$ such that $k\ge \frac{32 L_{3k}^2}{\mu_{3k}^2}\bar k$. Let $\bar w_{S,k} =\argmin_{\|w\|_0\le \bar k} F_S(w)$. Set $\eta = \frac{2}{3L_{3k}}$. Then for any $\epsilon>0$, IHT outputs $w_{S,k}^{(t)}$ satisfying $F_S(w_{S,k}^{(t)}) \le F_S(\bar w_{S,k}) + \epsilon$, after
\[
t\ge\mathcal{O}\left(\frac{L_{3k}}{\mu_{3k}}\log \left(\frac{F_S(w^{(0)}_{S,k})}{\epsilon}\right)\right)
\]
rounds of iteration.
\end{lemma}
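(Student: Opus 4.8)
The statement is a RIP-free linear convergence guarantee for IHT run on the (deterministic) empirical objective $F_S$, and it is essentially the projected-gradient analysis of~\cite{jain2014iterative} under sparsity-level relaxation. The plan is to establish a single-iteration geometric contraction of the objective gap $\Delta_t := F_S(w_{S,k}^{(t)}) - F_S(\bar w_{S,k})$ and then unroll the recursion to read off the iteration count.

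For the one-step analysis I would write $g = \nabla F_S(w_{S,k}^{(t-1)})$ and $u^{(t)} = w_{S,k}^{(t-1)} - \eta g$, so that $w_{S,k}^{(t)} = \mathrm{H}_k(u^{(t)})$ is by construction the best $k$-sparse $\ell_2$-approximation of $u^{(t)}$. Hence for every $k$-sparse $z$ we have $\|u^{(t)} - w_{S,k}^{(t)}\|^2 \le \|u^{(t)} - z\|^2$; expanding both sides and cancelling $\eta^2\|g\|^2$ gives the variational inequality
\[
\langle g, w_{S,k}^{(t)} - w_{S,k}^{(t-1)}\rangle + \tfrac{1}{2\eta}\|w_{S,k}^{(t)} - w_{S,k}^{(t-1)}\|^2 \le \langle g, z - w_{S,k}^{(t-1)}\rangle + \tfrac{1}{2\eta}\|z - w_{S,k}^{(t-1)}\|^2.
\]
Because $w_{S,k}^{(t)} - w_{S,k}^{(t-1)}$ is $2k$-sparse and the $\bar k$-sparse vector $\bar w_{S,k}$ adds at most $\bar k \le k$ further coordinates, every difference appearing here is supported on at most $3k$ indices --- this is exactly why the constants $L_{3k}$ and $\mu_{3k}$ (rather than $L_{2k}, \mu_{2k}$) enter. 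With $\eta = 2/(3L_{3k})$ one has $\tfrac{1}{2\eta} = \tfrac{3}{4}L_{3k} \ge \tfrac{L_{3k}}{2}$, so the restricted smoothness estimate $F_S(w_{S,k}^{(t)}) \le F_S(w_{S,k}^{(t-1)}) + \langle g, w_{S,k}^{(t)} - w_{S,k}^{(t-1)}\rangle + \tfrac{L_{3k}}{2}\|w_{S,k}^{(t)} - w_{S,k}^{(t-1)}\|^2$ combines with the displayed inequality to yield, for any $k$-sparse $z$,
\[
F_S(w_{S,k}^{(t)}) \le F_S(w_{S,k}^{(t-1)}) + \langle g, z - w_{S,k}^{(t-1)}\rangle + \tfrac{1}{2\eta}\|z - w_{S,k}^{(t-1)}\|^2.
\]

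The crux is the choice of the comparison vector $z$. Taking $z = \bar w_{S,k}$ outright and invoking restricted strong convexity only controls $\Delta_t$ by $\big(\tfrac{3}{4}L_{3k} - \tfrac{\mu_{3k}}{2}\big)\|w_{S,k}^{(t-1)} - \bar w_{S,k}\|^2$, which is not yet a contraction because the support selected by $\mathrm{H}_k$ need not contain $\supp(\bar w_{S,k})$. This mismatch is where the relaxation $k \ge 32(L_{3k}/\mu_{3k})^2\bar k$ does the real work: a magnitude/counting argument on the entries of $u^{(t)}$, bounding the energy that $\mathrm{H}_k$ discards relative to what a $\bar k$-sparse target retains, shows that when $k$ is this much larger than $\bar k$ the discarded energy is a small fraction of the achieved progress, so the two preceding inequalities can be balanced into a genuine geometric recursion
\[
\Delta_t \le \left(1 - c\,\frac{\mu_{3k}}{L_{3k}}\right)\Delta_{t-1}
\]
for an absolute constant $c \in (0,1)$. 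I expect this support-overlap / energy-accounting step to be the main obstacle, and it is the only place the precise relaxation constant $32$ is consumed.

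Finally, unrolling the recursion gives $\Delta_t \le \rho^{t}\,\Delta_0 \le \rho^{t} F_S(w_{S,k}^{(0)})$ with $\rho = 1 - c\mu_{3k}/L_{3k}$, using $F_S \ge 0$ so that $\Delta_0 \le F_S(w_{S,k}^{(0)})$. Requiring $\rho^{t} F_S(w_{S,k}^{(0)}) \le \epsilon$ and using $\log(1/\rho) \ge c\mu_{3k}/L_{3k}$ yields that $t \ge \tfrac{1}{\log(1/\rho)}\log\!\big(F_S(w_{S,k}^{(0)})/\epsilon\big) = \mathcal{O}\!\big(\tfrac{L_{3k}}{\mu_{3k}}\log(F_S(w_{S,k}^{(0)})/\epsilon)\big)$ iterations suffice, which is exactly the claimed bound.
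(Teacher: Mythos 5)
First, a framing remark: the paper does not actually prove Lemma~\ref{lemma:convergence_iht} — it imports the result verbatim from \cite{jain2014iterative} ("essentially from"), so the relevant comparison is with that source's argument, whose skeleton your outline correctly reproduces. The parts you execute are fine: the variational inequality from the optimality of $\mathrm{H}_k$ among all $k$-sparse vectors, the use of restricted smoothness with $\eta = 2/(3L_{3k})$ (indeed $\tfrac{1}{2\eta} = \tfrac{3}{4}L_{3k} \ge \tfrac{L_{3k}}{2}$), the observation that all differences live on supports of size at most $2k + \bar k \le 3k$, and the final unrolling with $\Delta_0 \le F_S(w^{(0)}_{S,k})$ via nonnegativity of the loss. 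But these are the routine parts. The step you defer — "a magnitude/counting argument on the entries of $u^{(t)}$ \ldots shows \ldots a genuine geometric recursion" — is the entire mathematical content of the lemma, and your writeup neither executes it nor uses the hypothesis $k \ge 32 L_{3k}^2 \bar k/\mu_{3k}^2$ in any quantitative way. As you yourself note, everything you actually derive (taking $z = \bar w_{S,k}$ and applying RSC) gives only $\Delta_t \le \bigl(\tfrac{3}{4}L_{3k} - \tfrac{\mu_{3k}}{2}\bigr)\|w^{(t-1)}_{S,k} - \bar w_{S,k}\|^2$, whose coefficient is positive, so no contraction follows. Asserting that the inequalities "can be balanced" is a statement of hope, not a proof.

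The missing step is also not a refinement of your choice-of-$z$ inequality; the argument in \cite{jain2014iterative} takes a structurally different route at exactly this point. One first shows a per-step descent proportional to the \emph{captured gradient energy}, roughly $F_S(w^{(t)}_{S,k}) - F_S(w^{(t-1)}_{S,k}) \le -c\,\eta\,\|\nabla_{I} F_S(w^{(t-1)}_{S,k})\|^2$ with $I = \supp(w^{(t-1)}_{S,k}) \cup \supp(w^{(t)}_{S,k})$, exploiting that $w^{(t)}_{S,k}$ agrees with $w^{(t-1)}_{S,k} - \eta \nabla F_S(w^{(t-1)}_{S,k})$ on the selected support. Then a separate averaging/counting argument over the top-$k$ selection — every retained coordinate of $u^{(t)}$ dominates in magnitude every discarded one, and at most $\bar k$ coordinates of $\supp(\bar w_{S,k})$ are discarded while at least $k - \bar k$ non-optimal coordinates are retained — shows the captured energy is at least a $\bigl(1 - \mathcal{O}(\sqrt{\bar k/k})\bigr)$-type fraction of $\|\nabla F_S(w^{(t-1)}_{S,k})\|^2$ restricted to $\supp(w^{(t-1)}_{S,k}) \cup \supp(\bar w_{S,k})$, which RSC lower-bounds by $2\mu_{3k}\Delta_{t-1}$; the relaxation $k \ge 32(L_{3k}/\mu_{3k})^2\bar k$ is consumed in making $\sqrt{\bar k/k} \lesssim \mu_{3k}/L_{3k}$ so the loss factor is absorbed into the contraction constant, and the regime where the support barely changes between iterations requires its own case analysis. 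Until you supply this two-part argument (in effect, reproving the core lemmas of \cite{jain2014iterative}), the claimed recursion $\Delta_t \le (1 - c\,\mu_{3k}/L_{3k})\Delta_{t-1}$ — and hence the lemma — remains unestablished.
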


\emph{Localized Rademacher Complexities and data dependent risk bounds.} Let us define $\|\ell(w;\cdot) - \ell(w';\cdot)\|_\infty:=\max_{\xi\in \mathcal{X}}|\ell(w;\xi)- \ell(w';\xi)|$. We further introduce following concept of Localized Rademacher Complexity which plays an important role in deriving the fast rates of convergence:
\[
R_S(r_n; w^*):= \mathbb{E}_{\varepsilon}\left[\sup_{\|\ell(w;\cdot) - \ell(w^*;\cdot)\|_\infty\le r_n}\left|\frac{1}{n}\sum_{i=1}^n\varepsilon_i [\ell(w;\xi_i) - \ell(w^*;\xi_i)]\right|\right],
\]
where $r_n>0$ and $w^*$ are fixed and $\{\varepsilon_i\}$ are i.i.d. Rademacher random variables, i.e., symmetric Bernoulli random variables taking values $+1$ and $-1$ with probability $1/2$ each. $R_S(\delta;w^*)$ can be used as a data dependent complexity measure of the target parametric class around $w^*$ that allows one to estimate the accuracy of approximation of $F(w) - F(w^*)$ by $F_S(w) - F_S(w^*)$ based on the data. The following localized concentration bound is elementary and it can implied immediately by the symmetrization and McDiarmid's inequalities (see, e.g.,~\cite{koltchinskii2006local}).
\begin{lemma}[Data dependent local concentration bound]\label{lemma:concentration_elementary}
For any fixed $w^*$ and all $\delta\in(0,1)$, the following bound holds with probability at least $1-\delta$:
\[
\sup_{\|\ell(w;\cdot) - \ell(w^*;\cdot)\|_\infty\le r_n}\left|F_S(w) - F_S(w^*) - (F(w) - F(w^*))\right|\le 2R_S(r_n; w^*) + 3r_n\sqrt{\frac{2\log(2/\delta)}{n}}.
\]
\end{lemma}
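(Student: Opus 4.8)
The plan is to recognize the left-hand side as a two-sided uniform deviation over the shifted loss class and to control it by the textbook symmetrization-plus-bounded-differences recipe (as in the reference~\cite{koltchinskii2006local}). Concretely, I would set $g_w(\xi):=\ell(w;\xi)-\ell(w^*;\xi)$ and $\mathcal{G}:=\{g_w:\|\ell(w;\cdot)-\ell(w^*;\cdot)\|_\infty\le r_n\}$, so that every $g\in\mathcal{G}$ satisfies $|g|\le r_n$ pointwise while $F_S(w)-F_S(w^*)=\frac1n\sum_{i=1}^n g_w(\xi_i)$ and $F(w)-F(w^*)=\mathbb{E}_\xi[g_w]$. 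Writing $Z:=\sup_{g\in\mathcal{G}}\bigl|\frac1n\sum_{i=1}^n g(\xi_i)-\mathbb{E}[g]\bigr|$, the quantity to be bounded is exactly $Z$, and the target is $Z\le 2R_S(r_n;w^*)+3r_n\sqrt{2\log(2/\delta)/n}$.

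First I would verify the bounded-differences property. Since each $g\in[-r_n,r_n]$ pointwise, replacing a single sample $\xi_i$ by $\xi_i'$ perturbs every empirical average $\frac1n\sum_j g(\xi_j)$ by at most $2r_n/n$; because a supremum of functions each moved by at most $c$ is itself moved by at most $c$, both $Z$ and the empirical complexity $R_S(r_n;w^*)$, viewed as functions of $S$, have bounded differences $c_i=2r_n/n$ (the same computation applies to $R_S$ after taking $\mathbb{E}_\varepsilon$). McDiarmid's inequality applied to $Z$ then yields, with probability at least $1-\delta/2$, $Z\le\mathbb{E}_S[Z]+r_n\sqrt{2\log(2/\delta)/n}$, where the deviation term uses $\sum_i c_i^2=4r_n^2/n$.

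Next I would pass from $\mathbb{E}_S[Z]$ to the data-dependent complexity in two moves. By the standard symmetrization inequality for two-sided suprema, introducing an independent ghost sample together with Rademacher signs $\{\varepsilon_i\}$ gives $\mathbb{E}_S[Z]\le 2\,\mathbb{E}_S[R_S(r_n;w^*)]$; here the absolute value placed \emph{inside} the supremum in the definition of $R_S$ is precisely what makes the two-sided deviation symmetrizable, so that no symmetry of the class $\mathcal{G}$ itself need be assumed. Applying McDiarmid a second time to $R_S$ (in the reverse direction, again with constant $2r_n/n$) gives, with probability at least $1-\delta/2$, $\mathbb{E}_S[R_S(r_n;w^*)]\le R_S(r_n;w^*)+r_n\sqrt{2\log(2/\delta)/n}$.

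Finally, a union bound over the two failure events of probability $\delta/2$ each, followed by chaining the three inequalities, gives with probability at least $1-\delta$ that $Z\le 2R_S(r_n;w^*)+2r_n\sqrt{2\log(2/\delta)/n}+r_n\sqrt{2\log(2/\delta)/n}$, which is the claimed bound. The clean factor $3$ emerges as $2+1$: the $2$ is the symmetrization constant multiplying the $\mathbb{E}_S[R_S]\to R_S$ deviation term, and the $1$ is the direct McDiarmid deviation of $Z$. There is no genuine obstacle, as the statement is elementary; the one point demanding care is the exact bounded-differences constant, since the shifted losses range over an interval of width $2r_n$ rather than $r_n$, so each McDiarmid term carries $2r_n$ and collapses to $r_n\sqrt{2\log(2/\delta)/n}$, which is what produces the precise constants $2$ and $3$ in the statement instead of looser ones.
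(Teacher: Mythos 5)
Your proof is correct and follows exactly the route the paper itself indicates: the paper states Lemma~\ref{lemma:concentration_elementary} without a written proof, remarking that it follows immediately from symmetrization and McDiarmid's inequalities, which is precisely your argument (McDiarmid on $Z$, symmetrization of $\mathbb{E}_S[Z]$, McDiarmid again on $R_S$, union bound). Your constant bookkeeping is also right---the bounded-differences constant $2r_n/n$ from the width-$2r_n$ range of the shifted losses yields exactly the deviation $r_n\sqrt{2\log(2/\delta)/n}$ in each of the three chained inequalities, producing the stated factors $2$ and $3$.
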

The following elementary lemma (see, e.g.,~\cite[Lemma 14]{yuan2018gradient}) is useful in our analysis.
\begin{lemma}\label{lemma:estimation_error}
Assume that $f$ is $\mu_s$-strongly convex. Then for any $w,w'$ such that $\|w- w'\|_0 \le s $ and $f(w)\le f(w')+\epsilon$ for some $\epsilon\ge 0$, the following bound holds
\[
\|w - w'\| \le \frac{2\sqrt{s}\|\nabla f(w')\|_\infty}{\mu_s}+ \sqrt{\frac{2\epsilon}{\mu_s}},
\]
where $I = \supp(w)$ and $I'=\supp(w')$.
\end{lemma}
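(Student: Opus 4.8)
The plan is to derive the bound directly from the restricted strong convexity inequality applied to the pair $(w,w')$, reduce the problem to a scalar quadratic inequality in the single quantity $\|w-w'\|$, and then solve that inequality explicitly. Note that the auxiliary labels $I=\supp(w)$ and $I'=\supp(w')$ in the statement play no role in the bound itself; the only structural fact I will use is the $s$-sparsity of the difference $w-w'$.

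First I would invoke the defining lower bound of $\mu_s$-strong convexity. Since $\|w-w'\|_0\le s$ by hypothesis, the restricted inequality applies to this pair and yields
\[
f(w) \ge f(w') + \langle \nabla f(w'), w - w'\rangle + \frac{\mu_s}{2}\|w - w'\|^2.
\]
Combining with the assumption $f(w)\le f(w')+\epsilon$ and cancelling $f(w')$ gives
\[
\frac{\mu_s}{2}\|w - w'\|^2 \le \epsilon - \langle \nabla f(w'), w - w'\rangle \le \epsilon + |\langle \nabla f(w'), w - w'\rangle|.
\]

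The next step is to control the cross term. Because $w-w'$ is supported on at most $s$ coordinates, writing $D=\supp(w-w')$ with $|D|\le s$ and applying Cauchy--Schwarz restricted to $D$ gives
\[
|\langle \nabla f(w'), w - w'\rangle| \le \|[\nabla f(w')]_D\|\,\|w - w'\| \le \sqrt{s}\,\|\nabla f(w')\|_\infty\,\|w - w'\|,
\]
where the final step uses $\|[\nabla f(w')]_D\| \le \sqrt{|D|}\,\|\nabla f(w')\|_\infty$. Abbreviating $a:=\|w-w'\|$ and $b:=\sqrt{s}\,\|\nabla f(w')\|_\infty$, the accumulated estimate becomes the quadratic inequality $\frac{\mu_s}{2}a^2 - b a - \epsilon \le 0$.

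Finally I would solve this quadratic. Its nonnegative root bounds $a$ from above as $a \le \frac{b+\sqrt{b^2+2\mu_s\epsilon}}{\mu_s}$, and subadditivity of the square root, $\sqrt{b^2+2\mu_s\epsilon}\le b+\sqrt{2\mu_s\epsilon}$, then yields $a \le \frac{2b}{\mu_s} + \sqrt{\frac{2\epsilon}{\mu_s}}$, which is exactly the claimed bound after substituting back $b=\sqrt{s}\,\|\nabla f(w')\|_\infty$. This argument is elementary; the only point demanding a little care is recognizing that the $s$-sparsity of $w-w'$ is precisely what licenses both the use of the restricted strong convexity inequality and the dimension-$\sqrt{s}$ conversion from the $\ell_\infty$ to the $\ell_2$ norm of the gradient on the relevant support. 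I do not anticipate any genuine obstacle beyond this bookkeeping.
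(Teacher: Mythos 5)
Your proof is correct and complete: since $\|w-w'\|_0\le s$, the restricted strong convexity inequality indeed applies to the pair $(w,w')$, the Cauchy--Schwarz step restricted to $\supp(w-w')$ correctly converts the gradient term into $\sqrt{s}\,\|\nabla f(w')\|_\infty\|w-w'\|$, and solving the quadratic $\frac{\mu_s}{2}a^2-ba-\epsilon\le 0$ together with $\sqrt{b^2+2\mu_s\epsilon}\le b+\sqrt{2\mu_s\epsilon}$ yields exactly the stated constants. The paper offers no proof of its own, deferring to \cite[Lemma 14]{yuan2018gradient}, and your argument is precisely that standard one; you are also right that the trailing clause defining $I$ and $I'$ is vestigial and plays no role in the bound.
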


\section{Proofs of Main Results}
\label{apdsect:proof_main_results}

\subsection{Proof of Theorem~\ref{thrm:uniform_stability_iht}}
\label{apdsect:proof_uniform_stability_iht}

In this subsection, we present a detailed proof of Theorem~\ref{thrm:uniform_stability_iht}.

\textbf{A key lemma.} For a given index set $J\subseteq [p]$, let us consider the following restrictive estimator over $J$:
\begin{equation}\label{equat:w_SJ}
w_{S\mid J} = \argmin_{w\in \mathcal{W}, \supp(w)\subseteq J} F_S(w).
\end{equation}
We present the following lemma about the uniform generalization gap of $w_{S\mid J}$ for all $J$ with $|J|=k$ which is crucial to our proof.
\begin{lemma}\label{lemma:support_stability}
Assume that the loss function $\ell$ is $G$-Lipschitz continuous with respect to its first argument and $\ell(\cdot;\xi)\le M$ for all $\xi$. Suppose that $F_S$ is $\mu_k$-strongly convex with probability at least $1- \delta'_n$ over the random draw of $S$. Let $\mathcal{J}=\{J \subseteq [p]: |J|= k\}$ be the set of index set of cardinality $k$. Then for any $\delta\in(0,1-\delta'_n)$ and $\lambda>0$, it holds with probability at least $1-\delta-\delta'_n$ over the random draw of $S$ that
\[
\begin{aligned}
&\sup_{J\subseteq \mathcal{J}} \left|F(w_{S\mid J}) - F_S(w_{S \mid J})\right| \\
\le&
\mathcal{O}\left(\frac{G^2}{\lambda n}\log(n)\left(\log\left(\frac{1}{\delta}\right)+k\log\left(\frac{ep}{k}\right)\right)+M\sqrt{\frac{\log(1/\delta)+ k\log(ep/k)}{n}}+ \frac{\lambda G \sqrt{M}}{\mu_k\sqrt{\mu_k}}\right).
\end{aligned}
\]
\end{lemma}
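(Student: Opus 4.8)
The plan is to reduce the uniform control over all size-$k$ supports to the analysis of a single fixed support $J$ that is handled by classical regularized-ERM stability, and then to pay a union-bound price over $\mathcal{J}$. First I would fix an index set $J$ with $|J|=k$ and introduce the $\ell_2$-regularized restricted estimator $w_{\lambda,S\mid J}=\argmin_{w\in\mathcal{W},\,\supp(w)\subseteq J}\{F_S(w)+\frac{\lambda}{2}\|w\|^2\}$. Because the quadratic penalty makes the objective $\lambda$-strongly convex \emph{for every realization of $S$} (independently of whether $F_S$ itself is strongly convex), the classical Bousquet--Elisseeff argument shows that $w_{\lambda,S\mid J}$ has uniform stability $\gamma=\mathcal{O}(G^2/(\lambda n))$ with respect to the $G$-Lipschitz loss $\ell$; here restricting the feasible set to the convex slice $\{w\in\mathcal{W}:\supp(w)\subseteq J\}$ does not break the argument, since projection onto a convex set is nonexpansive. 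Feeding this $\gamma$ into Lemma~\ref{lemma:uniform_stability} with a to-be-chosen confidence level $\delta'$ yields, for this fixed $J$, the generalization-gap bound $|F(w_{\lambda,S\mid J})-F_S(w_{\lambda,S\mid J})|\le\mathcal{O}(\frac{G^2}{\lambda n}\log(n)\log(1/\delta')+M\sqrt{\log(1/\delta')/n})$.

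The second step is to transfer this guarantee from the regularized estimator $w_{\lambda,S\mid J}$ to the unregularized target $w_{S\mid J}$. On the event $E$ (of probability at least $1-\delta'_n$) that $F_S$ is $\mu_k$-strongly convex, I would bound the discrepancy by adding the two strong-convexity inequalities evaluated at the respective minimizers,
\[
F_S(w_{\lambda,S\mid J}) \ge F_S(w_{S\mid J}) + \tfrac{\mu_k}{2}\|w_{\lambda,S\mid J} - w_{S\mid J}\|^2, \quad F_{\lambda,S}(w_{S\mid J}) \ge F_{\lambda,S}(w_{\lambda,S\mid J}) + \tfrac{\mu_k+\lambda}{2}\|w_{S\mid J} - w_{\lambda,S\mid J}\|^2,
\]
and then cancelling the shared $F_S$ terms after substituting $F_{\lambda,S}=F_S+\frac{\lambda}{2}\|\cdot\|^2$. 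Using the a priori norm bounds $\|w_{S\mid J}\|,\|w_{\lambda,S\mid J}\|\le\sqrt{2M/\mu_k}$ (obtained by comparing each objective against its value at $0$ together with $F_S\ge 0$ and $F_S(0)\le M$), this yields $\|w_{S\mid J} - w_{\lambda,S\mid J}\|\le\mathcal{O}(\lambda\sqrt{M}/\mu_k^{3/2})$. Since $\ell$ is $G$-Lipschitz, both $F$ and $F_S$ are $G$-Lipschitz, so the triangle inequality $|F(w_{S\mid J})-F_S(w_{S\mid J})|\le|F(w_{\lambda,S\mid J})-F_S(w_{\lambda,S\mid J})|+2G\|w_{S\mid J}-w_{\lambda,S\mid J}\|$ hands the generalization guarantee over to $w_{S\mid J}$ at the cost of the additive overhead $\mathcal{O}(\lambda G\sqrt{M}/\mu_k^{3/2})$, which is precisely the final term of the claimed bound.

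Finally, I would make the control uniform over $\mathcal{J}$ by a union bound. Since $|\mathcal{J}|=\binom{p}{k}\le(ep/k)^k$, choosing $\delta'=\delta/|\mathcal{J}|$ gives $\log(1/\delta')\le\log(1/\delta)+k\log(ep/k)$, which inflates exactly the $\log(1/\delta)$ factors in the stability bound of the first step and produces the first two terms of the claim; intersecting this data-dependent stability event with the single strong-convexity event $E$ (which simultaneously controls the discrepancy for every $J$, because $\mu_k$-strong convexity is one event covering all size-$k$ supports) keeps the total failure probability at $\delta+\delta'_n$. The point I expect to matter most is the decoupling achieved in the first two steps: the uniform-stability generalization bound is obtained for an estimator whose stability is \emph{deterministic} (the regularized one), while the high-probability strong convexity of $F_S$ is used only in the cheap, non-stochastic discrepancy estimate. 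Keeping these two roles separate is what lets the union bound over exponentially many supports go through while the $\delta'_n$ failure probability is charged only once, and choosing $\lambda$ to balance the $1/(\lambda n)$ stability term against the $\lambda/\mu_k^{3/2}$ overhead is what ultimately calibrates the rate.
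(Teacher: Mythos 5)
Your proposal is correct and follows essentially the same route as the paper's proof: regularize the restricted estimator to obtain deterministic uniform stability $\mathcal{O}(G^2/(\lambda n))$, apply the Bousquet-et-al.\ generalization bound per support, union-bound over the $\binom{p}{k}\le (ep/k)^k$ supports, and transfer to $w_{S\mid J}$ via a discrepancy bound of order $\lambda\sqrt{M}/\mu_k^{3/2}$ plus Lipschitzness, charging the strong-convexity failure probability $\delta'_n$ only once. The only (immaterial) difference is the discrepancy step, where you add the two strong-convexity inequalities at the respective minimizers while the paper uses the vanishing restricted gradients together with strong monotonicity; both yield the same $\mathcal{O}\left(\lambda\sqrt{M}/\mu_k^{3/2}\right)$ overhead, and your variant has the mild advantage of remaining valid when the minimizers sit on the boundary of $\mathcal{W}$.
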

\begin{proof}
Let us consider the following defined $\ell_2$-regularized $\ell_0$-ERM estimator for any given $\lambda>0$:
\[
w_{\lambda,S\mid J} := \argmin_{w\in \mathcal{W}, \supp(w)\subseteq J} \left\{F_{\lambda,S}(w):= F_S(w) + \frac{\lambda}{2}\|w\|^2\right\}.
\]
The reason for introducing the additional $\ell_2$-regularization term is to guarantee uniform stability of the hypothetical estimator $w_{\lambda,S\mid J}$. Based on the standard proof arguments~\cite{shalev2009stochastic} we can show that the optimal model $w_{\lambda,S\mid J}$ has uniform stability $\gamma=\frac{4G^2}{\lambda n}$. Indeed, let $S^{(i)}$ be a sample set that is identical to $S$ except that one of the $\xi_i$ is replaced by another random sample $\xi'_i$. Then we can derive that
\[
\begin{aligned}
&F_{\lambda,S}(w_{\lambda, S^{(i)}\mid J}) - F_{\lambda,S}(w_{\lambda,S\mid J}) \\
=& \frac{1}{n}\sum_{j\neq i} \left(\ell(w_{\lambda, S^{(i)}\mid J};\xi_j) - \ell(w_{\lambda,S\mid J};\xi_j) \right) + \frac{1}{n} \left(\ell(w_{\lambda, S^{(i)}\mid J};\xi_i) - \ell( w_{\lambda,S\mid J};\xi_i)\right)\\
& + \frac{\lambda}{2}\|w_{\lambda, S^{(i)}\mid J}\|^2 -  \frac{\lambda}{2}\|w_{\lambda,S\mid J}\|^2\\
=& F_{\lambda,S^{(i)}}(w_{\lambda, S^{(i)}\mid J}) - F_{\lambda,S^{(i)}}(w_{\lambda,S\mid J}) + \frac{1}{n} \left(\ell(w_{\lambda, S^{(i)}\mid J};\xi_i) - \ell(w_{\lambda,S\mid J};\xi_i) \right) \\
& - \frac{1}{n} \left(\ell(w_{\lambda, S^{(i)}\mid J};\xi'_i) - \ell(w_{\lambda,S\mid J};\xi'_i) \right) \\
\le& \frac{1}{n} \left|\ell(w_{\lambda, S^{(i)}\mid J};\xi_i) - \ell(w_{\lambda,S\mid J};\xi_i) \right| + \frac{1}{n} \left|\ell(w_{\lambda, S^{(i)}\mid J};\xi'_i) - \ell(w_{\lambda,S\mid J};\xi'_i)\right| \\
\le& \frac{2G}{n} \|w_{\lambda, S^{(i)}\mid J} - w_{\lambda,S\mid J}\|,
\end{aligned}
\]
where we have used the optimality of $w_{\lambda, S^{(i)}\mid J}$ with respect to $F_{\lambda,S^{(i)}}(w)$ and the Lipschitz continuity of the loss function $\ell(w;\xi)$. Since $F_{\lambda,S}$ is $\lambda$-strongly convex and $w_{\lambda,S\mid J}$ is optimal for $F_{\lambda,S}(w)$ over the supporting set $J$, we have
\[
F_{\lambda,S}(w_{\lambda, S^{(i)}\mid J}) \ge F_{\lambda,S}(w_{\lambda,S\mid J}) + \frac{\lambda}{2}\|w_{\lambda, S^{(i)}\mid J} - w_{\lambda,S\mid J}\|^2.
\]
By combing the preceding two inequalities we arrive at $\|w_{\lambda, S^{(i)}\mid J} - w_{\lambda,S\mid J}\| \le \frac{4G}{\lambda n}$. Consequently from the Lipschitz continuity of $\ell$ we have that for any sample $\xi$
\[
|\ell(w_{\lambda, S^{(i)}\mid J};\xi ) - \ell(w_{\lambda,S\mid J};\xi) | \le G\|w_{\lambda,S\mid J}^{(i)} - w_{\lambda,S\mid J}\| \le \frac{4G^2}{\lambda n}.
\]
This confirms that the optimal model $w_{\lambda,S\mid J}$ has uniform stability $\gamma=\frac{4G^2}{\lambda n}$. By invoking Lemma~\ref{lemma:uniform_stability} we obtain that with probability at least $1-\delta$ over random draw of $S$,
\begin{equation}\label{inequat:lemma_support_stability_key_1}
\left|F(w_{\lambda, S\mid J}) - F_S(w_{\lambda, S \mid J}) \right| \le \mathcal{O}\left(\frac{G^2}{\lambda n}\log(n)\log\left(\frac{1}{\delta}\right) + M\sqrt{\frac{\log(1/\delta)}{n}}\right).
\end{equation}
Let $\mathcal{J}=\{J \subseteq [p]: |J|= k\}$ be the set of index set of cardinality $k$. It is standard to verify $|\mathcal{J}|=\binom{p}{k}\le \left(\frac{ep}{k}\right)^k$~\cite[Lemma 2.7]{rigollet201518}. Then for each $J\in \mathcal{J}$, based on~\eqref{inequat:lemma_support_stability_key_1} we must have that with probability at least $1-\frac{\delta}{|\mathcal{J}|}$ over $S$, the following generalization gap is valid for any $\lambda>0$:
\[
\left|F(w_{\lambda, S\mid J})- F_S(w_{\lambda, S\mid J})\right| \le \mathcal{O}\left( \frac{G^2}{\lambda n}\log(n)\log\left(\frac{|\mathcal{J}|}{\delta}\right)+M\sqrt{\frac{\log(|\mathcal{J}|/\delta)}{n}}\right).
\]
Then we obtain that with probability at least $1-\delta$, $\sup_{J \subseteq \mathcal{J}} \left|F(w_{\lambda, S\mid J})- F_S(w_{\lambda, S\mid J})\right|$ is upper bounded by
\begin{equation}\label{inequat:lemma_support_stability_key_2}
\mathcal{O}\left(\frac{G^2}{\lambda n}\log(n)\left(\log\left(\frac{1}{\delta}\right)+k\log\left(\frac{ep}{k}\right)\right)+M\sqrt{\frac{\log(1/\delta)+ k\log(ep/k)}{n}}\right).
\end{equation}

Next, we show how to bound the estimator difference $\sup_{J\subseteq \mathcal{J}}\|w_{S\mid J} - w_{\lambda, S\mid J}\|$. The strong convexity assumption of $F_S$ implies that the following bound holds with probability at least $1-\delta'_n$ over $S$ for all $J\subseteq \mathcal{J}$:
\[
\lambda \|w_{S\mid J}\| = \|\nabla_J F_{\lambda, S}(w_{S\mid J}) - \nabla_J F_{\lambda, S}(w_{\lambda, S\mid J})\| \ge (\mu_k+\lambda)\|w_{S\mid J} - w_{\lambda, S\mid J}\|,
\]
where the notation $\nabla_J g $ denotes the restriction of gradient $\nabla g$ over $J$ and we have used the optimality of $w_{\lambda,S\mid J}$ and $w_{S\mid J}$ over $J$ which implies that
\[
\nabla_J F_{\lambda, S}(w_{\lambda, S\mid J})=0, \quad \nabla_J F_{\lambda, S}(w_{S\mid J}) = \nabla_J F_S(w_{S\mid J}) + \lambda w_{S\mid J}=\lambda w_{S\mid J}.
\]
In the meanwhile, since $\ell(0;\cdot)\in (0, M)$, we must have the following bound holds with probability at least $1-\delta'_n$ over $S$ for all $J\subseteq \mathcal{J}$:
\[
M \ge F_S(0) \ge F_S(0) - F_S(w_{S\mid J}) \ge \frac{\mu_{k}}{2}\|w_{S\mid J}\|^2,
\]
which leads to $\|w_{S\mid J}\|\le \sqrt{2 M/\mu_{k}}$. Then it follows readily from the previous two inequalities that
\[
\|w_{S\mid J} - w_{\lambda, S\mid J}\| \le \frac{\lambda}{\mu_k + \lambda}\|w_{S \mid J}\| \le \frac{\lambda \sqrt{2 M}}{\sqrt{\mu_k}(\mu_k + \lambda)}\le \frac{\lambda \sqrt{2 M}}{\mu_k\sqrt{\mu_k}}.
\]
Since the loss function is $G$-Lipschitz continuous, the following is then valid with probability at least $1-\delta'_n$ over the random draw of $S$ for all $J\subseteq \mathcal{J}$:
\[
\begin{aligned}
& \left|F(w_{S\mid J}) - F_S(w_{S \mid J}) \right| \\
\le & \left|F(w_{\lambda, S\mid J}) - F_S(w_{\lambda, S \mid J})\right| + |F_S(w_{S\mid J})-F_S(w_{\lambda, S\mid J})| + |F(w_{S\mid J})-F(w_{\lambda, S\mid J})| \\
\le& \left|F(w_{\lambda, S\mid J}) - F_S(w_{\lambda, S \mid J})\right| + 2G\|w_{S\mid J} - w_{\lambda, S\mid J}\| \\
\le& \left|F(w_{\lambda, S\mid J}) - F_S(w_{\lambda, S \mid J})\right| + \frac{2\lambda G \sqrt{2M}}{\mu_k\sqrt{\mu_k}}.
\end{aligned}
\]
In view of the above bound and the bound in~\eqref{inequat:lemma_support_stability_key_2}, with probability at least $1-\delta-\delta'_n$ over $S$ we have $\sup_{J\subseteq \mathcal{J}} \left|F(w_{S\mid J}) - F_S(w_{S \mid J})\right| \le$
\[
\mathcal{O}\left(\frac{G^2}{\lambda n}\log(n)\left(\log\left(\frac{1}{\delta}\right)+k\log\left(\frac{ep}{k}\right)\right)+M\sqrt{\frac{\log(1/\delta)+ k\log(ep/k)}{n}}+ \frac{\lambda G \sqrt{M}}{\mu_k\sqrt{\mu_k}}\right).
\]
The proof is concluded.
\end{proof}

Now we are in the position to prove Theorem~\ref{thrm:uniform_stability_iht}.

\begin{proof}[Proof of Theorem~\ref{thrm:uniform_stability_iht}]
Let $\mathcal{J}=\{J \subseteq [p]: |J|= k\}$ be the set of index set of cardinality $k$. For any random sample set $S$, by the definition of $\tilde w^{(t)}_{S,k}$ we always have $\tilde w^{(t)}_{S,k} \in \{w_{S\mid J}: J\in \mathcal{J}\}$. Applying Lemma~\ref{lemma:support_stability} yields that with probability at least $1-\delta-\delta'_n$, the generalization gap $\left|F(\tilde w^{(t)}_{S,k})- F_S(\tilde w^{(t)}_{S,k})\right| $ is upper bounded by
\[
\mathcal{O}\left(\frac{G^2}{\lambda n}\log(n)\left(\log\left(\frac{1}{\delta}\right)+k\log\left(\frac{ep}{k}\right)\right)+M\sqrt{\frac{\log(1/\delta)+ k\log(ep/k)}{n}}+ \frac{\lambda G \sqrt{M}}{\mu_k\sqrt{\mu_k}}\right).
\]
Setting $\lambda=\sqrt{\frac{G\mu_k^{1.5}\log(n)(\log(1/\delta)+k\log(ep/k))}{nM^{0.5}}}$ in the above and preserving leading terms yields
\begin{equation}\label{inequat:generalization_gap_bound}
\left|F(\tilde w^{(t)}_{S,k}) - F_S(\tilde w^{(t)}_{S,k})\right| \le \mathcal{O}\left(\frac{G^{3/2}M^{1/4}}{\mu_k^{3/4}}\sqrt{\frac{\log(n)(\log(1/\delta)+k\log(ep/k))}{n}}\right).
\end{equation}
For any $\epsilon>0$, given that $t=\mathcal{O}\left(\frac{L_{3k}}{\mu_{3k}}\log \left(\frac{F_S(w^{(0)}_{S,k})}{\epsilon}\right)\right)=\mathcal{O}\left(\frac{L_{3k}}{\mu_{3k}}\log \left(\frac{M}{\epsilon}\right)\right)$ is sufficiently large, we can bound the sparse excess risk $F(\tilde w^{(t)}_{S,k}) - F(\bar w)$ as
\[
\begin{aligned}
 F(\tilde w^{(t)}_{S,k}) - F(\bar w) =&  F(\tilde w^{(t)}_{S,k}) - F_S(\tilde w^{(t)}_{S,k}) + F_S(\tilde w^{(t)}_{S,k}) -F_S(\bar w) + F_S(\bar w) - F(\bar w)  \\
\le& \left| F(\tilde w^{(t)}_{S,k}) - F_S(\tilde w^{(t)}_{S,k}) \right| + \left|F_S(\bar w) - F(\bar w)\right| + \epsilon,
\end{aligned}
\]
where in the last inequality we have used the bound $F_S(\tilde w^{(t)}_{S,k}) \le F_S(w^{(t)}_{S,k}) \le F_S(\bar w) + \epsilon$ which is implied by the definition of $\tilde w_{S,k}^{(t)}$ and Lemma~\ref{lemma:convergence_iht}. Since $\ell(\bar w;\xi)\le M$, from  Hoeffding's inequality we know that with probability at least $1-\delta/2$,
\[
\left|F_S(\bar w) - F(\bar w)\right| \le \mathcal{O}\left(M\sqrt{\frac{\log(1/\delta)}{n}}\right).
\]
Based on the generalization gap bound~\eqref{inequat:generalization_gap_bound} and by union probability we get with probability at least $1-\delta$
\[
\begin{aligned}
&F(\tilde w^{(t)}_{S,k}) - F(\bar w) \\
\le& \left|F(\tilde w^{(t)}_{S,k}) - F_S(\tilde w^{(t)}_{S,k})\right| + \left|F_S(\bar w) - F(\bar w)\right| + \epsilon \\
\le& \mathcal{O}\left(\frac{G^{3/2}M^{1/4}}{\mu_k^{3/4}}\sqrt{\frac{\log(n)(\log(1/\delta)+k\log(ep/k))}{n}} + M\sqrt{\frac{\log(1/\delta)}{n}} + \epsilon \right).
\end{aligned}
\]
Setting $\epsilon=\mathcal{O}(\sqrt{k\log(n)\log(ep/k)/n})$ yields the desired bound (keep in mind the monotonicity of restricted smoothness and strong convexity). This completes the proof.
\end{proof}

\subsection{Proof of Corollary~\ref{corol:generalization_barw_logisticreg}}
\label{apdsect:proof_uniform_stability_iht_logisticreg}

In this subsection we prove Corollary~\ref{corol:generalization_barw_logisticreg} which is an application of Theorem~\ref{thrm:uniform_stability_iht} to sparse logistic regression models. We first present the following lemma, which follows immediately from~\cite[Lemma 6]{agarwal2012fast}, to be used for proving the main result.
\begin{lemma}\label{lemma:strong_convexity_subgaussian}
Suppose $x_i$ are drawn i.i.d. from a zero-mean sub-Gaussian distribution with covariance matrix $\Sigma\succ 0$. Let $X = [x_1,...,x_n]\in \mathbb{R}^{d\times n}$. Assume that $\Sigma_{jj}\le \sigma^2$. Then there exist universal positive constants $c_0$ and $c_1$ such that for all $w\in \mathbb{R}^p$
\[
\begin{aligned}
\frac{\|X^\top w\|^2}{n} \ge& \frac{1}{2}\|\Sigma^{1/2}w\|^2 - c_1\frac{\sigma^2\log (p)}{n} \|w\|^2_1
\end{aligned}
\]
holds with probability at least $1- \exp\{-c_0 n\}$.
\end{lemma}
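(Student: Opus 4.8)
The plan is to recognize this statement as a restricted-eigenvalue (equivalently, restricted strong convexity) bound for sub-Gaussian designs, and to establish it by the standard empirical-process machinery of Raskutti--Wainwright--Yu; indeed the result is essentially the content of~\cite[Lemma 6]{agarwal2012fast} specialized to $\sigma^2=\max_j\Sigma_{jj}$, so the most economical route is to invoke that lemma directly. To reconstruct it, I would first note that both sides are homogeneous of degree two in $w$, so after rescaling it suffices to prove the companion one-sided bound
\[
\frac{\|X^\top w\|_2}{\sqrt n} \ge \kappa\,\|\Sigma^{1/2}w\|_2 - c\,\sigma\sqrt{\frac{\log p}{n}}\,\|w\|_1 \quad \forall w\in\mathbb{R}^p
\]
for a universal constant $\kappa>0$, with probability at least $1-\exp\{-c_0 n\}$, and then square it using the elementary inequality $(x-y)^2\ge \tfrac12 x^2 - y^2$. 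A short case analysis (depending on the sign of the right-hand side) shows this squaring recovers exactly the claimed quadratic inequality, with the $\ell_1$-penalty coefficient and the leading constant $\tfrac12$ absorbed into $c_1$ after suitably adjusting $\kappa$.

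To prove the one-sided bound, I would normalize to the set $\{w:\|\Sigma^{1/2}w\|_2=1\}$ and control the centered empirical process $w\mapsto \tfrac{\|X^\top w\|_2}{\sqrt n}-\|\Sigma^{1/2}w\|_2$ uniformly over this set intersected with an $\ell_1$-ball of radius $R$. The expected width of this process is governed by the Gaussian/Rademacher complexity of the $\ell_1$-ball, which by duality reduces to bounding $\big\|\tfrac1n\sum_i\varepsilon_i x_i\big\|_\infty$; since each coordinate is sub-Gaussian with variance proxy at most $\sigma^2$, a maximal inequality over the $p$ coordinates produces the $\sigma\sqrt{\log p/n}$ scaling that multiplies $\|w\|_1$. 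A peeling argument over dyadic scales of $R$ then upgrades the estimate on a fixed $\ell_1$-ball to one holding simultaneously for all $w$, while a concentration inequality for Lipschitz functions of the (sub-)Gaussian entries of $X$ supplies the $1-\exp\{-c_0 n\}$ probability.

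The step I expect to be the main obstacle is lower bounding the lower tail of $\tfrac1n\sum_i (x_i^\top w)^2$ uniformly over the sphere \emph{without} assuming any boundedness of $x_i^\top w$. For Gaussian designs this follows immediately from Gordon's comparison inequality, but for general sub-Gaussian designs one must instead invoke Mendelson's small-ball method to certify that the empirical second moment cannot collapse below a constant fraction of its population value; this is precisely the technical core carried out in~\cite{agarwal2012fast}. Granting that lemma, the statement follows at once by the homogeneity-plus-squaring reduction described above, with $\sigma^2$ identified as the diagonal bound $\max_j\Sigma_{jj}$.
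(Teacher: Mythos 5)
Your proposal takes essentially the same approach as the paper: the paper offers no proof of this lemma at all, stating only that it ``follows immediately from''~\cite[Lemma 6]{agarwal2012fast}, which is precisely the citation you identify as the most economical route (with $\sigma^2$ identified as the bound on $\max_j \Sigma_{jj}$). Your further reconstruction of the underlying empirical-process argument (one-sided bound, peeling, small-ball method) goes beyond anything the paper records and correctly locates the technical core inside the cited reference, so it stands as a faithful, if optional, elaboration.
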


\begin{proof}[Proof of Corollary~\ref{corol:generalization_barw_logisticreg}]
Given that $\|x_i\|\le 1$, we have $\ell(w;\xi_i)$ is $L$-smooth with $L\le4s(2y_iw^\top x_i)(1-s(2y_iw^\top x_i))\le1$. Since $\|w\|\le R$, we must have $|y_iw^\top x_i|\le R$ and thus the logistic loss $\ell(w;\xi_i) = \log(1+\exp(-2y_i w^\top x_i))$ satisfies $\ell(w;\xi_i) \le \mathcal{O}(R)$ and $[\Lambda(w)]_{ii}=4s(2y_iw^\top x_i)(1-s(2y_iw^\top x_i))\ge \frac{4}{(1+\exp(2R))^2}\ge\frac{1}{\exp(4R)}$. It follows that
\[
\nabla^2 F_S(w) = \frac{1}{n}X \Lambda(w) X^\top \succeq \frac{1}{n\exp(4R)} XX^\top = \frac{1}{\exp(4R)}\Sigma.
\]
In view of Lemma~\ref{lemma:strong_convexity_subgaussian} and the fact $\|w\|_1\le \sqrt{k}\|w\|$ when $\|w\|_0\le k$ we can verify that with probability at least $1- \exp\{-c_0 n\}$, $F_S(w)$ is $\mu_{4k}$-strongly convex with
\[
\mu_{4k}= \frac{1}{\exp(4R)} \left(\frac{1}{2}\lambda_{\min}(\Sigma) - \frac{k c_1\log (p)}{n}\right).
\]
Provided that $n \ge \frac{4kc_1 \log (p)}{\lambda_{\min}(\Sigma)}$, we have $\mu_{4k}\ge \frac{\lambda_{\min}(\Sigma)}{4\exp(4R)}$ holds with probability at least $1- \exp\{-c_0 n\}$. By invoking Theorem~\ref{thrm:uniform_stability_iht}, after sufficiently large $T\ge\mathcal{O}\left(\frac{\exp(R)}{\lambda_{\min}(\Sigma)}\log \left(\frac{nR}{k\log(n)\log(p/k)}\right)\right)$ rounds of IHT iteration, with probability at least $1 - \delta - \exp\{-c_0 n\}$ the sparse excess risk of IHT converges at the rate of
\[
\mathcal{O}\left(\frac{\exp(R)}{\lambda^{3/4}_{\min}(\Sigma)}\sqrt{\frac{\log(n)(\log(1/\delta)+k\log(p/k))}{n}} + R\sqrt{\frac{\log(1/\delta)}{n}}\right).
\]
This completes the proof.
\end{proof}

\subsection{Proof of Theorem~\ref{thrm:uniform_stability_strong_iht}}
\label{apdsect:proof_uniform_stability_strong_iht}

In this subsection, we present the detailed proof of Theorem~\ref{thrm:uniform_stability_strong_iht}. In what follows, we will frequently use the operator $\mathrm{H}_J(w)$ which is defined as the restriction of $w$ over an index set $J$. We also will use the abbreviation $\mathrm{H}_J(\nabla F(w))=\nabla_J F(w)$ for the sake of notation simplicity. The following lemma is simple yet useful in our analysis.
\begin{lemma}\label{lemma:strong_smooth}
Assume that a differentiable function $f$ is $\mu_s$-strongly convex and $L_s$-smooth. For any index set $J$ with cardinality $|J| \le s$ and any $w,w'$
with $\supp(w)\cup \supp(w')\subseteq J$, if $\eta\in(0, 2 / (L_s + \mu_s) )$, then
\[
\left\|w - w' - \eta \nabla_J f(w) + \eta \nabla_J f(w')\right\| \le \left(1 - \frac{\eta L_s\mu_s}{L_s + \mu_s}\right)\|w - w'\|.
\]
\end{lemma}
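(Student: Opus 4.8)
The plan is to reduce this restricted contraction statement to the classical contraction property of a single gradient-descent step on an (unrestricted) strongly convex and smooth function, by passing to the coordinate subspace indexed by $J$. Concretely, I would regard $f$ restricted to the $|J|$-dimensional subspace $\{w : \supp(w)\subseteq J\}$ as a function $g$ on $\mathbb{R}^{|J|}$. Because $|J|\le s$, any two points of this subspace have a difference of sparsity at most $s$, so the inequalities of Definition~\ref{def:strong_smooth} apply verbatim and show that $g$ is honestly $\mu_s$-strongly convex and $L_s$-smooth on $\mathbb{R}^{|J|}$. Moreover, $\nabla g$ coincides with the restriction $\nabla_J f$ evaluated on the subspace, and since every vector involved is supported on $J$, the subspace Euclidean norm equals the ambient $\|\cdot\|$. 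Thus it suffices to prove the contraction for an ordinary $\mu_s$-strongly convex, $L_s$-smooth function, with all norms and inner products read as ambient quantities.

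The key analytic ingredient is the standard strong-convexity-plus-smoothness co-coercivity (interpolation) inequality, which I would establish by applying the usual co-coercivity of the convex, $(L_s-\mu_s)$-smooth function $w\mapsto f(w)-\frac{\mu_s}{2}\|w\|^2$ on the subspace. This yields the restricted bound
\[
\langle \nabla_J f(w) - \nabla_J f(w'), w - w'\rangle \ge \frac{\mu_s L_s}{\mu_s + L_s}\|w-w'\|^2 + \frac{1}{\mu_s+L_s}\|\nabla_J f(w) - \nabla_J f(w')\|^2,
\]
valid precisely because $w-w'$ is $s$-sparse so that Definition~\ref{def:strong_smooth} is in force.

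With this in hand I would expand
\[
\left\|w - w' - \eta(\nabla_J f(w) - \nabla_J f(w'))\right\|^2 = \|w-w'\|^2 - 2\eta\langle \nabla_J f(w) - \nabla_J f(w'), w-w'\rangle + \eta^2\|\nabla_J f(w) - \nabla_J f(w')\|^2,
\]
substitute the co-coercivity inequality, and collect terms. The coefficient multiplying $\|\nabla_J f(w) - \nabla_J f(w')\|^2$ becomes $\eta^2 - \frac{2\eta}{L_s+\mu_s} = \eta\left(\eta - \frac{2}{L_s+\mu_s}\right)$, which is nonpositive exactly when $\eta \le 2/(L_s+\mu_s)$; discarding this nonpositive term leaves $\left(1 - \frac{2\eta L_s\mu_s}{L_s+\mu_s}\right)\|w-w'\|^2$. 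Setting $a = \frac{\eta L_s\mu_s}{L_s+\mu_s}$, the condition $\eta < 2/(L_s+\mu_s)$ together with $(L_s+\mu_s)^2 \ge 4L_s\mu_s$ gives $a < 1/2$, so $1-2a>0$ and $1-a>0$; using $1-2a \le (1-a)^2$ and taking square roots then yields $\|w - w' - \eta(\nabla_J f(w) - \nabla_J f(w'))\| \le (1-a)\|w-w'\|$, which is the claimed bound.

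The only step requiring genuine care is the subspace reduction in the first paragraph: I must check that the restricted conditions of Definition~\ref{def:strong_smooth}, which constrain only pairs with $s$-sparse differences, become exactly the unrestricted conditions once confined to the coordinate subspace of dimension $|J|\le s$, so that the classical co-coercivity inequality can be invoked without any modification. Once that bookkeeping is settled, everything afterward is routine algebra identical to the dense gradient-descent contraction argument.
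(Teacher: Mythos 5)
Your proposal is correct and takes essentially the same route as the paper's own proof: both derive the restricted co-coercivity inequality $\langle \nabla_J f(w) - \nabla_J f(w'), w - w'\rangle \ge \frac{1}{L_s+\mu_s}\|\nabla_J f(w) - \nabla_J f(w')\|^2 + \frac{L_s\mu_s}{L_s+\mu_s}\|w-w'\|^2$ from the convex, $(L_s-\mu_s)$-smooth auxiliary function $f(w)-\frac{\mu_s}{2}\|w\|^2$, expand the square, discard the nonpositive gradient term using $\eta \le 2/(L_s+\mu_s)$, and conclude via $1-2a\le(1-a)^2$. The only differences are presentational: your explicit reduction to the $|J|$-dimensional coordinate subspace and your sign check $a<1/2$ make rigorous some bookkeeping that the paper leaves implicit.
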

\begin{proof}
Since $f$ is $\mu_s$-strongly convex over $J$, we have that $g(w)=f(w) - \frac{\mu_s\|w\|^2}{2}$ is convex and $(L_s-\mu_s)$-smooth when restricted to $J$. Then based on the co-coercivity of $\nabla g$ we know that
\[
\langle \nabla_J g(w) - \nabla_J g(w'), w - w'\rangle \ge \frac{1}{L_s - \mu_s}\| \nabla_J g(w) - \nabla_J g(w')\|^2,
\]
which then yields
\[
\langle \nabla_J f(w) - \nabla_J f(w'), w - w'\rangle \ge \frac{1}{L_s+\mu_s}\|\nabla_J f(w) - \nabla_J f(w')\|^2 + \frac{L_s\mu_s}{L_s+\mu_s}\|w-w'\|^2.
\]
Based on this inequality we can show
\[
\begin{aligned}
&\|w - w' - \eta \nabla_J f(w) + \eta \nabla_J f(w')\|^2 \\
=& \|w-w'\|^2 - 2\eta \langle \nabla_J f(w) - \nabla_J f(w'), w - w'\rangle + \eta^2 \|\nabla_J f(w) - \nabla_J f(w')\|^2\\
\le& \left(1 - \frac{2\eta L_s\mu_s}{L_s+\mu_s}\right)\|w - w'\|^2 - \left(\frac{2\eta}{L_s+\mu_s} - \eta^2\right)\| \nabla_J f(w) - \nabla_J f(w')\|^2 \\
\le& \left(1 - \frac{2\eta L_s\mu_s}{L_s+\mu_s}\right)\|w - w'\|^2 \le \left(1 - \frac{\eta L_s\mu_s}{L_s+\mu_s}\right)^2\|w - w'\|^2,
\end{aligned}
\]
where in the last but one inequality we have used the assumption on $\eta$ and the last inequality follows from $1-2a\le (1-a)^2$. This readily implies the desired bound.
\end{proof}

The following key lemma shows that if the population function $F$ is IHT stable, then the supporting set of the sparse solution returned by IHT invoked on the empirical risk $F_S$ is also unique provided that $F_S$ is close enough to $F$ along the solution path of IHT.
\begin{lemma}\label{lemma:emp_iht_stability}
For a fixed data sample $S$, assume that $F_S$ is $\mu_{4k}$-strongly convex and $L_{4k}$-smooth. Let $\{w^{(t)}\}_{t=1}^T$ and $\{w_{S,k}^{(t)}\}_{t=1}^T$ respectively be the sequence generated by invoking IHT on $F$ and $F_S$ with step-size $\eta= \frac{2}{3L_{4k}}$ and initialization $w^{(0)}$. Suppose that the population risk function $F$ is $(\varepsilon_{k},\eta, T, w^{(0)})$-IHT stable and $\left\|\nabla F_S(w^{(t)}) - \nabla F(w^{(t)})\right\|\le \frac{L_{4k}\mu_{4k}\varepsilon_k}{2(L_{4k}+\mu_{4k})}$, $\forall t\in [T]$. Then we have
\[
\|w^{(t)}_{S,k} - w^{(t)}\| < \varepsilon_k/2, \ \ \supp\left(w^{(t)}_{S,k}\right) = \supp\left(w^{(t)}\right), \  \ \forall t\in [T].
\]
\end{lemma}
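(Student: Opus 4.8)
The plan is to establish both claims together by induction on $t$, tracking the discrepancy vector $d^{(t)}:=w^{(t)}_{S,k}-w^{(t)}$. Introduce the pre-thresholding iterates $v^{(t)}:=w^{(t-1)}-\eta\nabla F(w^{(t-1)})$ and $\hat v^{(t)}:=w^{(t-1)}_{S,k}-\eta\nabla F_S(w^{(t-1)}_{S,k})$, so that $w^{(t)}=\mathrm{H}_k(v^{(t)})$ and $w^{(t)}_{S,k}=\mathrm{H}_k(\hat v^{(t)})$. The base case is immediate: $w^{(0)}_{S,k}=w^{(0)}$ gives $d^{(0)}=0$ and equal (possibly empty) supports. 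The inductive hypothesis at step $t-1$ reads $\supp(w^{(t-1)}_{S,k})=\supp(w^{(t-1)})=:J$ with $|J|\le k$ and $\|d^{(t-1)}\|<\varepsilon_k/2$.

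The technical core is a single restricted estimate valid for \emph{every} index set $I\supseteq J$ with $|I|\le 4k$: writing $\kappa:=1-\frac{\eta L_{4k}\mu_{4k}}{L_{4k}+\mu_{4k}}$, I will show
\[
\|(v^{(t)}-\hat v^{(t)})_I\|\le \kappa\|d^{(t-1)}\|+\frac{\mu_{4k}\varepsilon_k}{3(L_{4k}+\mu_{4k})}.
\]
To prove it, restrict $v^{(t)}-\hat v^{(t)}$ to $I$; since $\supp(d^{(t-1)})\subseteq J\subseteq I$ the non-gradient part is exactly $w^{(t-1)}-w^{(t-1)}_{S,k}$, and inserting $\pm\eta\nabla_I F_S(w^{(t-1)})$ splits the rest into a contraction term $w^{(t-1)}-w^{(t-1)}_{S,k}-\eta(\nabla_I F_S(w^{(t-1)})-\nabla_I F_S(w^{(t-1)}_{S,k}))$ and a discrepancy term $\eta(\nabla_I F(w^{(t-1)})-\nabla_I F_S(w^{(t-1)}))$. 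Lemma~\ref{lemma:strong_smooth} applied to $F_S$ on $I$ — legitimate because both points are supported on $J\subseteq I$, $|I|\le 4k$, and $\eta=\frac{2}{3L_{4k}}<\frac{2}{L_{4k}+\mu_{4k}}$ as $L_{4k}\ge\mu_{4k}$ — bounds the contraction term by $\kappa\|d^{(t-1)}\|$, while the assumed gradient-discrepancy bound at $w^{(t-1)}$ bounds the discrepancy term by $\eta\cdot\frac{L_{4k}\mu_{4k}\varepsilon_k}{2(L_{4k}+\mu_{4k})}=\frac{\mu_{4k}\varepsilon_k}{3(L_{4k}+\mu_{4k})}$. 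Plugging $\|d^{(t-1)}\|<\varepsilon_k/2$ into the estimate and using the identity $\kappa\frac{\varepsilon_k}{2}+\frac{\mu_{4k}\varepsilon_k}{3(L_{4k}+\mu_{4k})}=\frac{\varepsilon_k}{2}$ — which holds precisely because $\eta L_{4k}=\frac{2}{3}$ forces $\kappa=1-\frac{2\mu_{4k}}{3(L_{4k}+\mu_{4k})}$ — yields $\|(v^{(t)}-\hat v^{(t)})_I\|<\varepsilon_k/2$ for all admissible $I$.

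Both conclusions now follow from this restricted bound. Choosing $I=J\cup\{j\}$ for each coordinate $j$ gives $\|v^{(t)}-\hat v^{(t)}\|_\infty<\varepsilon_k/2$. Because $F$ is $(\varepsilon_k,\eta,T,w^{(0)})$-IHT stable, the vector $v^{(t)}=w^{(t-1)}-\eta\nabla F(w^{(t-1)})$ is $\varepsilon_k$-hard-thresholding stable, so an $\ell_\infty$-perturbation of magnitude strictly below $\varepsilon_k/2$ cannot change its top-$k$ support; hence $\supp(w^{(t)}_{S,k})=\supp(\mathrm{H}_k(\hat v^{(t)}))=\supp(\mathrm{H}_k(v^{(t)}))=\supp(w^{(t)})=:J_t$, which settles the support claim. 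With the supports identified, $d^{(t)}=\mathrm{H}_k(\hat v^{(t)})-\mathrm{H}_k(v^{(t)})=(\hat v^{(t)}-v^{(t)})_{J_t}$, so taking $I=J\cup J_t$ (which contains $J$ and satisfies $|I|\le 2k\le 4k$) in the restricted estimate gives $\|d^{(t)}\|\le\|(v^{(t)}-\hat v^{(t)})_I\|<\varepsilon_k/2$, completing the induction.

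The step I expect to be the main obstacle is controlling $\|d^{(t)}\|$ in $\ell_2$ without picking up a $\sqrt{k}$ factor. The naive route $\|d^{(t)}\|\le\sqrt{|J_t|}\,\|v^{(t)}-\hat v^{(t)}\|_\infty$ inflates the bound to $\sqrt{k}\,\varepsilon_k/2$ and destroys the induction. The resolution is to observe that $d^{(t)}$ is supported on $J_t$ and that the full restricted difference $(v^{(t)}-\hat v^{(t)})_{J\cup J_t}$ inherits the Lemma~\ref{lemma:strong_smooth} contraction directly, so one and the same estimate governs both the $\ell_\infty$ norm used for support recovery and the $\ell_2$ norm used for closeness. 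The tightness of the constant — the exact cancellation tied to the choice $\eta=\frac{2}{3L_{4k}}$ — is what prevents the recursion from amplifying, and verifying this cancellation (together with the bookkeeping needed to invoke the gradient-discrepancy hypothesis at each iterate $w^{(t-1)}$) is the delicate part of the argument.
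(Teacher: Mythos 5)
Your proof is correct and follows essentially the same route as the paper's: induction on $t$, the same decomposition of the pre-thresholding difference into a Lemma~\ref{lemma:strong_smooth} contraction term plus the $\eta$-scaled gradient-discrepancy term, the same exact cancellation at $\eta=\frac{2}{3L_{4k}}$ that keeps the bound at $\varepsilon_k/2$, and the same appeal to $\varepsilon_k$-hard-thresholding stability of $w^{(t-1)}-\eta\nabla F(w^{(t-1)})$ to pin down the support. The only (cosmetic) difference is bookkeeping: the paper applies the estimate once on the single set $J=\supp(w^{(t)})\cup\supp(w^{(t)}_{S,k})\cup\supp(w^{(t+1)})\cup\supp(w^{(t+1)}_{S,k})$ of size at most $4k$, whereas you run the identical estimate over a family of sets $I\supseteq J$ (coordinate-augmented sets $J\cup\{j\}$ for the $\ell_\infty$/support step, $J\cup J_t$ for the $\ell_2$ step), which buys nothing substantive since both routes hinge on the same contraction and the same $4k$-sparsity count.
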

\begin{proof}
We show by induction that $\forall t\in \{0\} \cup[T]$, $\|w^{(t)}_{S,k} - w^{(t)}\| < \varepsilon_k/2$ and $\supp(w_{S,k}^{(t)})=\supp(w^{(t)})$. The base case $t=0$ holds trivially as  $w_{S,k}^{(0)}=w^{(0)}$. Suppose that the claim holds for some $t\ge 0$. Now consider the case $t+1$. Denote $J_S^{(\tau)}=\supp(w_{S,k}^{(\tau)})$ and $J^{(\tau)}=\supp(w^{(\tau)})$ for $\tau=t,t+1$ and $J=\bigcup_{\tau=t}^{t+1} \left(J_S^{(\tau)}\bigcup J^{(\tau)}\right)$. Then we must have $|J|\le 4k$. Let us consider the following pair of vectors:
\[
\hat w_{S,k}^{(t+1)}:= \mathrm{H}_J \left(w_{S,k}^{(t)} - \eta \nabla F_S(w_{S,k}^{(t)})\right), \quad\hat w^{(t+1)}:=\mathrm{H}_J \left( w^{(t)} - \eta \nabla F(w^{(t)})\right).
\]
We can show that
\[
\begin{aligned}
&\left\|\hat w_{S,k}^{(t+1)} - \hat w^{(t+1)}\right\| = \left\| w_{S,k}^{(t)} - \eta \nabla_J F_S(w_{S,k}^{(t)}) - w^{(t)} + \eta \nabla_J F(w^{(t)})\right\| \\
=&  \left\| w_{S,k}^{(t)} - w^{(t)} - \eta \nabla_J F_S(w_{S,k}^{(t)}) + \eta \nabla_J F_S(w^{(t)}) - \eta \nabla_J F_S(w^{(t)}) + \eta \nabla_J F(w^{(t)})\right\| \\
\le& \left\| w_{S,k}^{(t)} - w^{(t)} - \eta \nabla_J F_S(w_{S,k}^{(t)}) + \eta \nabla_J F_S(w^{(t)})\right\| + \eta \left\|\nabla F_S(w^{(t)}) - \nabla F(w^{(t)})\right\| \\
\overset{\zeta_1}{\le}& \left(1-\frac{2\mu_{4k}}{3(L_{4k}+\mu_{4k})}\right)\left\|w_{S,k}^{(t)} - w^{(t)}\right\| + \frac{2}{3L}\left\|\nabla F_S(w^{(t)}) - \nabla F(w^{(t)})\right\| \\
\overset{\zeta_2}{<}&\left(1-\frac{2\mu_{4k}}{3(L_{4k} + \mu_{4k})}\right) \frac{\varepsilon_k}{2} + \frac{2\mu_{4k}}{3(L_{4k}+\mu_{4k})} \frac{\varepsilon_k}{2} = \frac{\varepsilon_k}{2},
\end{aligned}
\]
where in ``$\zeta_1$'' we have used Lemma~\ref{lemma:strong_smooth} with $\eta= 2/(3L_{4k})$, and ``$\zeta_2$'' follows from the induction assumption and the bound on $\left\|\nabla F_S(w^{(t)}) - \nabla F(w^{(t)})\right\|$. By definition $J^{(t+1)}\subseteq J$, and thus it holds trivially that $J^{(t+1)}$ also uniquely contains the top $k$ (in magnitude) entries of $\hat w^{(t+1)}$ as a restriction of $w^{(t+1)}$ over $J$. Based on this observation, since $F$ is $(\varepsilon_{k},\eta, T, w^{(0)})$-IHT stable, $w^{(t)} - \eta \nabla F(w^{(t)})$ must be $\varepsilon_{k}$-hard-thresholding stable which then implies that $\hat w^{(t+1)}$ is also $\varepsilon_{k}$-hard-thresholding stable. Therefore, the preceding inequality readily indicates that $\hat w_{S,k}^{(t+1)}$ and $\hat w^{(t+1)}$ share the identical top $k$ entries, and thus $J_S^{(t+1)}=J^{(t+1)}$. Consequently, based on the preceding inequality we can show that
\[
\left\|w_{S,k}^{(t+1)} - w^{(t+1)}\right\| \le \left\|\hat w_{S,k}^{(t+1)} - \hat w^{(t+1)}\right\| < \varepsilon_k/2.
\]
This shows that the claim holds for $t+1$ and the proof is concluded.
\end{proof}
Now we are ready to prove the main result.
\begin{proof}[Proof of Theorem~\ref{thrm:uniform_stability_strong_iht}]
Let us define an oracle sequence $\{w^{(t)}\}_{t=1}^T$ generated by applying $T$ rounds of IHT iteration to the population risk $F$ with the considered fixed initialization $w^{(0)}$ and step-size $\eta$. Since $F$ is $(\varepsilon_{k},\eta, T, w^{(0)})$-IHT stable, the sequence $\{w^{(t)}\}_{t=1}^T$ is unique and deterministic. Since $\|\nabla \ell(w;\cdot)\|\le G$, from the Hoeffding's concentration bound and union probability we know that with probability at least $1-\frac{\delta}{2}$ over $S$,
\[
\sup_{t\in [T]}\|\nabla F_S(w^{(t)}) - \nabla F(w^{(t)})\| \le G \sqrt{\frac{\log(2pT/\delta)}{2n}} \le \frac{L_{4k}\mu_{4k}\varepsilon_k}{2(L_{4k}+\mu_{4k})},
\]
where the last inequality is due to the condition on sample size $n$. The assumptions in the theorem imply that $F_S$ is $L_{4k}$-smooth and $\mu_{4k}$-strongly convex with probability at least $1-\delta'_n$ over $S$. Therefore, by invoking Lemma~\ref{lemma:emp_iht_stability} and union probability we know that with probability at least $1-\delta'_n -\frac{\delta}{2}$ over $S$, the following event occurs:
\[
 \mathcal{E}_1:\left\{ \supp\left(w^{(T)}_{S,k}\right) = \supp\left(w^{(T)}\right) \right\}.
\]
Since $J:=\supp\left(w^{(T)}\right)$ is a fixed deterministic index set of size $k$, using a similar proof argument to that of Lemma~\ref{lemma:support_stability} (keep in mind that $\mu_{4k}\le \mu_k$) we can show the following event occurs holds with probability at least $1-\delta_n'-\frac{\delta}{2}$:
\[
 \mathcal{E}_2:\left\{ F(w_{S\mid J}) - F_S(w_{S\mid J}) \le \mathcal{O}\left(\frac{G^2}{\lambda n}\log(n)\log\left(\frac{1}{\delta}\right) + M\sqrt{\frac{\log(1/\delta)}{n}} + \frac{\lambda G \sqrt{M}}{\mu_{4k}\sqrt{\mu_{4k}}}\right)\right\}.
\]
Now let us consider the event
\[
 \mathcal{E}:\left\{ F(\tilde w^{(T)}_{S,k}) - F_S(\tilde w^{(T)}_{S,k}) \le \mathcal{O}\left(\frac{G^2}{\lambda n}\log(n)\log\left(\frac{1}{\delta}\right) + M\sqrt{\frac{\log(1/\delta)}{n}} + \frac{\lambda G \sqrt{M}}{\mu_{4k}\sqrt{\mu_{4k}}}\right) \right\}.
\]
Since $\mathcal{E}\supseteq \mathcal{E}_1 \cap \mathcal{E}_2$, we must have
\[
\mathbb{P}\left(\mathcal{E} \right) \ge \mathbb{P}\left(\mathcal{E}_1\cap \mathcal{E}_2 \right) \ge 1- \mathbb{P}\left(\overline{\mathcal{E}}_1 \right) -\mathbb{P}\left(\overline{\mathcal{E}}_2 \right) \ge 1 - 2\delta'_n - \delta.
\]
Setting $\lambda=\sqrt{\frac{G\mu_{4k}^{1.5}\log(n)\log(1/\delta)}{nM^{0.5}}}$ in the event $\mathcal{E}$ and preserving the leading terms yields that with probability at least $1-2\delta'_n-\delta$:
\begin{equation}\label{inequat:generalization_gap_bound_strong}
F(\tilde w^{(T)}_{S,k}) - F_S(\tilde w^{(T)}_{S,k}) \le \mathcal{O}\left(\frac{G^{3/2}M^{1/4}}{\mu_{4k}^{3/4}}\sqrt{\frac{\log(n)\log(1/\delta)}{n}}\right).
\end{equation}
Following a similar argument to that of Theorem~\ref{thrm:uniform_stability_iht} we can show that if $T=\mathcal{O}\left(\frac{L_{4k}}{\mu_{4k}}\log \left(\frac{M}{\epsilon}\right)\right)$ is sufficiently large, then with probability at least $1-\delta-2\delta'_n$
\[
F(\tilde w^{(T)}_{S,k}) - F(\bar w) \le \mathcal{O}\left(\frac{G^{3/2}M^{1/4}}{\mu_k^{3/4}}\sqrt{\frac{\log(n)\log(1/\delta)}{n}} + M\sqrt{\frac{\log(1/\delta)}{n}} + \epsilon \right).
\]
Setting $\epsilon=\mathcal{O}(\sqrt{\log(n)\log(1/\delta)/n})$ we obtain the desired bound. This completes the proof.
\end{proof}

\subsection{Proof of Theorem~\ref{thrm:generalizaion_fast_rate}}
\label{apdsect:proof_theorem_generalizaion_fast_rate}

In this subsection, we prove Theorem~\ref{thrm:generalizaion_fast_rate}. For any fixed $J\subseteq[p]$ with $|J|=k$, let
\[
w^*_J = \argmin_{\supp(w) \subseteq J} F(w).
\]
Before proving the main result, we first establish a key lemma which shows a uniform fast rate of $w_{S\mid J}$ (recall the definition in~\eqref{equat:w_SJ}) towards $w^*_J$ for all $J$ if the population risk is restricted strongly convex and the loss is Lipschitz continuous. To ease notation, we define an abbreviation of loss function as $\ell_w(\cdot):= \ell(w;\cdot)$. Particularly, we write the localized Rademacher complexity restricted over $J$ at $w^*_J$ as:
\[
R_{S\mid J}(r_n; w^*_J):= \mathbb{E}_{\varepsilon}\left[\sup_{\supp(w)\subseteq J,\|\ell_w - \ell_{w^*_J}\|_\infty\le r_n}\left|\frac{1}{n}\sum_{i=1}^n\varepsilon_i [\ell_w(\xi_i) - \ell_{w^*_J}(\xi_i)]\right|\right],
\]
where $\{\varepsilon_i\}$ are i.i.d. Rademacher random variables, i.e., symmetric Bernoulli random variables taking values $+1$ and $-1$ with probability $1/2$ each. The following preliminary result is standard yet useful in our analysis. We provide its proof for the sake of completeness.
\begin{lemma}\label{lemma:rademacher_bound}
Under Assumption~\ref{assump:strong_convex_pop}, there exists some absolute constant $C>0$ such that
\[
R_{S\mid J}(Gr_n; w^*_J) \le C G r_n \sqrt{\frac{k}{n}}\log^{3/2}\left(\frac{1}{r_n} \sqrt{\frac{n}{k}}\right).
\]
\end{lemma}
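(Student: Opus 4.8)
The plan is to recognize the quantity $R_{S\mid J}(Gr_n; w^*_J)$ as the empirical Rademacher complexity of the localized loss class
\[
\mathcal{F}_J := \left\{ \ell_w - \ell_{w^*_J} : \supp(w)\subseteq J,\ \|\ell_w - \ell_{w^*_J}\|_\infty \le Gr_n \right\},
\]
and to bound it by combining a metric-entropy estimate with Dudley's chaining inequality. The crucial structural fact is that $\mathcal{F}_J$ is a $G$-Lipschitz parametric image of a subset of the $k$-dimensional coordinate subspace $\{w : \supp(w)\subseteq J\}$; this is where the sparsity enters, through the effective dimension $k=|J|$, whereas from Assumption~\ref{assump:strong_convex_pop} only the boundedness $\|w\|\le 1$ will be used to control the diameter of the parameter domain (the strong-convexity constant $\rho_k$ does not appear in the target bound).

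First I would pass from the function class to its parametrization. Since $\ell$ is $G$-Lipschitz in its first argument, $\|\ell_w - \ell_{w'}\|_\infty \le G\|w-w'\|$, so any Euclidean $\delta$-net of the parameter set $W_J := \{w : \supp(w)\subseteq J,\ \|w\|\le 1\}$ induces an $L_\infty$-net, hence an $L_2(P_n)$-net, of $\mathcal{F}_J$ at scale $G\delta$. Because $W_J$ sits inside a ball of radius one in a $k$-dimensional subspace, the standard volumetric covering bound gives
\[
\log N\!\left(\epsilon, \mathcal{F}_J, L_2(P_n)\right) \le k\,\log\!\left(\frac{C_0 G}{\epsilon}\right)
\]
for a universal constant $C_0$ and all $\epsilon \le C_0 G$. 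Next I would feed this entropy estimate into Dudley's entropy integral bound for the empirical Rademacher complexity. The localization constraint $\|\ell_w - \ell_{w^*_J}\|_\infty \le Gr_n$ forces the $L_2(P_n)$-radius of $\mathcal{F}_J$ to be at most $Gr_n$, so the entropy integral need only run up to $Gr_n$:
\[
R_{S\mid J}(Gr_n; w^*_J) \le \inf_{\alpha>0}\left( 4\alpha + \frac{12}{\sqrt{n}} \int_{\alpha}^{Gr_n} \sqrt{k\,\log\!\left(\frac{C_0 G}{\epsilon}\right)}\, d\epsilon \right).
\]

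Since the integrand is decreasing in $\epsilon$, I would bound it by its value at $\epsilon=\alpha$ and the interval length by $Gr_n$, then choose the truncation level $\alpha \asymp Gr_n\sqrt{k/n}$ to balance the two terms, which yields
\[
R_{S\mid J}(Gr_n; w^*_J) \le C\, G r_n \sqrt{\frac{k}{n}}\, \sqrt{\log\!\left(\frac{1}{r_n}\sqrt{\frac{n}{k}}\right)}.
\]
The claimed estimate then follows because $\sqrt{\log(\cdot)} \le \log^{3/2}(\cdot)$ whenever the argument exceeds $e$, which holds in the regime of interest.

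The steps are individually routine, so the main point to get right is the bookkeeping in the last two displays: verifying that the $G$-Lipschitz transfer together with the diameter-$2$ domain produces exactly the entropy exponent $k$ and the log argument $C_0 G/\epsilon$, and that the balancing choice of $\alpha$ reproduces the precise factor $\sqrt{k/n}$ together with the log argument $\tfrac{1}{r_n}\sqrt{n/k}$. I expect the tightest form that chaining delivers to carry only a $\log^{1/2}$ factor, so the stated $\log^{3/2}$ is a comfortable over-estimate; equivalently, one may replace Dudley's integral by a single Massart finite-class net at scale $\alpha$, which gives the same scaling and avoids the integral altogether.
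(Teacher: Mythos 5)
Your proposal is correct and follows essentially the same route as the paper's proof: a volumetric covering bound for the $k$-dimensional parameter ball, the $G$-Lipschitz transfer to covering numbers of the loss class, and the entropy-integral (Dudley/Srebro-type) bound truncated at $\alpha \asymp Gr_n\sqrt{k/n}$. The only difference is in the final bookkeeping: the paper bounds the entropy integral via the antiderivative of $\epsilon^{-1}\sqrt{\log(G/\epsilon)}$, which is what produces the stated $\log^{3/2}$ factor, whereas your length-times-maximum bound on the integral yields the sharper $\log^{1/2}$ factor, from which the claimed estimate follows (as you note) whenever $\frac{1}{r_n}\sqrt{n/k}\ge e$ --- a condition that holds in the regime where the lemma is actually invoked.
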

\begin{proof}
Let us restrict the analysis over $\mathcal{W}_J$ as a restriction of $\mathcal{W}$ over $J$. Since $\mathcal{W}$ is assumed to be a subset of unit $\ell_2$-sphere, it is standard (see, for instance,~\cite{boroczky2003covering}) to bound the covering number of $\mathcal{W}_J$ at scale $\epsilon$ with respect to the $\ell_2$-distance as $\log\mathcal{N}(\epsilon, \mathcal{W}_J, \ell_2) \le \mathcal{O}\left(k \log\left(1/\epsilon\right)\right)$. Since the loss function $\ell(w;\xi)$ is $G$-Lipschitz continuous with respect to $w$, it can be verified that the covering number of the class of functions $\mathcal{L}_J=\left\{\xi \mapsto \ell_w(\xi)\mid w \in \mathcal{W}_J\right\}$ with respect to $\ell_\infty$-distance $\|\ell_{w_1} - \ell_{w_2}\|_\infty$ is given by
\[
\log \mathcal{N}(\epsilon, \mathcal{L}_J, \ell_\infty) \le \log \mathcal{N}(\epsilon/G, \mathcal{W}_J, \ell_2) \le \mathcal{O}\left(k\log(G/\epsilon)\right).
\]
Based on the result from~\cite[Lemma A.3]{srebro2010smoothness} on the connection between Rademacher complexity and covering number we can show that
\[
\begin{aligned}
&R_{S\mid J}(Gr_n; w^*_J) \\
\le& \inf_{\alpha>0} \left\{4\alpha + 10 \int_{\alpha}^{Gr_n} \sqrt{\frac{\log \mathcal{N}(\epsilon, \mathcal{L}_J, \ell_\infty)}{n}} d\epsilon \right\} \\
\le& \mathcal{O}\left(4Gr_n\sqrt{\frac{k}{n}} + 10 \int_{Gr_n\sqrt{\frac{k}{n}}}^{G r_n} \sqrt{\frac{k\log(G/\epsilon)}{n}} d\epsilon\right) \\
\le& \mathcal{O}\left(4Gr_n \sqrt{\frac{k}{n}} + 10 G r_n \sqrt{\frac{k}{n}}\int_{Gr_n\sqrt{\frac{k}{n}}}^{G r_n} \frac{\sqrt{\log(G/\epsilon)}}{\epsilon} d\epsilon\right) \\
\overset{\zeta_1}{\le}& \mathcal{O}\left(4Gr_n \sqrt{\frac{k}{n}} + 6.67 G r_n \sqrt{\frac{k}{n}}\log^{3/2}\left(\frac{\sqrt{n}}{r_n \sqrt{k}}\right)\right) \\
\le& \mathcal{O}\left(G r_n \sqrt{\frac{k}{n}}\log^{3/2}\left(\frac{\sqrt{n}}{r_n \sqrt{k}}\right)\right),
\end{aligned}
\]
where in ``$\zeta_1$'' we have used the following fact for $c>b>a>0$:
\[
\int_a^b x^{-1}\sqrt{\log\left(\frac{c}{x}\right)} dx = \frac{2}{3}\left(\log^{3/2}\left(\frac{c}{a}\right) - \log^{3/2}\left(\frac{c}{b}\right)\right) \le \frac{2}{3}\log^{3/2}\left(\frac{c}{a}\right).
\]
This proves the desired bound.
\end{proof}
The following lemma presents a uniform fast rate of $w_{S\mid J}$ for all $J$.
\begin{lemma}\label{lemma:fast_key_1}
Suppose that Assumptions~~\ref{assump:lipschitz},~\ref{assump:strong_convex_pop} are valid. Then for any $\delta\in(0,1)$, it holds with probability at least $1-\delta$ that
\[
\sup_{J\subseteq[p], |J|=k} F(w_{S\mid J}) - F(w^*_J) \le \mathcal{O}\left(\frac{G^2k(\log^{3}(\rho_k n)+ \log(ep/k)) + \log(1/\delta)}{\rho_k n} \right).
\]
\end{lemma}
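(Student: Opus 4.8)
The plan is to run the classical localized Rademacher complexity argument for fast rates, but carried out uniformly over all $\binom{p}{k}$ candidate supports $J$ and coupled with the restricted strong convexity of the population risk $F$ to drive the localization. Fix a support $J$ with $|J|=k$ and write $h_J:=F(w_{S\mid J})-F(w^*_J)\ge 0$. Since $w^*_J$ minimizes $F$ over the subspace supported on $J$, the restricted gradient vanishes, $\nabla_J F(w^*_J)=0$, and because $w-w^*_J$ is $k$-sparse for every $w$ with $\supp(w)\subseteq J$, Assumption~\ref{assump:strong_convex_pop} gives the quadratic growth bound $F(w)-F(w^*_J)\ge \frac{\rho_k}{2}\|w-w^*_J\|^2$. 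Combined with the $G$-Lipschitz continuity of the loss (Assumption~\ref{assump:lipschitz}), this yields the crucial localization
\[
\|\ell_{w_{S\mid J}}-\ell_{w^*_J}\|_\infty\le G\|w_{S\mid J}-w^*_J\|\le G\sqrt{2h_J/\rho_k},
\]
so a small excess risk forces the empirical minimizer into a small $\ell_\infty$-ball around $w^*_J$.

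Next I would reduce the excess risk to a deviation term: by optimality of $w_{S\mid J}$ for $F_S$ restricted to $J$ we have $F_S(w_{S\mid J})\le F_S(w^*_J)$, whence $h_J\le \left|(F_S-F)(w_{S\mid J})-(F_S-F)(w^*_J)\right|$. I would control the right-hand side by the support-restricted form of the localized concentration bound in Lemma~\ref{lemma:concentration_elementary}, union-bounded over the $|\mathcal J|=\binom{p}{k}\le (ep/k)^k$ supports; taking the per-support failure probability $\delta/\binom{p}{k}$ contributes the $\log\binom{p}{k}\asymp k\log(ep/k)$ term alongside $\log(1/\delta)$. Because the localization radius $G\sqrt{2h_J/\rho_k}$ is itself random, I would make it data-adaptive through a standard peeling argument: the excess risk lies in a bounded interval (the loss is bounded), so I partition its range into $O(\log(\rho_k n))$ geometric shells, apply the fixed-radius bound on each shell, and union-bound over shells and supports. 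On the shell containing $w_{S\mid J}$ this gives
\[
h_J\le 2R_{S\mid J}\!\left(G\sqrt{2h_J/\rho_k};\,w^*_J\right)+3G\sqrt{2h_J/\rho_k}\,\sqrt{\frac{2\log\!\big(\tbinom{p}{k}/\delta\big)}{n}} .
\]

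Finally, I would insert the entropy estimate of Lemma~\ref{lemma:rademacher_bound}, namely $R_{S\mid J}(Gr_n;w^*_J)\le CGr_n\sqrt{k/n}\,\log^{3/2}(\tfrac{1}{r_n}\sqrt{n/k})$ evaluated at $r_n=\sqrt{2h_J/\rho_k}$, and solve the resulting self-bounding inequality. Both terms on the right are proportional to $\sqrt{h_J}$, so dividing by $\sqrt{h_J}$ and squaring converts $h_J\lesssim \sqrt{h_J}\cdot(\cdots)$ into
\[
h_J\le \mathcal O\!\left(\frac{G^2k\big(\log^{3}(\rho_k n)+\log(ep/k)\big)+\log(1/\delta)}{\rho_k n}\right),
\]
where the cube of the logarithm arises from squaring the $\log^{3/2}$ factor after substituting the anticipated scale $h_J\asymp G^2k/(\rho_k n)$ into its argument, which pins it at $\log^{3/2}(\rho_k n)$. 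Since all estimates hold on the same high-probability event uniformly over $J$, taking the supremum over $J$ yields the claim. The main obstacle I anticipate is the localization bookkeeping: making the data-dependent radius rigorous via peeling while simultaneously maintaining the union bound over all supports and keeping the logarithmic factor inside the Rademacher term correctly pinned at $\log^{3/2}(\rho_k n)$ rather than at a spuriously inflated value as $h_J\to 0$.
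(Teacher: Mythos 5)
Your proposal is correct and arrives at the same bound, but the localization step is carried out by a genuinely different mechanism than the paper's. You localize via quadratic growth of $F$ at $w^*_J$ applied \emph{directly to the empirical minimizer}, which makes the localization radius $G\sqrt{2h_J/\rho_k}$ random, and you then tame this data-dependence with peeling over $\mathcal{O}(\log(\rho_k n))$ geometric shells plus a self-bounding inequality. The paper instead avoids a random radius altogether: it fixes a \emph{deterministic} radius $r_n$ through the fixed-point condition $\frac{\rho_k r_n^2}{2} \ge 2R_{S\mid J}(Gr_n;w^*_J) + 3Gr_n\sqrt{2\log(4/\delta)/n}$, and exploits convexity of $F_S$ via an intermediate-point argument: if $\|w_{S\mid J}-w^*_J\|>r_n$, the point $\tilde w_J$ on the segment joining $w^*_J$ to $w_{S\mid J}$ at distance exactly $r_n$ still satisfies $F_S(\tilde w_J)\le F_S(w^*_J)$, while $F(\tilde w_J)-F(w^*_J)\ge \rho_k r_n^2/2$, so the deviation supremum over the single fixed ball of radius $Gr_n$ must exceed its Lemma~\ref{lemma:concentration_elementary} bound — an event of probability at most $\delta/2$. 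This buys two things your route has to work for: there is no shell bookkeeping at all, and the $\log^{3/2}$ factor in Lemma~\ref{lemma:rademacher_bound} is automatically pinned at the fixed $r_n\asymp \frac{G}{\rho_k}\sqrt{k\log^3(\rho_k n)/n}$ rather than needing a bottom-shell cutoff to prevent blow-up as $h_J\to 0$ (the very obstacle you flag). Conversely, your route is marginally more general: it never uses convexity of $F_S$, only optimality of $w_{S\mid J}$, whereas the paper's segment trick needs $F_S$ convex to conclude $F_S(\tilde w_J)\le F_S(w^*_J)$; the price is the extra union bound over shells (a harmless $\log\log$ term) and the bookkeeping you describe. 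One small repair to your write-up: under the constraint $w\in\mathcal{W}$ the claim $\nabla_J F(w^*_J)=0$ need not hold since $w^*_J$ may sit on the boundary of $\mathcal{W}$; you should instead invoke the first-order optimality condition $\langle \nabla F(w^*_J), w-w^*_J\rangle\ge 0$ for feasible $w$ supported on $J$, which together with Assumption~\ref{assump:strong_convex_pop} still yields the quadratic growth $F(w)-F(w^*_J)\ge \frac{\rho_k}{2}\|w-w^*_J\|^2$ that both you and the paper rely on.
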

\begin{proof}
Fix a subset $J$ with $|J|=k$. Let $r_n>0$ be an arbitrary scalar that satisfies
\begin{equation}\label{inequat:r_n_condition}
\frac{\rho_k r_n^2}{2} \ge 2 R_{S\mid J}(G r_n; w^*_J) + \frac{3Gr_n\sqrt{2\log(4/\delta)}}{\sqrt{n}}.
\end{equation}
Our first step is to show that
\begin{equation}\label{inequat:restricted_fast_bound}
\mathbb{P}\left(F(w_{S\mid J}) - F(w^*_J) \le \frac{\rho_k r_n^2}{2} \right) \ge 1 - \delta.
\end{equation}
To this end, suppose the event $\|w_{S\mid J} - w^*_J\| > r_n$ occurs. We can verify that the following event occurs consequently:
\[
\sup_{\supp(w)\subseteq J, \|\ell_w- \ell_{w^*_J}\|_\infty \le Gr_n}\left| F_S(w) - F_S(w^*_J) - (F(w) - F(w^*_J))\right|\ge 2R_S(Gr_n; w^*_J) + \frac{3Gr_n\sqrt{2\log(4/\delta)}}{\sqrt{n}}.
\]
Indeed, let us consider
\[
\tilde w_J = (1-\eta_n) w^*_J + \eta_n  w_{S\mid J},
\]
where $\eta_n =  \frac{r_n}{\|w_{S\mid J} - w^*_J\|}< 1$. It is direct to verify that $\|\tilde w_J - w^*_J\| = r_n$. Since $F_S$ is convex, we must have
\[
F_S(\tilde w_J) \le  (1-\eta_n) F_S(w^*_J) + \eta_n F_S(w_{S\mid J})\le F_S(w^*_J).
\]
Note that $\|\ell_{\tilde w_J} - \ell_{w^*_J}\|_\infty\le G\|\tilde w_J - w^*_J\|=Gr_n$. Therefore, we have
\[
\begin{aligned}
&\sup_{\supp(w)\subseteq J, \|\ell_w- \ell_{w^*_J}\|\le Gr_n}\left|F_S(w) - F_S(w^*_J) - (F(w) - F(w^*_J))\right| \\
&\ge \left|F_S(\tilde w_J) - F_S(w^*_J) - (F(\tilde w_J) - F(w^*_J))\right| \\
& \ge  \left|F(\tilde w_J) - F(w^*_J)\right|\\
&\overset{\zeta_1}{\ge} \frac{\rho_k}{2}\|\tilde w_J - w^*_J\|^2 = \frac{\rho_k r_n^2}{2} \ge 2R_{S\mid J}(Gr_n; w^*_J) + \frac{3Gr_n\sqrt{2\log(4/\delta)}}{\sqrt{n}},
\end{aligned}
\]
where in ``$\zeta_1$'' we have used Assumption~\ref{assump:strong_convex_pop} and the last inequality follows from~\eqref{inequat:r_n_condition}. Then, invoking Lemma~\ref{lemma:concentration_elementary} over the supporting set $J$ yields
\[
\begin{aligned}
& \mathbb{P}\left(\|w_{S\mid J} - w^*_J\| > r_n \right) \\
\le& \mathbb{P} \left(\sup_{\supp(w)\subseteq J, \|\ell_w- \ell_{w^*_J}\|\le Gr_n}\left|F_S(w) - F_S(w^*_J) - (F(w) - F(w^*_J))\right|\ge 2R_{S\mid J}(Gr_n; w^*_J) + \frac{3Gr_n\sqrt{2\log(4/\delta)}}{\sqrt{n}}\right)\\
\le& \frac{\delta}{2}.
\end{aligned}
\]
Now let us consider the following three events:
\[
\begin{aligned}
&\mathcal{E}_1:\left\{ F(w_{S\mid J}) - F(w^*_J) \le 2R_{S\mid J}(Gr_n; w^*_J) + \frac{3Gr_n\sqrt{2\log(4/\delta)}}{\sqrt{n}} \right\} ,\\
&\mathcal{E}_2: \left\{\|w_{S\mid J} - w^*_J\| \le r_n\right\} , \\
&\mathcal{E}_3:\left\{\sup_{\supp(w)\subseteq J, \|\ell_w- \ell_{w^*_J}\|\le Gr_n}\left|F_S(w) - F_S(w^*_J) - (F(w) - F(w^*_J))\right|\le 2R_{S\mid J}(Gr_n; w^*_J) + \frac{3Gr_n\sqrt{2\log(4/\delta)}}{\sqrt{n}} \right\}.
\end{aligned}
\]
Note that
\[
\begin{aligned}
&\|w_{S\mid J} - w^*_J\| \le r_n \\
\Rightarrow & \|\ell_{w_{S\mid J}}- \ell_{w^*_J}\|\le Gr_n \\
\Rightarrow &  F(w_{S\mid J}) - F(w^*_J) \le \sup_{\supp(w)\subseteq J, \|\ell_w- \ell_{w^*_J}\|\le Gr_n}\left|F_S(w) - F_S(w^*_J) - (F(w) - F(w^*_J))\right|.
\end{aligned}
\]
Therefore, we must have
\[
\mathbb{P}\left(\mathcal{E}_1\right) \ge \mathbb{P}\left(\mathcal{E}_1\cap \mathcal{E}_2\right) \ge \mathbb{P}\left(\mathcal{E}_3\cap \mathcal{E}_2\right) \ge 1 -  \mathbb{P}\left(\overline{\mathcal{E}_2}\right) - \mathbb{P}\left(\overline{\mathcal{E}_3}\right) \ge 1 - \frac{\delta}{2} - \frac{\delta}{2} = 1 - \delta,
\]
which together with~\eqref{inequat:r_n_condition} implies the desired bound in~\eqref{inequat:restricted_fast_bound}.

The next step is to properly choose $r_n$ so as to fulfill the key condition of~\eqref{inequat:r_n_condition}. Based on the bound on $R_{S\mid J}(Gr_n; w^*_J)$ as summarized in Lemma~\ref{lemma:rademacher_bound}, there exists some $C>0$ such that
\[
\begin{aligned}
&\frac{\rho_k r_n^2}{2} \ge 2 R_{S\mid J}(G r_n; w^*_J) + \frac{3Gr_n\sqrt{2\log(4/\delta)}}{\sqrt{n}}\\
\Leftarrow&
\frac{\rho_k r_n^2}{2} \ge 2 C G r_n \sqrt{\frac{k}{n}}\log^{3/2}\left(\frac{\sqrt{n}}{r_n \sqrt{k}}\right) + \frac{3Gr_n\sqrt{2\log(4/\delta)}}{\sqrt{n}} \\
\Leftarrow&  r_n \ge \frac{4CG\sqrt{k}\log^{3/2}\left(\frac{\sqrt{n}}{r_n \sqrt{k}}\right)+6G\sqrt{2\log(4/\delta)}}{\rho_k \sqrt{n}}.
\end{aligned}
\]
Therefore, it suffices to choose
\[
r_n = \mathcal{O}\left(\frac{G\sqrt{k}\log^{3/2}\left(\rho_k n\right)+G\sqrt{\log(1/\delta)}}{\rho_k\sqrt{n}}\right) \le \mathcal{O}\left(\frac{G}{\rho_k}\sqrt{\frac{k\log^3(\rho_k n)+\log(1/\delta)}{n}}\right).
\]
Substituting the above choice of $r_n$ to~\eqref{inequat:restricted_fast_bound} yields
\[
F(w_{S\mid J}) - F(w^*_J) \le \mathcal{O}\left(\frac{G^2k\log^{3}(\rho_k n) + \log(1/\delta)}{\rho_k n} \right).
\]
As the final step, since there are at most $\binom{p}{k}\le \left(\frac{ep}{k}\right)^k$ different $J$, by union probability we get
\[
\sup_{J\subseteq[p], |J|=k} F(w_{S\mid J}) - F(w^*_J) \le \mathcal{O}\left(\frac{G^2k(\log^{3}(\rho_k n)+ \log(ep/k)) + \log(1/\delta)}{\rho_k n} \right).
\]
This completes the proof.
\end{proof}

To prove the main result, we also need to prove the following lemma which basically provides a sufficient condition to guarantee the support recovery performance of IHT.
\begin{lemma}\label{lemma:support_recovery_bar_w}
Suppose that $F_S$ is $\mu_{2k}$-strongly convex with probability at least $1- \delta'_n$. Assume that the loss function $\ell$ is $G$-Lipschitz. Suppose that there exists a $\bar k$-sparse vector $\bar w$ such that
\[
\bar w_{\min}> \frac{2\sqrt{2k}\|\nabla F(\bar w)\|_\infty}{\mu_{2k}} + \frac{3G}{\mu_{2k}} \sqrt{\frac{k\log(p/\delta)}{n}}
\]
for some $\delta\in (0, 1-\delta'_n)$. Then for sufficiently large $T\ge\mathcal{O}\left(\frac{L_{2k}}{\mu_{2k}}\log \left(\frac{n\mu_{2k}}{kG\log(n)\log(p/k)}\right)\right)$ rounds of IHT iteration, the support recovery $\supp(\bar w) \subseteq \supp(w_{S,k}^{(T)})$ holds with probability at least $1-\delta - \delta'_n$.
\end{lemma}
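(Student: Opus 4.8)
The plan is to reduce the support-recovery claim to a parameter-estimation bound combined with the signal-strength assumption. The underlying observation is elementary: if $\|w^{(T)}_{S,k} - \bar w\|_\infty < \bar w_{\min}$, then no coordinate $j \in \supp(\bar w)$ can vanish in $w^{(T)}_{S,k}$, since otherwise $(w^{(T)}_{S,k})_j = 0$ together with $|\bar w_j| \ge \bar w_{\min}$ would force $\|w^{(T)}_{S,k} - \bar w\|_\infty \ge \bar w_{\min}$, a contradiction. Hence $\supp(\bar w) \subseteq \supp(w^{(T)}_{S,k})$, and it suffices to control $\|w^{(T)}_{S,k} - \bar w\|$ and certify that it is strictly below $\bar w_{\min}$.

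First I would invoke the RIP-free convergence guarantee of Lemma~\ref{lemma:convergence_iht}: after a sufficiently large number of iterations one has $F_S(w^{(T)}_{S,k}) \le F_S(\bar w_{S,k}) + \epsilon$ for any target accuracy $\epsilon > 0$, where $\bar w_{S,k}$ is the empirical $\bar k$-sparse minimizer. Since $\bar w$ is itself $\bar k$-sparse, $F_S(\bar w_{S,k}) \le F_S(\bar w)$, so $F_S(w^{(T)}_{S,k}) \le F_S(\bar w) + \epsilon$. Because $w^{(T)}_{S,k}$ is $k$-sparse and $\bar w$ is $\bar k$-sparse with $\bar k \le k$, their difference is at most $2k$-sparse, so I can apply the estimation-error Lemma~\ref{lemma:estimation_error} to $F_S$ at level $s = 2k$ (on the $\mu_{2k}$-strong-convexity event of probability $1 - \delta'_n$) to obtain
\[
\|w^{(T)}_{S,k} - \bar w\| \le \frac{2\sqrt{2k}\,\|\nabla F_S(\bar w)\|_\infty}{\mu_{2k}} + \sqrt{\frac{2\epsilon}{\mu_{2k}}}.
\]

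The next step converts the empirical gradient into the population gradient that appears in the hypothesis. Because $\ell$ is $G$-Lipschitz, each coordinate $\nabla_j \ell(\bar w;\xi)$ lies in $[-G,G]$, so $\nabla_j F_S(\bar w)$ is an average of bounded i.i.d.\ terms with mean $\nabla_j F(\bar w)$. Hoeffding's inequality and a union bound over the $p$ coordinates give
\[
\|\nabla F_S(\bar w) - \nabla F(\bar w)\|_\infty \le G\sqrt{\frac{\log(2p/\delta)}{2n}}
\]
with probability at least $1 - \delta$, whence $\|\nabla F_S(\bar w)\|_\infty \le \|\nabla F(\bar w)\|_\infty + G\sqrt{\log(2p/\delta)/(2n)}$. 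I would then choose $\epsilon$ small enough (equivalently, $T$ large enough to match the stated iteration count) that the optimization remainder $\sqrt{2\epsilon/\mu_{2k}}$ is dominated by $\tfrac{G}{\mu_{2k}}\sqrt{k\log(p/\delta)/n}$. Substituting back and collecting constants yields
\[
\|w^{(T)}_{S,k} - \bar w\| \le \frac{2\sqrt{2k}\,\|\nabla F(\bar w)\|_\infty}{\mu_{2k}} + \frac{3G}{\mu_{2k}}\sqrt{\frac{k\log(p/\delta)}{n}},
\]
which is strictly smaller than $\bar w_{\min}$ by assumption. The elementary observation then gives $\supp(\bar w)\subseteq\supp(w^{(T)}_{S,k})$, and a union bound over the strong-convexity event ($\delta'_n$) and the gradient-concentration event ($\delta$) delivers the claimed probability $1-\delta-\delta'_n$.

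The main obstacle I anticipate is bookkeeping rather than conceptual: the estimation bound naturally produces $\|\nabla F_S(\bar w)\|_\infty$, whereas the hypothesis is phrased in the population gradient $\|\nabla F(\bar w)\|_\infty$, so I must bridge the two by concentration and then verify that the concentration deviation and the tunable optimization error together fit inside the single $\tfrac{3G}{\mu_{2k}}\sqrt{k\log(p/\delta)/n}$ slack term (the constant $3$ must absorb the $2\sqrt{2}/\sqrt{2}=2$ contributed by concentration and a unit's worth from the $\sqrt{2\epsilon/\mu_{2k}}$ remainder). A secondary point is that Lemma~\ref{lemma:convergence_iht} is stated with $3k$-level constants while the estimation step uses $2k$-level ones; monotonicity of $\mu_s$ and $L_s$, together with Assumptions~\ref{assump:strongly_convex} and~\ref{assump:sparisty_relaxation}, reconciles these indices so that no extra assumption is needed.
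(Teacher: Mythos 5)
Your proposal is correct and follows essentially the same route as the paper's proof: invoke the convergence guarantee of Lemma~\ref{lemma:convergence_iht} to get $F_S(w^{(T)}_{S,k}) \le F_S(\bar w) + \epsilon$, apply the estimation-error bound of Lemma~\ref{lemma:estimation_error} at sparsity level $2k$, bridge $\|\nabla F_S(\bar w)\|_\infty$ to $\|\nabla F(\bar w)\|_\infty$ via Hoeffding concentration, and tune $\epsilon$ so that the combined deviation sits below $\bar w_{\min}$, exactly as the paper does (your accounting of how the constant $3$ splits into $2$ from concentration plus $1$ from the optimization remainder matches the paper's choice $\epsilon = \frac{kG^2\log(p/\delta)}{2\mu_{2k}n}$). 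Your explicit handling of the step $F_S(\bar w_{S,k}) \le F_S(\bar w)$ and of the $3k$-versus-$2k$ index reconciliation is, if anything, slightly more careful than the paper's own write-up.
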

\begin{proof}
Let us consider a fixed $\bar w$. Since the $G$-Lipschitz condition implies $\|\nabla \ell(\bar w;\cdot)\|\le G$, from the Hoeffding concentration bound we know that with probability at least $1-\delta$ over $S$,
\[
\|\nabla F_S(\bar w) - \nabla F(\bar w)\| \le G \sqrt{\frac{\log(p/\delta)}{2n}}.
\]
Then with probability at least $1-\delta$,
\begin{equation}\label{inequat:proof_thrm_fast_key1}
\begin{aligned}
&\|\nabla F_S(\bar w)\|_\infty \le \|\nabla F(\bar w)\|_\infty + \|\nabla F_S(\bar w) - \nabla F(\bar w)\|_\infty \\
\le& \|\nabla F(\bar w)\|_\infty + \|\nabla F_S(\bar w) - \nabla F(\bar w)\|\le \|\nabla F(\bar w)\|_\infty + G \sqrt{\frac{\log(p/\delta)}{2n}}.
\end{aligned}
\end{equation}
Since with probability at least $1- \delta'_n$ the empirical risk $F_S$ is $\mu_{2k}$-strongly convex, the bound in Lemma~\ref{lemma:convergence_iht} implies that the following holds for sufficiently large $T\ge\mathcal{O}\left(\frac{L_{2k}}{\mu_{2k}}\log \left(\frac{n\mu_{2k}}{kG\log(n)\log(p/k)}\right)\right)$ with probability at least $1- \delta'_n$:
\[
F_S(\tilde w^{(T)}_{S,k}) \le F_S(\bar w) + \frac{G^2 k\log(p/\delta)}{2\mu_{2k}n}.
\]
Invoking Lemma~\ref{lemma:estimation_error} to the above with $w = \tilde w^{(T)}_{S,k}, w'=\bar w$ and $\epsilon = \frac{kG^2\log(1/\delta)}{2\mu_{2k}n}$ yields that with probability at least $1- \delta'_n$,
\[
\|\tilde w^{(T)}_{S,k} - \bar w\| \le \frac{2\sqrt{2k}\|\nabla F_S(\bar w)\|_\infty}{\mu_{2k}}+ \sqrt{\frac{2\epsilon}{\mu_{2k}}} = \frac{2\sqrt{2k}\|\nabla F_S(\bar w)\|_\infty}{\mu_{2k}}+ \frac{G}{\mu_{2k}}\sqrt{\frac{k\log(p/\delta)}{n}}.
\]
Using~\eqref{inequat:proof_thrm_fast_key1} and union probability argument we obtain that with probability at least $1-\delta-\delta'_n$,
\[
\|\tilde w^{(T)}_{S,k} - \bar w\| \le \frac{2\sqrt{2k}\|\nabla F(\bar w)\|_\infty}{\mu_{2k}}+ \frac{3G}{\mu_{2k}}\sqrt{\frac{k\log(p/\delta)}{n}}.
\]
Consequently from the condition on $\bar w_{\min}$ we must have $\supp(\tilde w^{(T)}_{S,k}) \supseteq \supp(\bar w)$ holds with probability at least $1-\delta-\delta'_n$.
\end{proof}

\newpage

We are now ready to prove the main result of Theorem~\ref{thrm:generalizaion_fast_rate}.
\begin{proof}[Proof of Theorem~\ref{thrm:generalizaion_fast_rate}]
In what follows, we denote $\tilde J=\supp(\tilde w^{(T)}_{S,k})$ and $w^*_{\tilde J} = \argmin_{w\in \mathcal{W}, \supp(w)\subseteq \tilde J} F(w)$. Let us define the following three events associated with the sample set $S$:
\[
\begin{aligned}
&\mathcal{E}_1:\left\{F(\tilde w^{(T)}_{S,k})  - F(\bar w) \le \mathcal{O}\left(\frac{G^2k(\log^{3}(\rho n)+ \log(ep/k)) + \log(1/\delta)}{\rho n} \right) \right\}, \\
&\mathcal{E}_2:\left\{F(\tilde w^{(T)}_{S,k})  - F(w^*_{\tilde J})\le \mathcal{O}\left(\frac{G^2k(\log^{3}(\rho n)+ \log(ep/k)) + \log(1/\delta)}{\rho n} \right) \right\}, \\
&\mathcal{E}_3:=\left\{\supp(\bar w) \subseteq \tilde J \right\}.
\end{aligned}
\]
We claim that $\mathcal{E}_1 \cap \mathcal{E}_3 \supseteq \mathcal{E}_2 \cap \mathcal{E}_3$. Indeed, for any $S\in \mathcal{E}_2 \cap \mathcal{E}_3$, we have
\[
\begin{aligned}
&\supp(\bar w) \subseteq \tilde J \\
\Rightarrow& F(\tilde w^{(T)}_{S,k})  - F(\bar w) \le F(\tilde w^{(T)}_{S,k})  - F(w^*_{\tilde J})\le \mathcal{O}\left(\frac{G^2k(\log^{3}(\rho n)+ \log(ep/k)) + \log(1/\delta)}{\rho n} \right),
\end{aligned}
\]
which implies $S\in \mathcal{E}_1$ and thus $S\in \mathcal{E}_1 \cap \mathcal{E}_3$.

Given the condition on $\bar w$ and $\delta'_n \le \frac{\delta}{4}$, it follows from Lemma~\ref{lemma:support_recovery_bar_w} that $\supp(\bar w) \subseteq \tilde J $ holds with probability at least $1-\frac{\delta}{2}$, i.e.,
\[
\mathbb{P}\left(\mathcal{E}_3 \right) \ge 1 - \frac{\delta}{2}.
\]
In the meanwhile, noting $\tilde w^{(T)}_{S,k}=w_{S\mid \tilde J}$ and invoking Lemma~\ref{lemma:fast_key_1} yields
\[
\mathbb{P}\left(\mathcal{E}_2 \right) \ge 1 - \frac{\delta}{2}.
\]
Combining the above leads to
\[
\mathbb{P}(\mathcal{E}_1) \ge\mathbb{P}(\mathcal{E}_1 \cap \mathcal{E}_3) \ge \mathbb{P}(\mathcal{E}_2 \cap \mathcal{E}_3) \ge 1 - \mathbb{P}(\overline{\mathcal{E}}_2) - \mathbb{P}(\overline{\mathcal{E}}_3) \ge 1 - \delta.
\]
This proves the desired bound (keep in mind the monotonicity of restricted smoothness and strong convexity).
\end{proof}

\subsection{Proof of Theorem~\ref{thrm:generalization_barw_iht}}
\label{apdsect:proof_white_box}

We need the following lemma which can be derived based on the concentration bound of sub-Gaussian random variables.

\begin{lemma}\label{lemma:sub_Gaussian_bounds}
Under Assumption~\ref{assump:gradient_sub_gaussian}, for any $\delta\in(0,1)$ it holds with probability at least $1-\delta$ that
\[
\|\nabla F_S(\bar w)\|_\infty \le \sigma\sqrt{\frac{2\log(p/\delta)}{n}}.
\]
\end{lemma}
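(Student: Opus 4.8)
The plan is to exploit the i.i.d. sum structure of each coordinate of the empirical gradient, then conclude by a Chernoff-type tail bound followed by a union bound over the $p$ coordinates. Fix a coordinate $j \in \{1,\dots,p\}$ and observe that, by linearity of the gradient and the definition of $F_S$, we have $\nabla_j F_S(\bar w) = \frac{1}{n}\sum_{i=1}^n \nabla_j \ell(\bar w;\xi_i)$, which is an average of $n$ independent copies of the zero-mean $\sigma^2$-sub-Gaussian random variable $\nabla_j \ell(\bar w;\xi)$ supplied by Assumption~\ref{assump:gradient_sub_gaussian}.

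First I would establish that this average is itself sub-Gaussian with the reduced parameter $\sigma^2/n$. Using independence of the samples together with the moment generating function bound from Assumption~\ref{assump:gradient_sub_gaussian}, for any $\tau \in \mathbb{R}$,
\[
\mathbb{E}_S\!\left[\exp\{\tau\, \nabla_j F_S(\bar w)\}\right] = \prod_{i=1}^n \mathbb{E}_{\xi_i}\!\left[\exp\!\left\{\tfrac{\tau}{n}\nabla_j \ell(\bar w;\xi_i)\right\}\right] \le \exp\!\left\{\frac{\sigma^2 \tau^2}{2n}\right\}.
\]

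Next, a standard Chernoff argument --- multiplying by $e^{-\tau t}$, applying Markov's inequality, and optimizing over $\tau > 0$ at $\tau = nt/\sigma^2$ --- yields the one-sided tail bound $\mathbb{P}(\nabla_j F_S(\bar w) \ge t) \le \exp\{-nt^2/(2\sigma^2)\}$, and the same estimate holds for the lower tail by the symmetry of the sub-Gaussian condition. Combining the two tails and applying a union bound across the $p$ coordinates gives
\[
\mathbb{P}\!\left(\|\nabla F_S(\bar w)\|_\infty \ge t\right) \le 2p\,\exp\!\left\{-\frac{nt^2}{2\sigma^2}\right\}.
\]

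Finally I would invert this estimate: choosing $t = \sigma\sqrt{2\log(p/\delta)/n}$ makes the right-hand side equal to $\delta$ up to the harmless factor of two inside the logarithm, which can be absorbed into the constant since this lemma only feeds the big-$\mathcal{O}$ bounds of Theorem~\ref{thrm:generalization_barw_iht}. This yields the claimed high-probability bound. There is no genuine conceptual obstacle here; the only point requiring care is the constant bookkeeping --- tracking the two-sided tail and the union factor so that the exponent matches the stated $\sqrt{2\log(p/\delta)/n}$ scaling --- together with the elementary but essential observation that averaging i.i.d. sub-Gaussian terms contracts the variance proxy from $\sigma^2$ to $\sigma^2/n$, which is exactly what produces the $n^{-1/2}$ rate.
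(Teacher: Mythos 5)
Your proof is correct and follows essentially the same route as the paper's: a per-coordinate sub-Gaussian tail bound (you derive the Chernoff bound from the MGF, where the paper simply invokes Hoeffding's inequality), followed by a union bound over the $p$ coordinates and inversion of the tail. The factor-of-two bookkeeping you flag is real but harmless---the paper itself silently drops the same factor in its two-sided Hoeffding step---and can be absorbed by replacing $\log(p/\delta)$ with $\log(2p/\delta)$, which affects none of the downstream big-$\mathcal{O}$ bounds.
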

\begin{proof}
Consider a fixed index $j \in [p]$. Since $\nabla_j \ell(\bar w;\xi)$ are assumed to be $\sigma^2$-sub-Gaussian and $\nabla F(\bar w) = \mathbb{E}_{\xi}\left[\nabla \ell(\bar w;\xi)\right] = 0$, we must have $\nabla_j \ell(\bar w;\xi)$ are zero-mean $\sigma^2$-sub-Gaussian. Thus by Hoeffding inequality we have that for any $\varepsilon >0 $,
\[
\mathbb{P}\left(\left|\nabla_j F_S(\bar w)\right| > \varepsilon \right) = \mathbb{P}\left(\left|\frac{1}{n}\sum_{\xi_i\in S}\nabla_j \ell(\bar w; \xi_i)\right| > \varepsilon \right)\le \exp\left\{-\frac{n\varepsilon^2}{2\sigma^2}\right\}.
\]
By the union  bound we have
\[
\mathbb{P}(\|\nabla F_S(\bar w)\|_\infty > \varepsilon) \le
p\exp\left\{-\frac{n\varepsilon^2}{2\sigma^2}\right\} \nonumber.
\]
By choosing $\varepsilon = \sqrt{\frac{2\sigma^2\log(p/\delta)}{n}}$ in the above inequality we obtain that
with probability at least $1-\delta$,
\[
\|\nabla F_S(\bar w)\|_\infty \le \sqrt{\frac{2\sigma^2\log(p/\delta)}{n}}.
\]
This completes the proof.
\end{proof}

We are now ready to prove the main result of Theorem~\ref{thrm:generalization_barw_iht}.
\begin{proof}[Proof of Theorem~\ref{thrm:generalization_barw_iht}]
Since by assumption $F_S(w)$ is $L_{4k}$-smooth and $\mu_{4k}$-strongly convex with probability at least $1- \delta'_n$, Lemma~\ref{lemma:convergence_iht} shows that $F_S(w^{(T)}_{S,k}) - F_S(\bar w) \le \epsilon$ with probability at least $1- \delta'_n$ provided that $t\ge\mathcal{O}\left(\frac{L_{4k}}{\mu_{4k}}\log\left(\frac{1}{\epsilon}\right)\right)$. Then by invoking Lemma~\ref{lemma:estimation_error} we obtain that with probability at least $1- \delta'_n$,
\[
\|w^{(T)}_{S,k} - \bar w\|^2 \le \frac{16k\|\nabla F_S(\bar w)\|^2_\infty}{\mu_{2k}^2} + \frac{4\epsilon}{\mu_{2k}}\le \frac{16k\|\nabla F_S(\bar w)\|^2_\infty}{\mu_{4k}^2} + \frac{4\epsilon}{\mu_{4k}}.
\]
From Lemma~\ref{lemma:sub_Gaussian_bounds} we know that with probability at least $1-\delta$,
\[
\|\nabla F_S(\bar w)\|_\infty \le \sigma\sqrt{\frac{2\log(p/\delta)}{n}}.
\]
Then by union probability the following holds with probability at least $1-\delta - \delta'_n$:
\[
\|w^{(T)}_{S,k} - \bar w\|^2 \le \frac{32}{\mu_{4k}^2}\left(\frac{k\sigma^2\log (p/\delta)}{n}\right) + \frac{4\epsilon}{\mu_{4k}}.
\]
Based on the Lipschitz smoothness of $F$ we can show
\[
\begin{aligned}
F(w^{(T)}_{S,k}) - F(\bar w) \le \frac{L}{2}\|w^{(T)}_{S,k} - \bar w\|^2 \le \frac{16L}{\mu_{4k}^2}\left(\frac{k\sigma^2\log (p/\delta)}{n}\right) + \frac{2L\epsilon}{\mu_{4k}}.
\end{aligned}
\]
Setting $\epsilon = \frac{1}{\mu_{4k}}\left(\frac{k\sigma^2\log (p/\delta)}{n}\right)$ yields the desired high probability bound of sparse excess risk.
\end{proof}

\subsection{Proofs of Corollary~\ref{corol:generalization_barw_linearreg} and Corollary~\ref{corol:generalization_barw_logisticreg_fast}}
\label{apdsect:proof_corol_generalization_barw_linearreg}

We first prove Corollary~\ref{corol:generalization_barw_linearreg} which is an application of Theorem~\ref{thrm:generalization_barw_iht} to sparse linear regression models.
\begin{proof}[Proof of Corollary~\ref{corol:generalization_barw_linearreg}]
Let $\xi=\{x,\varepsilon\}$ in which $x$ is zero-mean sub-Gaussian with covariance matrix $\Sigma \succ 0$ and $\varepsilon$ is zero-mean $\sigma^2$-sub-Gaussian. Since $x$ and $\varepsilon$ are independent, it can be directly verified that $\nabla F(\bar w)=\mathbb{E}_{\xi}\left[\nabla \ell(\bar w;\xi)\right]=\mathbb{E}_{\varepsilon,x}\left[-\varepsilon x\right]=0$. Given that $\Sigma_{jj} \le 1$, it can be shown that $\nabla_j \ell(\bar w;\xi_i) = -\varepsilon_i [x_i]_j$ are zero-mean $\sigma^2$-sub-Gaussian variables, which indicates that Assumption~\ref{assump:gradient_sub_gaussian} holds. Clearly, $F$ is $L$-smooth with $L=\lambda_{\max}(\Sigma)$.

Based on Lemma~\ref{lemma:strong_convexity_subgaussian} and the fact $\|w\|_1\le \sqrt{k}\|w\|$ when $\|w\|_0\le k$, it holds with probability at least $1- \exp\{-c_0 n\}$ that $F_S(w)$ is $\mu_{4k}$-strongly convex with
\[
\mu_{4k}= \frac{1}{2}\lambda_{\min}(\Sigma) - \frac{k c_1\log (p)}{n}.
\]
Provided that $n \ge \frac{4kc_1 \log (p)}{\lambda_{\min}(\Sigma)}$, we have $\mu_{4k}\ge \frac{1}{4}\lambda_{\min}(\Sigma)$ holds with probability at least $1- \exp\{-c_0 n\}$. Similarly, we can show that $F_S(w)$ is $L_{4k}$-smooth with $L_{4k}=\mathcal{O}(\lambda_{\max}(\Sigma))$. Provided that
\[
T\ge\mathcal{O}\left(\frac{\lambda_{\max}(\Sigma)}{\lambda_{\min}(\Sigma)}\log \left(\frac{n \lambda_{\min}(\Sigma)}{k\sigma^2\log(p/\delta)}\right)\right)
\]
is sufficiently large, by applying the high probability bound in Theorem~\ref{thrm:generalization_barw_iht} we obtain that with probability at least $1-\delta-\exp\{-c_0 n\}$,
\[
F(w^{(T)}_{S,k}) - F(\bar w) \le \mathcal{O}\left(\frac{\lambda_{\max}(\Sigma)}{\lambda^2_{\min}(\Sigma)}\left(\frac{k\sigma^2\log (p/\delta)}{n}\right)\right).
\]
This proves the desired bounds.
\end{proof}

\vspace{0.2in}

Next we prove Corollary~\ref{corol:generalization_barw_logisticreg_fast} as an application of Theorem~\ref{thrm:generalization_barw_iht} to sparse logistic regression models.

\begin{proof}[Proof of Corollary~\ref{corol:generalization_barw_logisticreg_fast}]
Let $\xi=\{x,y\}$ in which $x$ is zero-mean sub-Gaussian with covariance matrix $\Sigma \succ 0$ and $y\in\{-1,1\}$ is generated by $\mathbb{P}(y|x; \bar w) = \frac{\exp (2y\bar w^\top x)}{1+\exp (2y\bar w^\top x)}$. The logistic loss function at $\xi_i$ is given by $\ell(w;\xi_i) = \log(1+\exp(-2y_i w^\top x_i))$. We first show that $\nabla F(\bar w)=\mathbb{E}_{\xi}\left[\nabla \ell(\bar w;\xi)\right]=0$. Indeed,
\[
\begin{aligned}
&\mathbb{E}_{\xi}\left[\nabla \ell(\bar w;\xi)\right] \\
=& \mathbb{E}_{x,y}\left[ \nabla \log(1+\exp(-2y \bar w^\top x))\right] = \mathbb{E}_{x} \left[\mathbb{E}_{y\mid x}\left[\nabla \log(1+\exp(-2y \bar w^\top x))\mid x\right]\right]\\
=& \mathbb{E}_{x} \left[\mathbb{P}(y=1\mid x)\nabla \log(1+\exp(-2 \bar w^\top x)) + \mathbb{P}(y=-1\mid x)\nabla \log(1+\exp(2 \bar w^\top x)) \right]\\
=& \mathbb{E}_{x} \left[\frac{\exp (2\bar w^\top x)}{1+\exp (2\bar w^\top x)}\frac{-2x \exp(-2 \bar w^\top x)}{1+\exp(-2 \bar w^\top x)} + \frac{1}{1+\exp (2\bar w^\top x)}\frac{2x \exp(2 \bar w^\top x)}{1+\exp(2 \bar w^\top x)} \right]=0.
\end{aligned}
\]
Next we show that $\nabla_j \ell(\bar w;\xi)=\frac{-2 y [x]_j\exp(-2y \bar w^\top x)}{1+\exp(-2y \bar w^\top x)}$ is a zero-mean sub-Gaussian random variable. Clearly, $\mathbb{E}[\nabla_j \ell(\bar w;\xi)]=0$. Since $y\in\{-1,1\}$ and $[x]_j$ is $\frac{\sigma^2}{32}$-sub-Gaussian, we can show the following
\[
\mathbb{P}\left( |\nabla_j \ell(\bar w;\xi)| \ge t\right) = \mathbb{P}\left( \frac{2 |[x]_j|\exp(-2y \bar w^\top x)}{1+\exp(-2y \bar w^\top x)} \ge t\right) \le \mathbb{P}\left(|[x]_j| \ge \frac{t}{2}\right) \le  2\exp\left(-\frac{4t^2}{\sigma^2}\right).
\]
Then based on the result~\cite[Lemma 1.5]{rigollet201518}  we know that for any $\lambda>0$,
\[
\mathbb{E}_{\xi}\left[\exp(\lambda\nabla_j \ell(\bar w;\xi))\right]\le \exp\left(\frac{\lambda^2\sigma^2}{2}\right),
\]
which shows that $\nabla_j \ell(\bar w;\xi)$ is $\sigma^2$-sub-Gaussian. This verifies the validness of Assumption~\ref{assump:gradient_sub_gaussian}.

By invoking Lemma~\ref{lemma:strong_convexity_subgaussian} we obtain that if $n \ge \frac{4\sigma^2kc_1 \log (p)}{\lambda_{\min}(\Sigma)}$, then it holds with probability at least $1- \exp\{-c_0 n\}$ that $F_S(w)$ is $\mu_{4k}$-strongly convex with $\mu_{4k}\ge \frac{\lambda_{\min}(\Sigma)}{\exp(4R)}$. It is standard to verify that $F$ and $F_S$ are $\mathcal{O}(1)$-smooth almost surely. Therefore, provided that
\[
T\ge\mathcal{O}\left(\frac{\exp(R)}{\lambda_{\min}(\Sigma)}\log \left(\frac{n \lambda_{\min}(\Sigma)}{k\exp(R)\sigma^2\log(p/\delta)}\right)\right)
\]
is sufficiently large, by applying the bound in Theorem~\ref{thrm:generalization_barw_iht} we obtain that the following bound holds with probability at least $1-\delta-\exp\{-c_0 n\}$ :
\[
F(w^{(T)}_{S,k}) - F(\bar w) \le \mathcal{O}\left(\frac{\exp(R)}{\lambda^2_{\min}(\Sigma)}\left(\frac{k \sigma^2 \log (p/\delta)}{n}\right)\right).
\]
This concludes the proof.
\end{proof}

\end{document}